\documentclass[10pt]{article}

\usepackage{amsmath}
\usepackage{amsthm}
\usepackage{amssymb}
\usepackage{makecell}
\usepackage{algorithm}
\usepackage{algorithmic}
\usepackage{graphicx} 
\usepackage{grffile}
\usepackage{subfigure}
\usepackage{xcolor}
\usepackage{url}
\usepackage{enumitem}
\usepackage{booktabs}
\newtheorem{assumption}{Assumption}

\newcommand{\argmin}{\operatornamewithlimits{arg\,min}}
\newcommand{\xx}{\boldsymbol x}
\newcommand{\yy}{\boldsymbol y}
\newcommand{\zz}{\boldsymbol z}

\newcommand{\rhox}{{\rho_{X}}}
\newcommand{\Ltwo}{{L^2_\rhox}}

\newcommand{\Nystrom}[1]{{Nystr\"om}}


\newcommand{\R}{\mathbb R}
\newcommand{\CC}{\R}

\newcommand{\HH}{\mathcal H}

\newcommand{\la}{\lambda}

\newcommand{\X}{{X}}

\renewcommand{\L}{{L}}

\newcommand{\C}{\widehat{\C}}

\newcommand{\yn}{\widehat{y}_n}
\newcommand{\Ll}{L_\la}

\newcommand{\bpr}{\begin{proof}}
		\newcommand{\epr}{\end{proof}}
\newcommand{\be}{\begin{equation}}
		\newcommand{\ee}{\end{equation}}

\providecommand{\nor}[1]{\lVert{#1}\rVert}

\providecommand{\tr}{\operatorname{Tr}}


\renewcommand{\L}{{L}}

\newcommand{\tL}{L_M}
\newcommand{\tC}{C_M}
\newcommand{\tCn}{\widehat{C}_M}

\newcommand{\tCnl}{\widehat{C}_{M,\la}}
\newcommand{\tS}{S_M}
\newcommand{\SH}{S_\mathcal{H}}
\newcommand{\HSH}{\widehat{S}_\mathcal{H}}
\newcommand{\SM}{S_M}
\newcommand{\tSn}{\widehat{S}_M}

\renewcommand{\L}{{L}}


\newcommand{\frho}{f_\rho}


\newtheorem{theorem}{Theorem}
\newtheorem{lemma}{Lemma}
\newtheorem{proposition}{Proposition} 
\newtheorem{remark}{Remark}
\newtheorem{corollary}{Corollary}
\newtheorem{definition}{Definition}

\def\ww{\boldsymbol w}
\def\xx{\boldsymbol x}
\def\yy{\boldsymbol y}
\def\zz{\boldsymbol z}

\providecommand{\nor}[1]{\lVert{#1}\rVert}

\providecommand{\tr}{\operatorname{Tr}}

\title{Towards Sharp Analysis for Distributed Learning with Random Features}

\author{
Jian~Li \\ Institute of Information Engineering, Chinese Academy of Sciences \\
\and
Yong~Liu$^{\dagger}$ \thanks{Yong Liu is also the corresponding author.} \\
Gaoling School of Artificial Intelligence, Renmin University of China \\
\and
Weiping~Wang \\
Institute of Information Engineering, Chinese Academy of Sciences
}

\date{\today}
\begin{document}
\maketitle

\begin{abstract}
	In recent studies, the generalization properties for distributed learning and random features assumed the existence of the target concept over the hypothesis space. However, this strict condition is not applicable to the more common non-attainable case. In this paper, using refined proof techniques, we first extend the optimal rates for distributed learning with random features to the non-attainable case. Then, we reduce the number of required random features via data-dependent generating strategy, and improve the allowed number of partitions with additional unlabeled data. Theoretical analysis shows these techniques remarkably reduce computational cost while preserving the optimal generalization accuracy under standard assumptions. Finally, we conduct several experiments on both simulated and real-world datasets, and the empirical results validate our theoretical findings.
\end{abstract}


\section{Introduction}
A fundamental problem in machine learning is to achieve tradeoffs between statistical properties and computational costs \cite{bottou2008tradeoffs,li2018multi}, while this challenge is more severe in kernel methods.
Despite the excellent theoretical guarantees, kernel methods do not scale well in large-scale settings because of high time and memory complexities,
typically at least quadratic in the number of examples.
To break the scalability bottlenecks, researchers developed a wide range of practical algorithms, including distributed learning, which produces a global model after training disjoint subset on individual machines with necessary communications \cite{zhang2015divide,lin2017distributed}, Nystr\"om approximation \cite{williams2001using,rudi2015less,li2019approximate} and random Fourier features \cite{rahimi2007random,rudi2017generalization} to alleviate memory bottleneck, as well as stochastic methods \cite{lin2020optimal} to improve the training efficiency.

From the theoretical perspective, many researchers have studied the statistical properties of those large-scale approaches together with kernel ridge regression (KRR) \cite{rudi2015less,lin2016optimal,lin2017distributed}.
Using integral operator techniques \cite{smale2007learning} and the effective dimension to control the capability of RKHS \cite{caponnetto2007optimal}, the generalization bounds have achieved the optimal learning rates.
Recent statistical learning studies on KRR together with large-scale approaches demonstrate that these approaches can not only obtain great computational gains but still remain the optimal theoretical properties, such as KRR together with divide-and-conquer \cite{guo2017learning,mucke2018parallelizing}, with random projections including Nystr\"om approximation \cite{rudi2015less} and random features \cite{rudi2017generalization,carratino2018learning,li2020automated,li2021sharp}.
Since the communication cost is high to combine local kernel estimators in RKHS, it's more practical to combine the linear estimator in the feature space, e.g. federated learning \cite{mcmahan2017communication}.
Therefore, the generalization analysis for the combination of distributed learning and random features is rather important in distributed learning.

The existing works on DKRR \cite{guo2017learning,lin2017distributed,mucke2018parallelizing} and random features \cite{rudi2017generalization,li2019towards,li2021sharp} mainly focus on the attainable case that the true regression belongs to the hypothesis space, ignoring the non-attainable case where the true regression is out of the hypothesis space. 
Since it's hard to select the suitable kernel via kernel selection to guarantee that the target function belongs to the kernel space, the non-attainable case is more common in practice. Therefore, the statistical guarantees for the non-attainable are of practical and theoretical interest in the context of the statistical learning theory. 
The optimal rates for DKRR have been extended to a part of the non-attainable case via sharp analysis for the distributed error \cite{lin2020optimal} and multiple communications \cite{lin2020distributed,liu2021effective}, but these techniques are hard to improve the results for random features.
Meanwhile, some recent studies extended the capacity-independent optimality to the non-attainable, including distributed learning \cite{sun2020optimal}, random features \cite{sun2018but} and Nystr\"om approximation \cite{kriukova2017nystrom}, but the capacity-independent results are suboptimal when the capacity of RKHS is small.
\textit{The capacity-optimality for the combination of distributed learning and random features to the non-attainable case is still an open problem.}

In this paper, we aim at extending the capacity-dependent optimal guarantees to the non-attainable case and improve the computational efficiency with more partitions and fewer random features.
Firstly, using the refined estimation of operators' similarity, we refine the optimal generalization error bound that allows much more partitions and pertains to a part of the non-attainable case.
Then, generating random features in a data-dependent manner, we relax the restriction on the dimension of random features, and thus fewer random features are sufficient to reach the optimal rates.
By using additional unlabeled data to reduce label-independent error terms, we further enlarge the number of partitions and improve the applicable scope in the non-attainable case.
Finally, we validate our theoretical findings with extensive experiments.
Note that, we leave the full proofs in the appendix.

\subsection{Our Contributions}
We highlight our contributions as follows:
\begin{itemize}[leftmargin=*]
    \item \textbf{On the algorithmic front: much higher computational efficiency.}
    This work presents the currently maximum number of partitions and the minimal dimension of random features, extremely improving the computational efficiency.
    \begin{itemize}
        \item \textbf{More partitions.} 
        To achieve the optimal learning rate, the traditional distributed KRR methods \cite{lin2017distributed,guo2017learning} impose a strict constraint on the number of partitions $m \lesssim N^\frac{2r-1}{2r+\gamma}$, which heavily limits the computational efficiency.
        In this paper, using a novel estimation of the key quantity, we first relax the restriction to $m \lesssim N^\frac{2r+\gamma-1}{2r+\gamma}$.
        Then, introducing a few additional unlabeled examples, we improve the number of partitions to $m \lesssim N^\frac{2r+2\gamma-1}{2r+\gamma}$ for the first time.
        \item \textbf{Fewer random features.} 
        By generating random features in a data-dependent manner rather than in a data-independent manner, we reduce the requirement on the number of random features from $M \gtrsim N^\frac{(2r-1)\gamma+1}{2r+\gamma} \quad \forall r \in [1/2, 1]$ to $M \gtrsim N^\frac{2r + \gamma - 1}{2r+\gamma} \vee N^\frac{\gamma}{2r+\gamma} \quad\forall r \in (0, 1]$, where $M$ is the number of random features and $\vee$ indicates the bigger one.
    \end{itemize}
    \item \textbf{On the theoretical front: covering the non-attainable case.}
    The conventional optimal learning properties for KRR \cite{caponnetto2007optimal,rudi2017generalization,guo2017learning} only pertain to the attainable case $r \in [1, 1/2]$, assuming the true regression belongs to the hypothesis space $f_\rho \in \mathcal{H}$ where the problems can not be too difficult.
    However, the condition $f_\rho \in \mathcal{H}$ is too ideal and the non-attainable $r \in (0, 1/2)$ assuming $f_\rho \notin \mathcal{H}$ deserve more attention.
    In this paper, we first restate the classic results in the attainable $r \in [1/2, 1]$. 
    Then, by relaxing the restriction on the number of partitions, we extend the optimal theoretical guarantees to the non-attainable case with the constraints $2r+\gamma \geq 1$ and $2r + 2\gamma \geq 1$.
    Note that we prove KRR with random features applies to all non-attainable cases $r \in (0, 1/2)$.
    \item \textbf{Extensive experimental validation.}
    To validate our theoretical findings, we conduct extensive experiments on simulated data and real-world data. 
    We first construct simulated experiments under different difficulties to validate the learning rate and training time. Then, we perform comparison on a small real-world dataset to verify the effectiveness of data-dependence random features (with a novel approximate leverage score function) and additional unlabeled examples.
    Finally, we compare the proposed \texttt{DKRR-RF} with related work in terms of the performance on three real-world datasets.
    \item \textbf{Technical challenges.}
    \begin{itemize}
        \item \textbf{More partitions with additional unlabeled examples.}
        In the error decomposition, only sample variance is label-dependent. At the same time, other terms are label-independent, and thus we employ additional unlabeled examples to reduce the estimation of label-independent error terms.
        We further improve the applicable scope in the non-attainable case to $m \lesssim N^\frac{2r+2\gamma-1}{2r+\gamma}$. 
        \item \textbf{Random features error in all non-attainable cases.} Using an appropriate decomposition on the operatorial level for random features error, we prove KRR with random features pertains to both attainable and non-attainable case $r \in (0, 1]$.
    \end{itemize}
\end{itemize}

Overall, by overcoming several technical hurdles, we present the optimal theoretical guarantees for the combination of DKRR and RF. With more partitions and fewer random features, the theoretical results not only obtain significant computational gains but also preserve the optimal learning properties to both the attainable and non-attainable case $r \in (0, 1]$. 
Indeed, KRR \cite{caponnetto2007optimal}, DKRR \cite{guo2017learning}, and KRR-RF \cite{rudi2017generalization} are special cases of this paper.
Thus, the techniques presented here pave the way for studying the statistical guarantees of other types kernel approaches (even neural networks) that can apply to the non-attainable case.
\section{Distributed Learning with Random Feature}
In a standard framework of supervised learning,
there is a probability space $\mathcal{X} \times \mathcal{Y}$ with a fixed but unknown distribution $\rho$,
where $\mathcal{X}=\mathbb{R}^d$ is the input space and $\mathcal{Y}=\mathbb{R}$ is the output space.
The training set $D=\{(\xx_i, y_i)\}_{i=1}^N$ is sampled i.i.d. from $\mathcal{X} \times \mathcal{Y}$ with respect to $\rho$.
The primary objective is to fit the target regression $f_\rho$ on $\mathcal{X} \times \mathcal{Y}$.
The Reproducing Kernel Hilbert Space (RKHS) $\mathcal{H}$ induced by a Mercer kernel $K$ is defined as the completion of the linear span of $\{K(\xx, \cdot), \xx \in \mathcal{X}\}$ with respect to the inner product $\langle K(\xx, \cdot), K(\xx', \cdot) \rangle_{\mathcal{H}} = K(\xx, \xx')$.
In the view of feature mappings, an underlying nonlinear feature mapping $\phi:\mathcal{X} \to \mathcal{H}$ associated with the kernel $K$ is $\phi(\xx) := K(\xx, \cdot)$, so it holds $f(\xx)=\langle f, \phi(\xx)\rangle_{\mathcal{H}}$.

\subsection{Kernel Ridge Regression (KRR)}

With an RKHS norm term, kernel ridge regression (KRR) is one of the popular empirical approaches to conducting a nonparametric regression.
KRR can be stated as
\begin{align}
	\label{eq.ERM}
	\widehat{f}_\lambda := \argmin_{f \in \mathcal{H}} \left\{\frac{1}{N}\sum_{i=1}^N (f(\xx_i)-y_i)^2 + \lambda \|f\|_{\mathcal{H}}^2\right\}.
\end{align}
Using the representation theorem,
the nonlinear regression problem (\ref{eq.ERM}) admits a closed form solution $\widehat{f}_\lambda(\xx) = \sum_{i=1}^N \widehat{\alpha_i}K(\xx_i, \xx)$ with
\begin{align}
	\label{eq.krr-solution}
	\widehat{\alpha}=(\mathbf{K}_N+\lambda N \mathbf{I})^{-1}\mathbf{y}_N,
\end{align}
where $\lambda > 0, \mathbf{y}_N=[y_1, \cdots, y_N]^T$ and $\mathbf{K}_N$ is the $N \times N$ kernel matrix with $\mathbf{K}_N(i, j)=K(\xx_i, \xx_j)$.
Although KRR characterizes optimal statistical properties \cite{smale2007learning,caponnetto2007optimal},
it is unfeasible for large-scale settings because of $\mathcal{O}(N^2)$ memory to store kernel matrix and $\mathcal{O}(N^3)$ time to solve the linear system (\ref{eq.krr-solution}).

\subsection{Distributed KRR with Random Features (DKRR-RF)}
Assume that the kernel $K$ have an integral representation
\begin{equation}
	\begin{aligned}
		\label{eq.kernel_random_features}
		K(\xx, \xx') = \int_{\Omega} \psi(\xx, \omega) \psi(\xx',\omega) p (\omega) d \omega, \, \forall \xx, \xx' \in \mathcal{X},
	\end{aligned}
\end{equation}
where $(\Omega, \pi)$ is a probability space and $\psi: \mathcal{X} \times \Omega \to \mathbb{R}$.
We define analogous operators for the constructed kernel $K_M(\xx, \xx') = \phi_M(\xx)^\top \phi_M(\xx')$ to approximate the primal kernel $K(\xx, \xx')$ in \eqref{eq.kernel_random_features} with its corresponding random features via Monte Carlo sampling
\begin{align}
	\label{eq.random-features}
	\phi_M(\xx) = \frac{1}{\sqrt{M}} \big(\psi(\xx, \omega_1), \cdots, \psi(\xx, \omega_M)\big)^\top,
\end{align}
where $\omega_1, \cdots, \omega_M \in \Omega$ are sampled w.r.t $p(\omega)$.

Let the training set $D$ be randomly partitioned into $m$ disjoint subsets $\{D_j\}_{j=1}^m$ with $|D_1|=\cdots=|D_m|=n$.
The local estimator $\widehat{\ww}_j$ on the subset $D_j$ is defined as
\begin{align}
	\label{eq.local-rf}
	\widehat{\ww}_j = \argmin_{\ww \in \CC^M} \left\{\frac{1}{n}\sum_{i=1}^n (f(\xx_i)-y_i)^2 + \lambda \|f\|^2\right\},
\end{align}
where the estimator is $f(\xx) = \langle \ww, \phi_M(\xx) \rangle$.
It admits a closed-form solution 
\begin{align}
	\label{estimator.emp-dc-rf}
	\widehat{\ww}_j=\big[\Phi_{M}^\top \Phi_{M} + \lambda I\big]^{-1}\Phi_{M}^\top\widehat{y}_n,
\end{align}
where $\lambda > 0$.
Note that for $j$-th subset $D_j$, it holds
$\forall (\xx,y) \in D_j, \Phi_{M} = \frac{1}{\sqrt{n}}[\phi_M(\xx_1), \cdots, \phi_M(\xx_{n})]^\top \in \mathbb{R}^{n \times M}$
and $\widehat{y}_n = \frac{1}{\sqrt{n}}(y_1, \cdots, y_{n})^\top.$
The average of local estimators \eqref{estimator.emp-dc-rf} yields a global estimator
\begin{align}
	\label{estimator.emp-global-rf}
	\widehat{f}_{D,\lambda}^M(\xx) = \frac{1}{m}\sum_{j=1}^m  \widehat{f}_{D_j,\lambda}^M(\xx).
\end{align}

\section{Theoretical Assessment}
In this section,
we present the theoretical analysis on the generalization performance of kernel ridge regression with divide-and-conquer and random features.

The generalization ability of a regression predictor $f:\mathcal{X} \to \mathbb{R}$ is measured in terms of the \textit{expected risk}
\begin{align}
    \label{eq.expected-loss}
    \mathcal{E}(f)=\int_{\mathcal{X} \times \mathcal{Y}} (f(\xx)-y)^2 d \rho(\xx, y).
\end{align}
In this case, the target regression $f_\rho = \argmin_{f} \mathcal{E}(f)$ minimizes the \textit{expected risk} over all measurable functions $f: \mathcal{X} \to \mathbb{R}$.
The generalization ability of a KRR estimator $f \in \Ltwo$ is measured by the \textit{excess risk}, i.e. $\mathcal{E}(f) - \mathcal{E}(f_\rho)$, where $L^2_{\rho_X}=\{f:\mathcal{X}\to\CC ~|~ \|f\|_\rho^2 = \int_X |f(\xx)|^2 d\rho_X < \infty\}$ is the square integral Hilbert space with respect to the marginal distribution $\rho_X$ on the input space $\mathcal{X}$.

\subsection{Assumptions}
We first introduce two standard assumptions, which are also used in statistical learning theory \cite{smale2007learning,caponnetto2007optimal,rudi2017generalization}.

\begin{assumption}[Random features are continuous and bounded]
    \label{asm.rf}
    Assume that $\psi$ is continuous and there is a $\kappa \in [1, \infty)$, such that $|\psi(\xx, \omega)| \leq \kappa, \forall \xx \in \mathcal{X}, \omega \in \Omega$.
\end{assumption}

\begin{assumption}[Moment assumption]
    \label{asm.moment}
    Assume there exists $B > 0$ and $\sigma > 0$, such that for all $p \geq 2$ with $p \in \mathbb{N}$,
    \begin{align}
        \label{eq.moment}
        \int_\mathbb{R} |y|^p d \rho(y|\xx) \leq \frac{1}{2} p ! B^{p-2} \sigma^2.
    \end{align}
\end{assumption}

According to Assumption \ref{asm.rf}, the kernel $K$ is bounded by $K(\xx, \xx) \leq \kappa^2$.
The moment assumption on the output $y$ holds when $y$ is bounded, sub-gaussian or sub-exponential.
Assumptions \ref{asm.rf} and \ref{asm.moment} are standard in the generalization analysis of KRR, always leading to the learning rate $\mathcal{O}(1/\sqrt{N})$ \cite{smale2007learning} in general cases.

\begin{definition}[Integral operators]
    \label{def.integral-operator}
    $\forall~ g \in \Ltwo(X, \rho_X)$, the integral operators $L, L_M$ are defined by the kernel $K$ and the random features $\phi_M$, respectively
    \begin{align*}
        (L g)(\cdot) &= \int_\X K(\cdot,\xx)g(\xx)d\rhox(\xx),\\
        (L_M g)(\cdot) &= \int_\X \langle \phi_M(\cdot), \phi_M(\xx) \rangle g(\xx)d\rhox(\xx).
    \end{align*}
\end{definition}

\begin{definition}[Effective dimension]
    \label{def.effective-dimension}
    The effective dimension of the RKHS $\mathcal{H}$ induce by the kernel $K$ is defined as
    \begin{align*}
        \mathcal{N}(\lambda) &= \operatorname{Tr}\big((L + \lambda I)^{-1} L\big), \quad \lambda > 0, \\
        \mathcal{N}_M(\lambda) &= \operatorname{Tr}\big((L_M + \lambda I)^{-1}L_M\big), \quad \lambda > 0.
    \end{align*}
\end{definition}

The effective dimension $\mathcal{N}(\lambda)$ is used to measure the complexity of RKHS $\mathcal{H}$, and its empirical counterpart is also called degree of freedom \cite{bach2013sharp}.
Similarly, we define the effective dimension $\mathcal{N}_M(\lambda)$ for the random features mapping $\phi_M$ to measure the size of the approximate RKHS  $\mathcal{H}_M$, which is induced by finite dimensional random features $\phi_M: \mathcal{X} \to \mathbb{R}^M$.

\begin{assumption}[Capacity assumption]
    \label{asm.capacity}
    Assume there exists $Q>0$ and $\gamma \in [0, 1]$, such that for any $\lambda > 0$
    \begin{align*}
        \mathcal{N}(\lambda) \leq Q^2\lambda^{-\gamma}.
    \end{align*}
\end{assumption}

\begin{assumption}[Regularity assumption]
    \label{asm.regularity}
    Assume there exists $R > 0$, $r > 0$,  and $g \in \Ltwo$, such that
    \begin{align*}
        f_\rho = L^r g,
    \end{align*}
    where $\frho$ is the target regression, $\|g\|_\rho \leq R$ and the operator $L^r$ denotes the $r$-th power of the integral operator $L:\Ltwo \to \Ltwo$, thus it is also a positive trace class operator.
\end{assumption}

Assumption \ref{asm.capacity} holds when the eigenvalues of the integral operator have a polynomial decay $i^{-1/\gamma}, ~ \forall i > 1$ \cite{rudi2017generalization,li2019towards}.
Thus, faster convergence rates are derived when the eigenvalues decay faster, a.k.a. $\gamma$ approaches $0$,
while $\gamma=1$ corresponds to the capacity-independent case.
Assumption \ref{asm.regularity} (source condition) controls the regularity of the target function $f_\rho$.
The bigger the $r$ is, the stronger regularity of the regression is, and the easier the learning problem is.
Both these two assumptions are widely used in the optimal theory for KRR \cite{caponnetto2007optimal,rudi2017generalization,guo2017learning}.

\subsection{General Results with Fast Rates}
One can prove the optimal generalization guarantees for \texttt{DKRR-RF} by combining the theories in KRR-DC \cite{lin2017distributed} and KRR-RF \cite{rudi2017generalization}.
The attainable case $r \in [1/2, 1]$  requires the existence of $f_\mathcal{H} = \min_{f \in \mathcal{H}} \mathcal{E}(f)$, such that $f_\rho = f_\mathcal{H}$ almost surely \cite{steinwart2008support}, which is widely used in KRR and its variants including distributed KRR and random features based KRR \cite{caponnetto2007optimal,rudi2017generalization,guo2017learning}.

\begin{figure}
    \centering
    \includegraphics[width=\linewidth]{./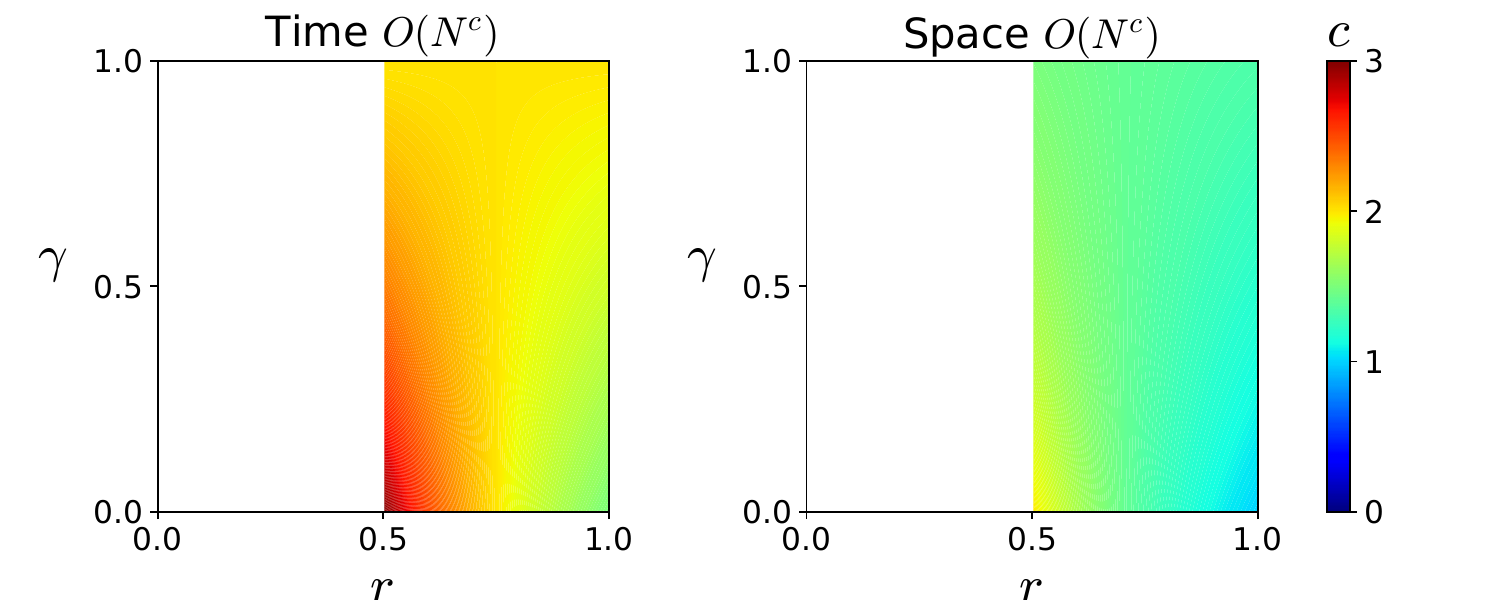}
     \caption{Time complexity and space complexity of Theorem \ref{thm.general-results} in different settings. The color closer to red represents higher complexity. Blank areas represent unfeasible situations.}
    \label{fig.complexities_general_results}
\end{figure}

\begin{theorem}
    \label{thm.general-results}
    Under Assumptions \ref{asm.rf}, \ref{asm.moment}, \ref{asm.capacity} and \ref{asm.regularity}, if $r \in [1/2, 1], \gamma \in [0, 1]$, and $\lambda = N^{-\frac{1}{2r+\gamma}},$
    then
    \begin{align*}
        1 \lesssim m \lesssim N^\frac{2r-1}{2r+\gamma}, \quad
        M \gtrsim N^{\frac{(2r-1)\gamma + 1}{2r + \gamma}},
    \end{align*}
    are enough to guarantee, with a high probability, that
    \begin{align*}
        \mathbb{E} ~ \mathcal{E}(\widehat{f}_{D,\lambda}^M) - \mathcal{E}(f_\mathcal{H})
        = \mathcal{O}\Big(N^{-\frac{2r}{2r+\gamma}}\Big).
    \end{align*}
\end{theorem}

The optimal learning rate $\mathcal{O}\left(N^{-\frac{2r}{2r+\gamma}}\right)$ stated in Theorem \ref{thm.general-results} in the above bound is optimal in a minimax sense for KRR approaches \cite{caponnetto2007optimal}.
Distributed KKR methods have obtained the same optimal error bounds with a stronger condition on the number of partitions, such as KRR-DC \cite{lin2017distributed,mucke2018parallelizing} with $m \lesssim N^\frac{2r-1}{2r+\gamma}$.
In particular, for the general case $r=1/2$, the number of local processors $m=\mathcal{O}(1)$ becomes a constant number that is independent of the sample size $N$.
The time complexity of \texttt{DKRR-RF} is $\mathcal{O}(NM^2/m)$ and the space complexity $\mathcal{O}(NM/m)$, thus we report the computational complexities of Theorem \ref{thm.general-results} in Figure \ref{fig.complexities_general_results}.

\begin{remark}
The general results in Theorem \ref{thm.general-results} have three fatal drawbacks: 1) the above bound is only suitable for the attainable case $r \in [1/2, 1]$ and fail to apply to the non-attainable case $r \in (0, 1/2)$ induced by more complicated problems; 
2) random features generated via Monte Carlo are data-independent, which requires much more features than the data-dependent generating features;
3) the constraint on the number of partitions $m \lesssim N^\frac{2r-1}{2r+\gamma}$ is too strict, leading to a constant number of partitions when $r$ is close to $1/2$.
\end{remark}

\subsection{Refined Results in the Non-attainable Case}
\begin{theorem}
	\label{thm.refined-results}
	Under Assumptions \ref{asm.rf}, \ref{asm.moment}, \ref{asm.capacity} and \ref{asm.regularity}, if $r \in (0, 1]$, $\gamma \in [0, 1]$, $2r + \gamma \geq 1$ and $\lambda = N^{-\frac{1}{2r+\gamma}},$
	then the number of partitions corresponding to
	\begin{align*}
		1 \lesssim m \lesssim N^\frac{2r+\gamma-1}{2r+\gamma}
	\end{align*}
	and the number of random features $M$ satisfying
	\begin{align*}
		M &\gtrsim N^{\frac{1}{2r + \gamma}} \quad \text{when} ~~ 0<r<1/2 \qquad \text{and} \\
		M &\gtrsim N^{\frac{(2r-1)\gamma + 1}{2r + \gamma}} \quad \text{when} ~~ 1/2 \leq r \leq 1,
	\end{align*}
	are enough to guarantee, with a high probability, that
	\begin{align*}
		\mathbb{E} ~ \mathcal{E}(\widehat{f}_{D,\lambda}^M) - \mathcal{E}(f_\rho)
		= \mathcal{O}\Big(N^{-\frac{2r}{2r+\gamma}}\Big).
	\end{align*}
\end{theorem}

Compared to Theorem \ref{thm.general-results}, Theorem \ref{thm.refined-results} allows more partitions and extends the optimal learning guarantees to the non-attainable case $r \in (0, 1/2)$ where the true regression does not lie in RKHS $\mathcal{H}$.
Thus, it achieves significant improvements in both computational efficiency and statistical guarantees.
With the same optimal learning rates, Theorem \ref{thm.refined-results} relaxes the restriction on $m$ from $m \lesssim N^\frac{2r-1}{2r+\gamma}$ to $m \lesssim N^\frac{2r+\gamma-1}{2r+\gamma}$, which allows more partitions and relaxes the constraints from $r \geq 1/2$ to $2r + \gamma \geq 1$.
When $r \in (0, 1/2)$, the number of random features $M \gtrsim N^{\frac{1}{2r + \gamma}}$ increases as the $r$ approaches zero, because $f_\rho$ becomes far away from $\mathcal{H}$ when $r$ is near zero.
When $r \in [1/2, 1]$, we obtain the same level of the number of random features $M \gtrsim N^{\frac{(2r-1)\gamma + 1}{2r + \gamma}}$ as KRR-RF \cite{rudi2017generalization}, which is continuous to $M \gtrsim N^{\frac{1}{2r + \gamma}}$ at the critical points $r=1/2$.
Compared to Figure \ref{fig.complexities_general_results}, Figure \ref{fig.complexities_refined_results} illustrates Theorem \ref{thm.refined-results} not only enlarge the applicable case but also improve the computational efficiency.

\begin{figure}[t]
    \centering
    \includegraphics[width=\linewidth]{./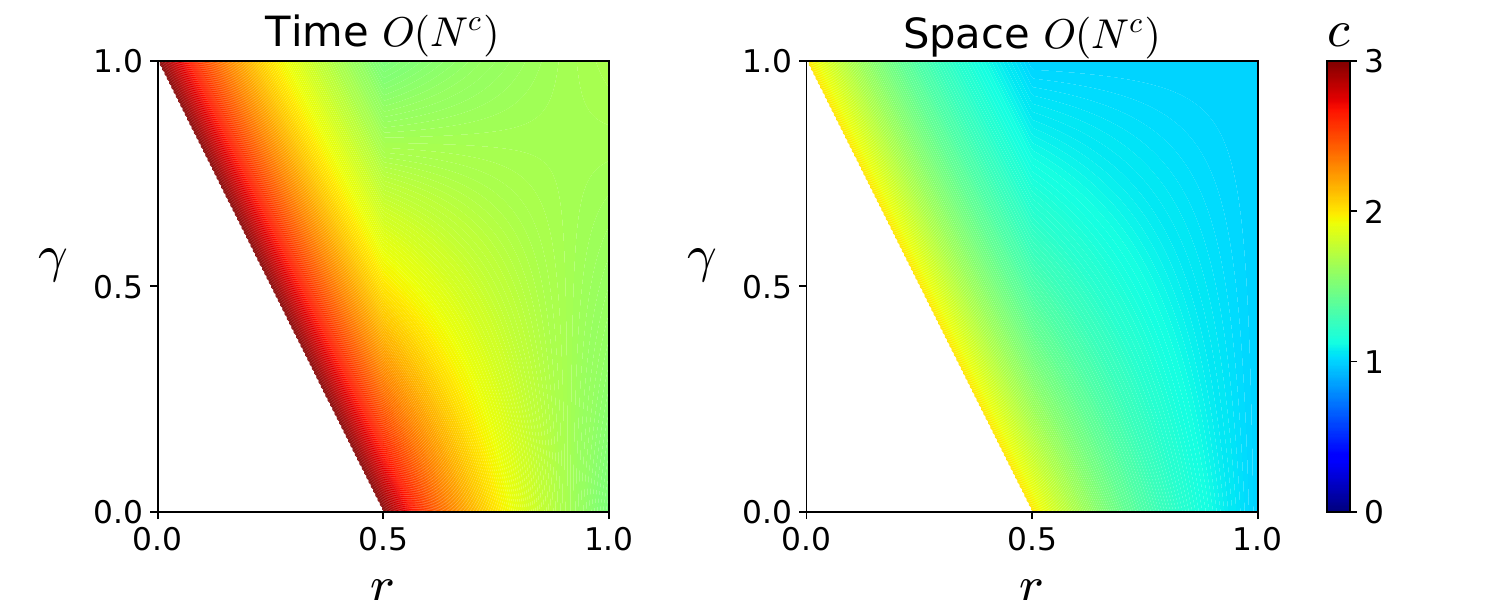}
    \caption{Time complexity and space complexity of Theorem \ref{thm.refined-results} in different settings. The color closer to red represents higher complexity. Blank areas represent unfeasible situations.}
    \label{fig.complexities_refined_results}
\end{figure}

\begin{remark}
	Theorem \ref{thm.refined-results} extends the optimal generalization theories from only attainable case $r \in [1/2, 1]$ to the non-attainable case $2r + \gamma \geq 1$, which include a part of difficult problems $r \in (0, 1/2)$.
	However, there are also many cases satisfying $2r + \gamma < 1$ in the non-attainable case $r \in (0, 1/2)$, where the optimal learning guarantees in Theorem \ref{thm.refined-results} are no longer valid.
	Inspired the literature \cite{chang2017distributed}, we employ additional unlabeled samples to relax the restriction $2r + \gamma \geq 1$ in Section \ref{sec.unlabeled_data}.
\end{remark}

\subsection{Fewer Features with Data-dependent Sampling}
\label{sec.leverage-score}

\begin{assumption}[Compatibility assumption]
    \label{asm.compatibility}
    Define the maximum effective dimension as
    $$
        \mathcal{N}_\infty(\lambda) = \sup_{\omega \in \Omega} \|(L + \lambda I)^{-1/2} \psi(\cdot,\omega)\|_{\rho_X}^2, \lambda > 0.
    $$
    Assume there exists $\alpha \in [0, 1]$ and $F > 0$, such that
    $$
        \mathcal{N}_\infty(\lambda) \leq F\lambda^{-\alpha}.
    $$
\end{assumption}

Using the definition of $\mathcal{N}(\lambda)$, we characterize the lower bounds for $\mathcal{N}_\infty(\lambda)$:
\begin{align*}
    \mathcal{N}(\lambda) &= \mathbb{E}_{\omega} \|(L + \lambda I)^{-1/2} \psi(\cdot,\omega)\|_{\rho_X}^2  \\
    &\leq  \sup_{\omega \in \Omega} \|(L + \lambda I)^{-1/2} \psi(\cdot,\omega)\|_{\rho_X}^2 = \mathcal{N}_\infty(\lambda).
\end{align*}
Compared to the (average) effective dimension used in Assumption \ref{asm.capacity}, the maximum effective dimension offers a finer-grained estimate for the capacity of RKHS \cite{alaoui2015fast,rudi2017generalization,rudi2018fast}, which often leads to shaper estimate for the related quantities.
Using the compatibility assumption, we relax the constraints on the dimension of random features and the number of partitions by generating features in a data-dependent manner, as shown in \cite{rudi2018fast,bach2017EquivalenceKernelQuadrature,li2019towards}.
\begin{theorem}
    \label{thm.data-dependent}
    Under the same assumptions of Theorem \ref{thm.refined-results} and Assumption \ref{asm.compatibility}, if $r \in (0, 1]$, $\gamma \in [0, 1]$, $2r + \gamma \geq 1$ and $\lambda = N^{-\frac{1}{2r+\gamma}},$
    then the number of partitions $m$ satisfying
    \begin{align*}  
        1 \lesssim m &\lesssim N^{\frac{2r + \gamma - 1}{2r + \gamma}}
    \end{align*}
    and the number of random features $M$ satisfying
    \begin{align*}
        M &\gtrsim N^{\frac{\alpha}{2r + \gamma}} \qquad\qquad\qquad \text{when} ~~ 0<r<1/2 \qquad \text{and} \\
        M &\gtrsim N^{\frac{(2r - 1)(1+\gamma-\alpha) + \alpha}{2r + \gamma}} \qquad \text{when} ~~ 1/2 \leq r \leq 1,
    \end{align*}
    is sufficient to guarantee, with a high probability, that
    \begin{align*}
        \mathbb{E} ~ \mathcal{E}(\widehat{f}_{D^*,\lambda}^M) - \mathcal{E}(f_\rho)
        = \mathcal{O}\Big(N^{-\frac{2r}{2r+\gamma}}\Big).
    \end{align*}
\end{theorem}

\begin{figure}[t]
    \centering
    \includegraphics[width=\linewidth]{./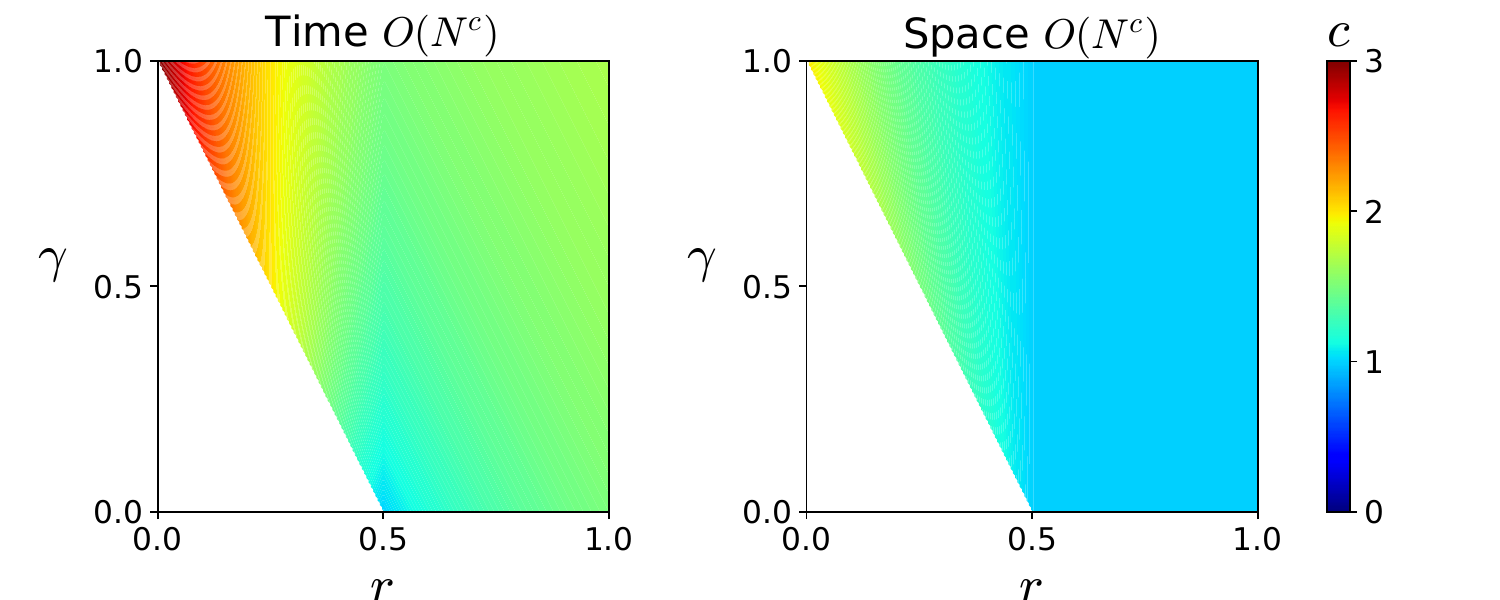}
    \caption{Time complexity and space complexity of Theorem \ref{thm.data-dependent} in different settings. The color closer to red represents higher complexity. Blank areas represent unfeasible situations.}
    \label{fig.complexities_refined_leverage}
\end{figure}

The learning rates of the above theorem are optimal, same as Theorems \ref{thm.refined-results}.
Achieving the same optimal learning rates, Theorem \ref{thm.data-dependent} reduce the computational costs with fewer random features.
The number of required random features is reduced from  $\mathcal{O}\big(N^{\frac{1}{2r + \gamma}}\big)$ to $\mathcal{O}\big(N^{\frac{\alpha}{2r + \gamma}}\big)$ when $r \in (0, 1/2)$ and $\mathcal{O}\big(N^{\frac{(2r-1)\gamma + 1}{2r + \gamma}}\big)$ to $\mathcal{O}\big(N^{\frac{(2r-1)\gamma + 1 + 2(r-1)(1-\alpha)}{2r + \gamma}}\big)$ when $r \in [1/2, 1]$, where the term $2(r-1)(1-\alpha) \leq 0$.
We report the applicable area and computational complexities of Theorem \ref{thm.data-dependent} in Figure \ref{fig.complexities_refined_leverage}.
It shows the use of data-dependent sampling significantly reduce both the time and space complexities.
The situations near the boarder line $2r+\gamma=1$ are away from the same computational complexities as the exact KRR.

\begin{remark}  
    From Theorem 1 in \cite{li2019towards}, we find that the requirement on the data-dependent random features is bounded as $M \gtrsim d_{\tilde{l}}:= \sup_{\ww \in \Omega} l_{\lambda}(\ww) / q(\ww)$, where $d_{\tilde{l}} \propto \mathcal{N}_\infty(\lambda) \leq F N^\frac{\alpha}{2r+\gamma}$.
    The condition is the same as Theorem \ref{thm.data-dependent} in the non-attainable $r \in (0, 1/2)$ and milder than Theorem \ref{thm.data-dependent} in the attainable case $r \in [1/2, 1]$.
    However, the theoretical analysis provided in \cite{li2019towards} only pertains to the general case $(r=1/2, \gamma=1)$ and obtains error bounds with the convergence rate $\mathcal{O}(1/\sqrt{N})$.
\end{remark}

\begin{remark}
    \label{remark.data_dependent}
    According to the definition of $\mathcal{N}_\infty(\lambda)$, the sampling probability of random features $\pi(\omega)$ is independent of data, which leads to a pessimistic estimate of $\alpha$.
    However, generating random features in a data-dependent manner relaxes the estimate of $\alpha$ closer to $\gamma$.
    A theoretical example of data-dependent random features was given in Example 2 \cite{rudi2017generalization}, which guarantees $\mathcal{N}_\infty(\lambda) = \mathcal{N}(\lambda)$ (such that $\alpha = \gamma$) by constructing random features generated in a data-dependent way.
    In practice, leverage sampling algorithms were proposed to obtain data-dependent random features \cite{li2019towards}, where $\alpha$ is close to $\gamma$.
    To intuitively illustrate the improvement of data-dependent random features, we boldly assume $\alpha=\gamma$ by generating data-dependent random features.
\end{remark}

\subsection{More Partitions with Unlabeled Data}
\label{sec.unlabeled_data}
In this part, we introduce the additional unlabeled samples $\widetilde{D}_j$ to relax this restriction further.
We consider the merged dataset $D^*$ on the $j$-th processor,
$
    D_j^* = D_j \cup \widetilde{D}_j
$
with
\begin{equation*}
    y_i^*=\left\{
    \begin{array}{lr}
        \frac{|D_j^*|}{|D_j|}y_i, & \quad \text{if} (\xx_i, y_i) \in D_j, \\
        0,                        & \quad \text{otherwise}.
    \end{array}
    \right.
\end{equation*}
Let $D^*=\bigcup_{j=1}^m D_j^*, |D^*|=N^*$ and $|D_1^*| = \cdots = |D_m^*|=n^*$. We define semi-supervised kernel ridge regression with divide-and-conquer and random features by
\begin{align}
    \label{f.global-sDKRR-RF}
    \widehat{f}_{D^*,\lambda}^M = \frac{1}{m}\sum_{j=1}^m  \widehat{f}_{D_j^*, \lambda}^M.
\end{align}

\begin{figure}[t]
    \centering
    \includegraphics[width=\linewidth]{./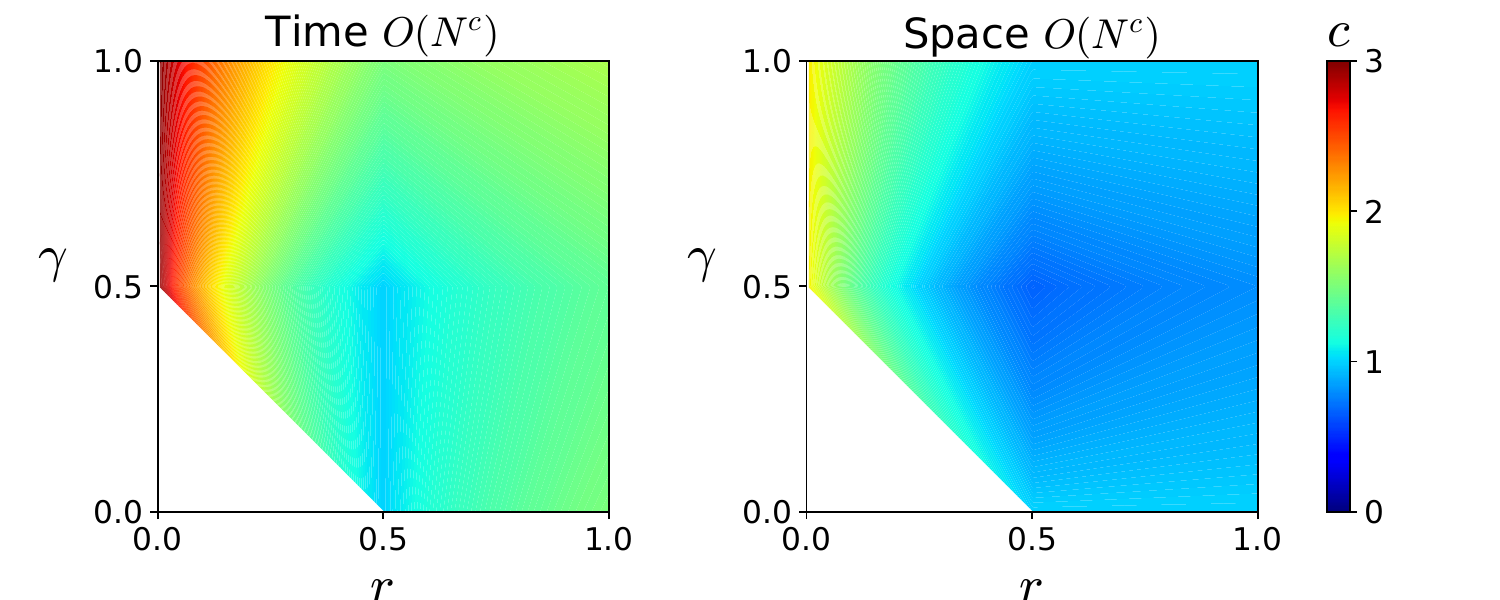}
    \caption{Time complexity and space complexity of Theorem \ref{thm.unlabeled-explicit} versus different values of $r$ and $\gamma$. The color which is closer to red represents higher complexity.}
    \label{fig.complexities_unlabeled}
\end{figure}
\begin{theorem}
    \label{thm.unlabeled-explicit}
    Under the same assumptions of Theorem \ref{thm.data-dependent}, if $r \in (0, 1], \gamma \in [0, 1], 2r + 2\gamma \geq 1$ and $\lambda = N^{-\frac{1}{2r+\gamma}},$
    then the total number of samples corresponding to
    \begin{align*}
        N^* \gtrsim NN^\frac{\gamma+\alpha-1}{2r+\gamma} \vee N,
    \end{align*}
    the number of local processors satisfying
    \begin{align*}      
        1 \lesssim m \lesssim N^\frac{2r+2\gamma-1}{2r+\gamma}
    \end{align*}
    and the number of random features $M$ satisfying
    \begin{align*}
        M &\gtrsim N^{\frac{\alpha}{2r + \gamma}} \quad \text{when} ~~ 0<r<1/2 \qquad \text{and} \\
        M &\gtrsim N^{\frac{(2r - 1)(1+\gamma-\alpha) + \alpha}{2r + \gamma}} \quad \text{when} ~~ 1/2 \leq r \leq 1,
    \end{align*}
    are sufficient to guarantee, with a high probability, that
    \begin{align*}
        \mathbb{E} ~ \mathcal{E}(\widehat{f}_{D^*,\lambda}^M) - \mathcal{E}(f_\rho)
        = \mathcal{O}\Big(N^{-\frac{2r}{2r+\gamma}}\Big).
    \end{align*}
\end{theorem}
To our best knowledge, for the first time, we prove that the number of partitions can achieve $m \lesssim N^\frac{2r+2\gamma-1}{2r+\gamma}$, while the existing constraints on $m$ of the existing work \cite{lin2020optimal,liu2021effective} are $m \lesssim N^\frac{2r+\gamma-1}{2r+\gamma}$.
Such that, much more partitions are allowed in distributed KRR methods.
The relaxation of condition on the partition number $m$ can not only lead to better computational efficiency but also covers more difficult problems, where the suitable problems are enlarged from the situation $2r + \gamma \geq 1$ to the situation $2r + 2\gamma \geq 1$.
Figure \ref{fig.complexities_unlabeled} reveals the advantages of \texttt{DKRR-RF} with unlabeled data. 
Theorem \ref{thm.unlabeled-explicit} provides the largest applicable area $2r + 2\gamma \geq 1$ but also the highest computational efficiency owing to more partitions.

\begin{remark}
    From the error decomposition, there are two error terms related to the number of partitions $m$: sample variance and empirical error.
    Sample variance depends on the number of labeled samples $n$, while empirical error is input-dependent but output-independent; thus, it is related to the number of total samples $n^*$.
    Meanwhile, the similarity between empirical and expected covariance operators $\|\tCnl^{-1/2}C_{M, \la}^{1/2}\|$ is also label-free, and thus it is related to the total sample size $n^*$ rather than $n$.
    To achieve the optimal learning rates, we consider the constraints on both the required labeled samples $n$ and the total samples $n^*$.
    Considering both conditions for supervised learning $m=N/n$ and semi-supervised learning $m=N^*/n^*$, we then obtain two constraints on the number of partitions $m$ and consolidate them together.
\end{remark}

\section{Compared with Related Work}
The existing optimal learning guarantees of KRR \cite{caponnetto2007optimal}, KRR-DC \cite{guo2017learning,mucke2018parallelizing} and KRR-RF \cite{rudi2017generalization,liu2021effective} only apply to the attainable case $r \in [1/2, 1]$.
In this paper, we apply the optimal generalization error bounds to the non-attainable case $r \in (0, 1/2)$ with some restrictions, including $2r + \gamma \geq 1$ in Theorem \ref{thm.refined-results} and $2r + 2\gamma \geq 1$ in Theorem \ref{thm.unlabeled-explicit}.
Using refined estimation, we extend the random features error to the non-attainable case.

\subsection{Applicable Area from $r \in [1/2, 1]$ to $2r + \gamma \geq 1$}

The key to obtaining the optimal learning rates with integral-operator approach is to bound the identity $\|(\tCn + \la I)^{-1/2}(C_{M} + \la I)^{1/2}\|$ as a constant, where $C_{M}$ and $\tCn$ are the expected and empirical covariance operators defined in Definition \ref{def.ops-rf}.
In conventional distributed KRR \cite{lin2017distributed,chang2017distributed}, they estimated the operator difference after first order (or second order) decomposition
	\begin{align*}
		&\|(C_M + \la I)^{-1/2}(\tCn + \la I)^{1/2}\|^2 \\
		\leq ~ &\|(C_M + \la I)^{-1/2}\|\|(C_M + \la I)^{-1/2}(C_M - \tCn)\| + 1
		\\= ~ &\mathcal{O}\left(\frac{m}{{\lambda} N} + \sqrt{\frac{{\mathcal{N}(\lambda)}m}{{\lambda} N}} \right). \quad \text{Section 4 \cite{guo2017learning}.}
	\end{align*}
    To bound the identity as a constant, the local sample size should larger enough $n \geq \frac{\mathcal{N}(\lambda)}{\lambda}$.
	 it holds $m \lesssim N^\frac{2r-1}{2r+\gamma}$ for KRR-DC and only applies to $r \geq 1/2$. 
	However, this paper directly estimates the identity in total (rather than in parts after decomposition) based on concentration inequalities for self-adjoint operators and obtain 
	\begin{align*}
		&\|(C + \la I)^{-1/2}(\tCn + \la I)^{1/2}\| \\
		\leq ~ &\left(1 - \Big\|(C + \la I)^{-1/2}(C - \tCn)(\tCn + \la I)^{-1/2}\Big\|\right)^{-1/2}
		\\= ~ &\mathcal{O}\left(\frac{m}{{\lambda} N} + \sqrt{\frac{m}{{\lambda} N}} \right). \qquad \text{Theorem \ref{thm.refined-results}}
	\end{align*}
    To bound the identity as a constant, the local sample size only needs $n \geq \frac{1}{\lambda}$, which is smaller than \cite{guo2017learning} with $\mathcal{N}(\lambda)$.
	Therefore, our estimation of $\|(C_M + \la I)^{-1/2}(\tCn + \la I)^{1/2}\|$ in Theorem \eqref{thm.refined-results} is $\sqrt{\mathcal{N}(\lambda)}$ tighter than that in \cite{guo2017learning}.
	To bound identity as a constant, we then have $m \lesssim N^\frac{2r+\gamma-1}{2r+\gamma}$, which is the key to obtain more partitions and extends the optimal learning guarantees to the non-attainable case $2r + \gamma \geq 1$.

    \subsection{Applicable Area from $2r + \gamma \geq 1$ to $2r + 2\gamma \geq 1$}

	Only sample variance is dependent on the labeled samples, while other error terms involving the estimate of $\|(C + \la I)^{-1/2}(\tCn + \la I)^{1/2}\|$ are label-free.
	Thus, there are two restrictions on the number of partitions $m$: sample variance (label-dependent) and the estimate of $\|(C + \la I)^{-1/2}(\tCn + \la I)^{1/2}\|$ (label-free).
	
	As shown in the proof of Theorem \ref{thm.data-dependent}, the global sample variance (label-dependent) can be estimated
	\begin{align*}
		\frac{1}{m}\mathbb{E} \|\widehat{f}_{D_j,\lambda}^M - \widetilde{f}_{D_j,\lambda}^M\|_\rho^2 
		~\leq~ \mathcal{O} \left(m N^\frac{1-4r-2\gamma}{2r+\gamma} + N^\frac{-2r}{2r+\gamma}\right)
	\end{align*}
	To achieve the optimal learning rates $\mathcal{O}(N^\frac{-2r}{2r+\gamma})$, the number of partitions should satisfy $m \lesssim N^\frac{2r+2\gamma-1}{2r+\gamma}$.
	Then, we utilize additional unlabeled samples to relax the condition on the estimate of $\|(C + \la I)^{-1/2}(\tCn + \la I)^{1/2}\|$.
	Using Assumption \ref{asm.compatibility}, one can further relax the condition of $m$ due to 
	\begin{align*}
		&\|(C + \la I)^{-1/2}(\tCn + \la I)^{1/2}\| 
		\\\leq ~ &\mathcal{O}\left(\frac{m\mathcal{N}_\infty(\lambda)}{N^*} + \sqrt{\frac{m\mathcal{N}_\infty(\lambda)}{N^*}} \right)
		\\= ~ &\mathcal{O}\left(\frac{m}{{\lambda^\alpha} N^*} + \sqrt{\frac{m}{{\lambda^\alpha} N^*}} \right). 
		\qquad \text{Theorem \ref{thm.unlabeled-explicit}}
	\end{align*}
	To guarantee the key quantity $\|(C + \la I)^{-1/2}(\tCn + \la I)^{1/2}\|$ be a constant, we have $m \lesssim \lambda^\alpha N^* = \mathcal{O}(N^* N^\frac{-\alpha}{2r+\gamma})$.
	We then consider the dominant constraints:
	\begin{itemize}
		\item The case $\alpha < 1 -\gamma$. It holds $2r+2\gamma-1 < 2r+\gamma-\alpha$, thus the number of partition is $m \lesssim N^\frac{2r+2\gamma-1}{2r+\gamma}$.
		\item 
		The case $\alpha \geq 1 -\gamma$. It holds $\gamma+\alpha-1 \geq 0$ and we  make use of additional unlabeled examples $N^* \gtrsim N N^\frac{\gamma+\alpha-1}{2r+\gamma}$ to guarantee $m \lesssim N^\frac{2r+\gamma-\alpha}{2r+\gamma} \leq N^\frac{2r+2\gamma-1}{2r+\gamma}$.
	\end{itemize}

	\subsection{Random Features Error in the Non-attainable Case}
	Using appropriate decomposition on operatorial level, we derive the random features error for both attainable and non-attainable case, where the dimension of random features should satisfy $M \gtrsim N^\frac{\gamma}{2r+\gamma}$ for the non-attainable case $r \in (0, 1/2)$.
	The extension from the attainable case to the non-attainable case is non-trivial, where the non-attainable case requires refined estimations for operators similarity.

	The operatorial definitions of intermediate estimators $\widetilde{f}_{D_j,\lambda}^M$, $f_{\lambda}^M$  and $f_{\lambda} $ in Lemma \ref{lem.estimator-definitions} involve the true regression $f_\rho$, where $\frho = L^r g$ (under Assumption \ref{asm.regularity}) is related the range of $r$.
	Such that, we estimate the last there error terms (empirical error $\|\widetilde{f}_{D_j,\lambda}^M - {f}_{\lambda}^M\|$, random features error $\| {f}_{\lambda}^M - {f}_{\lambda}\|$ and approximation error $\| {f}_{\lambda} - \frho\|$) that involves $\widetilde{f}_{D_j,\lambda}^M$, $f_{\lambda}^M$ and $f_{\lambda} $ for the non-attainable case.
	Meanwhile, because the empirical error satisfies $\|\widetilde{f}_{D_j,\lambda}^M - {f}_{\lambda}^M\| \leq (\sqrt{2} + 2)\left(\|f_\lambda^M - f_\lambda\| + \|f_\lambda - f_\rho\|\right)$ and the approximation error $\|f_\lambda - f_\rho\|$ naturally applies to the non-attainable case, only random features error $\| {f}_{\lambda}^M - {f}_{\lambda}\|$ is needed to specifically estimated for the non-attainable case.


\section{Experiments}
\begin{figure*}[t]
    \centering
    \subfigure[Target functions with difficult difficulties.]{
        \centering
        \includegraphics[width=0.45\columnwidth]{./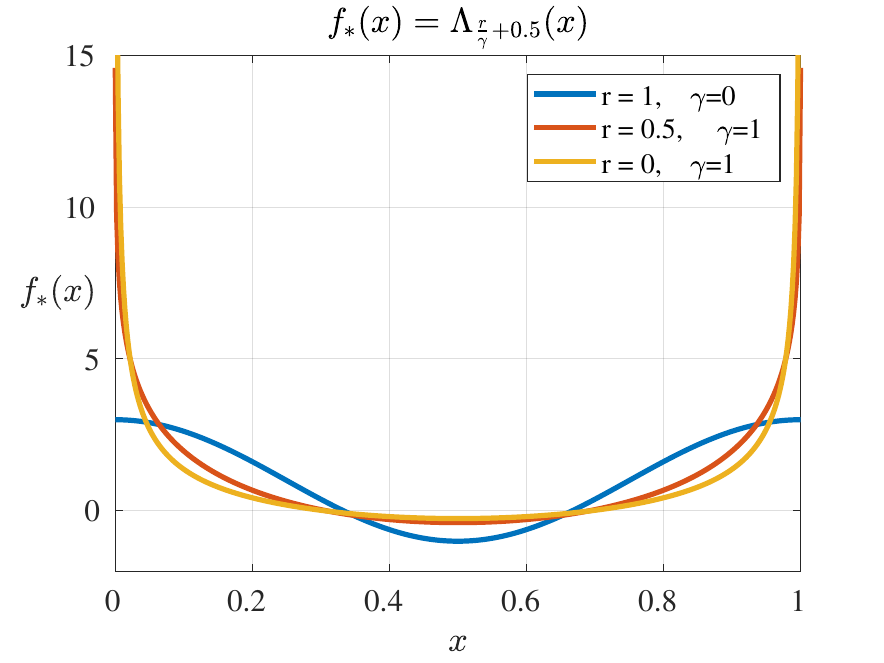}
        }
    \subfigure[Fitting with $r=1, \gamma=0$.]{
        \centering
        \includegraphics[width=0.45\columnwidth]{./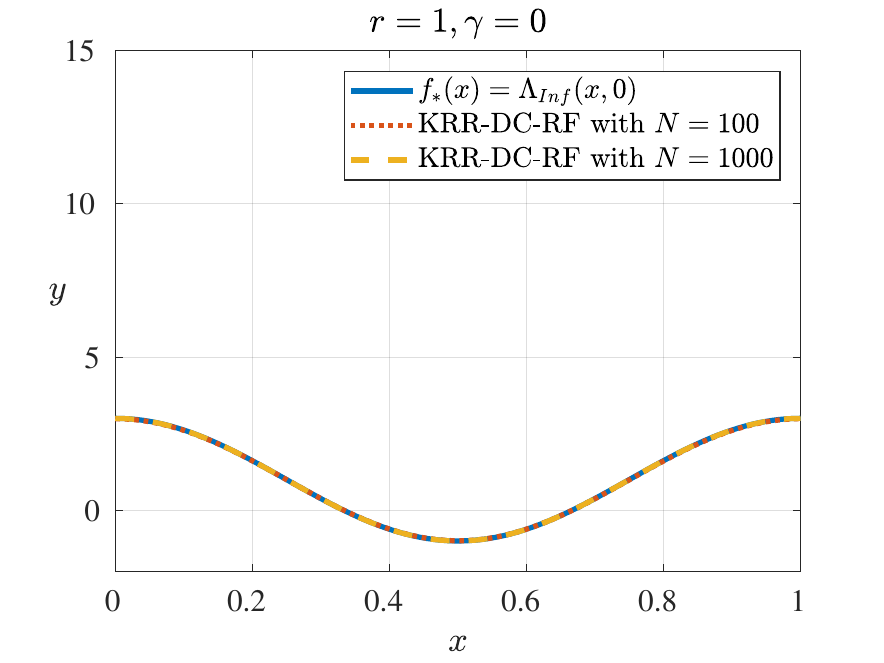}}
    \subfigure[Fitting with $r=0.5, \gamma=1$.]{
        \centering
        \includegraphics[width=0.45\columnwidth]{./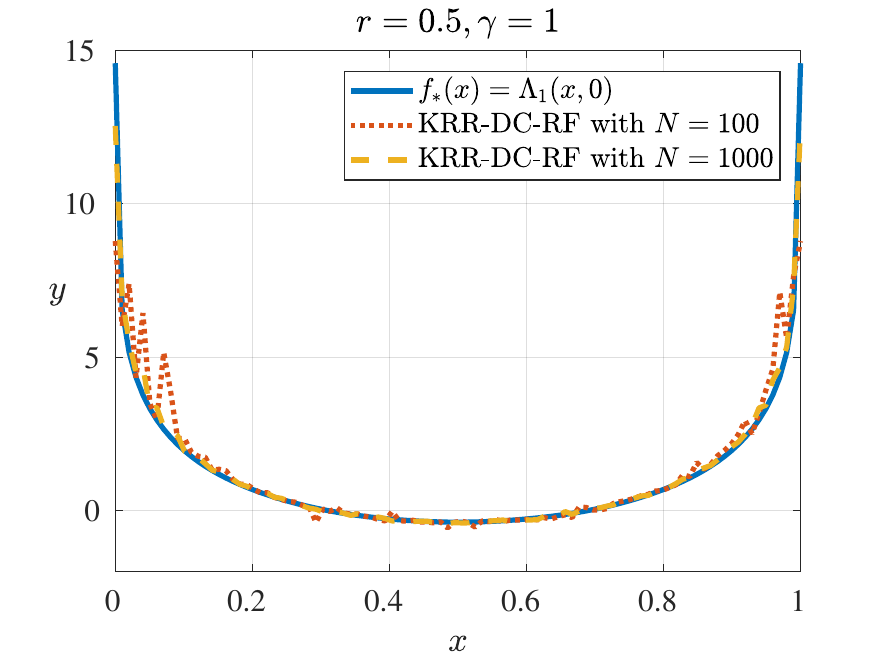}}
    \subfigure[Fitting with $r=0, \gamma=1$]{
        \centering
        \includegraphics[width=0.45\columnwidth]{./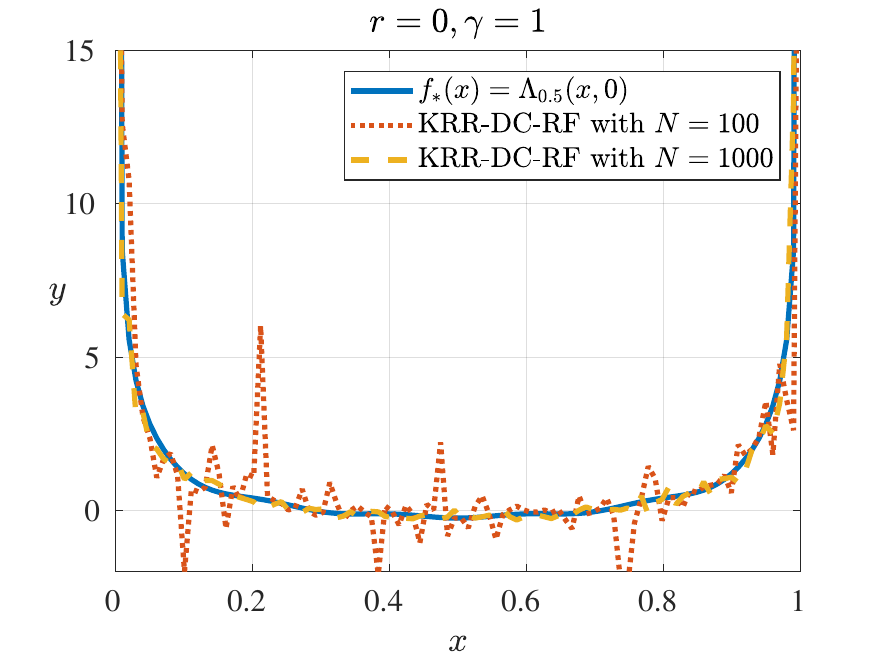}}
    \caption{Problems with different difficulties and the predicted results of DKRR-RF.}
    \label{fig.different_difficulties}
\end{figure*}

To validate the theoretical findings, we conduct experiments on both simulated data and real-world data.
In the numerical experiments, we study the computational and statistical tradeoffs of \texttt{DKRR-RF}, KRR-DC, KRR-RF, and KRR.
In real-world experiments, we first explore the effectiveness of data-dependent random features and additional unlabeled samples on a small world dataset.
Then, we compare the statistical performance of \texttt{DKRR-RF}, KRR-DC, and KRR-RF on three large-scale real-world datasets w.r.t. the number of random features $M$ and the number of partitions $m$.

\subsection{Numerical Experiments (for Theorem \ref{thm.refined-results})}
In this section, to validate our theoretical findings, we perform experiments on simulated data.
From Theorem \ref{thm.refined-results}, we find that the learning rates become slower as the ratio $\frac{\gamma}{r}$ increases, which is
\begin{align*}
    \mathcal{O}\Big(N^{-\frac{2r}{2r+\gamma}}\Big) = \mathcal{O}\Big(N^{-\frac{2}{2 + \gamma/r}}\Big).
\end{align*}
As the ratio $\gamma/r$ increases, the hardness of the problem increases.
Such that, given a fixed $\gamma$, a smaller $r$ leads to a slower converge rate of generalization error bounds.
As $r$ decreases from $1$ to near zero, the learning rates are in the range $N^{(0, \frac{-2}{2+\gamma}]}$.
Inspired by numerical experiments in \cite{rudi2017generalization,jun2019kernel}, we introduce the spline kernel of order $q \geq 2$, where more details are referred in \cite{Wahba1990smod} (Eq. 2.1.7)
\begin{align}
    \label{eq.spline-kernel}
    \Lambda_q (\xx, \xx') &= \sum_{k \in \mathbb{Z}} \frac{e^{2\pi i k(\xx - \xx')}}{|k|^q} \\
    &= 1 + 2 \sum_{k=1}^\infty \frac{\cos(2\pi k (\xx - \xx'))}{k^q}.
\end{align}
More importantly, the spline kernels naturally construct random features for any $q, q' \in \mathbb{R}$
\begin{align}
    \label{eq.spline-rf}
    \int_0^1 \Lambda_q(\xx, \zz) \Lambda_{q'}(\xx', \zz) d z = \Lambda_{q+q'}(\xx, \xx').
\end{align}
Using the following settings, we perform experiments on both easy and difficult problems
\begin{enumerate}[leftmargin=*]
    \item[-] \textbf{Input distribution}: $\mathcal{X} = [0, 1]$ and $\rho_X$ is the uniform distribution.
    \item[-] \textbf{Output distribution}: the target function $f_*(\xx) = \Lambda_{\frac{r}{\gamma} + \frac{1}{2}}(\xx, 0)$ with a variance $\epsilon^2$.
    \item[-] \textbf{Kernel and Random features}: $K(\xx, \xx') = \Lambda_\frac{1}{\gamma}(\xx, \xx')$. According to \eqref{eq.kernel_random_features} and \eqref{eq.spline-rf}, $\psi(\xx, \omega_i) = \Lambda_\frac{1}{2\gamma}(\xx, \omega_i)$ with $\omega_i$ sampled i.i.d from uniform distribution $U[0, 1].$ The random features of the spline kernel are
          \begin{align*}
              \phi_M(\xx)= M^{-1/2} \big(\psi(\xx, \omega_1), \cdots, \psi(\xx, \omega_M)\big).
          \end{align*}
\end{enumerate}

Then, conditions used in Theorem \ref{thm.refined-results} are satisfied \cite{rudi2017generalization}, including Assumption \ref{asm.capacity}, \ref{asm.regularity} with $\alpha = 1$ and no unlabeled data.
As shown in Figure \ref{fig.different_difficulties} (a), the smaller ratio $\gamma/r$ leads to a smoother curve, which corresponds to a easier problem.
We explore regression problems with different 
difficulties in terms of different settings for $r$ and $\gamma$.

According to the target regression $f_*(\xx) = \Lambda_{\frac{r}{\gamma} + \frac{1}{2}}(\xx, 0)$ and a variance $\epsilon^2=0.01$, the training data is generated  with various sample size $N \in \{1000, 2000, \cdots, 10000\}$ and $10000$ samples for testing.
To study the difference between the simulated excess risk and the theoretical excess risk, we repeat the data generating and the training $10$ times and estimate the averaged excess risk on the testing data.
On each training, we perform \texttt{DKRR-RF} $\widehat{f}_{D,\lambda}^M$ \eqref{estimator.emp-global-rf}, KRR-DC \cite{guo2017learning}, KRR-RF \cite{rudi2017generalization} and KRR \cite{caponnetto2007optimal} by evaluating both statistical performance (mean square error, MSE) and computational costs (training time).
Meanwhile, according to Theorem \ref{thm.refined-results}, we set $\lambda = N^{-\frac{1}{2r+\gamma}}$, $M = \widetilde{C} N^\frac{(2r-1)\gamma+1}{2r+\gamma} \vee N^\frac{1}{2r+\gamma}$ and $m = N^\frac{2r+\gamma-1}{2r+\gamma} / \widetilde{C}$, where $\widetilde{C}$ is an estimation of the constant $32 \kappa^2 \log(2/\delta)$. 

\begin{figure*}[t]
    \centering
    \subfigure{
        \centering
        \includegraphics[width=0.3\columnwidth]{./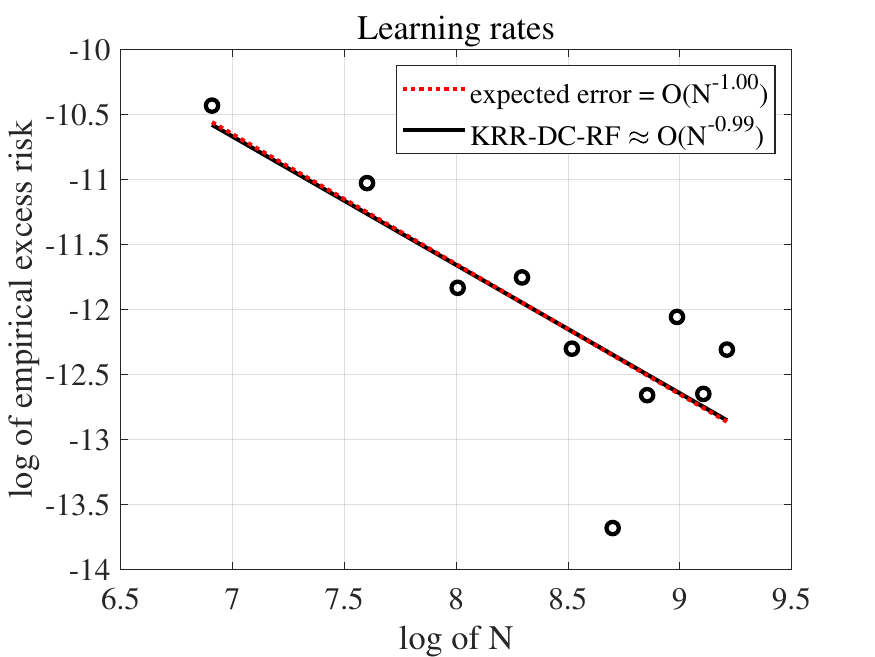}
    }
    \subfigure{
        \centering
        \includegraphics[width=0.3\columnwidth]{./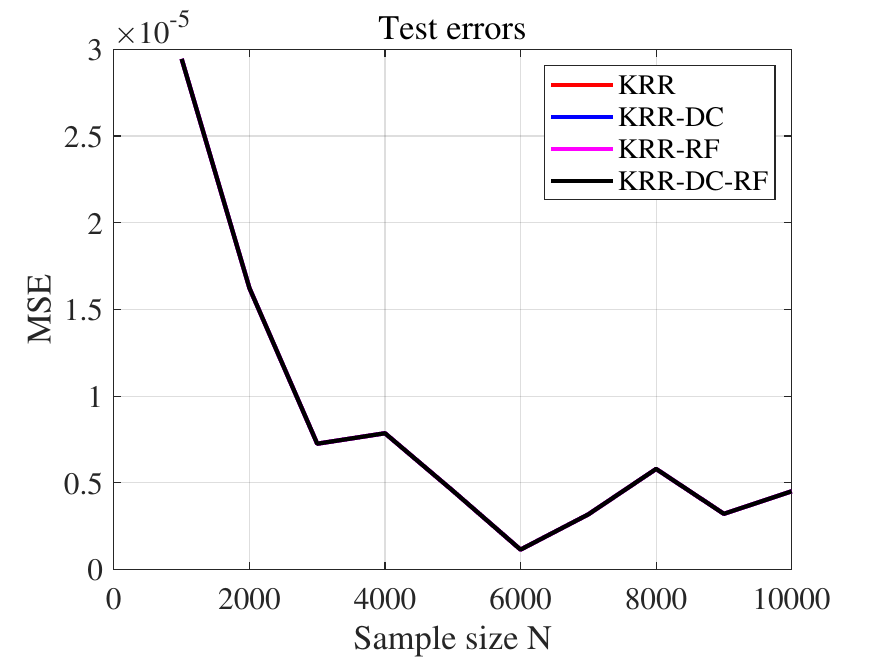}
    }   
    \subfigure{
        \centering
        \includegraphics[width=0.3\columnwidth]{./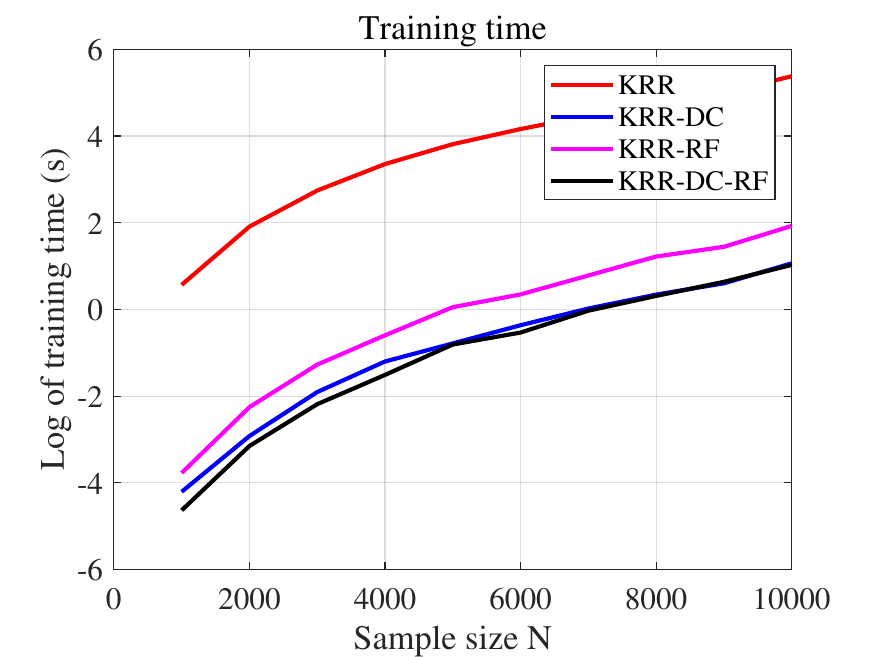}
    }
    \caption{The learning rates of expected error and \texttt{DKRR-RF} (left), the MSE (middle) and training time (right) of KRR, KRR-DC, KRR-RF, \texttt{DKRR-RF} on the easy problem with $r = 1$ and $\gamma = 0$.}
    \label{fig.easy_problem}
\end{figure*}

\begin{figure*}[t]
    \centering
    \subfigure{
        \centering
        \includegraphics[width=0.3\columnwidth]{./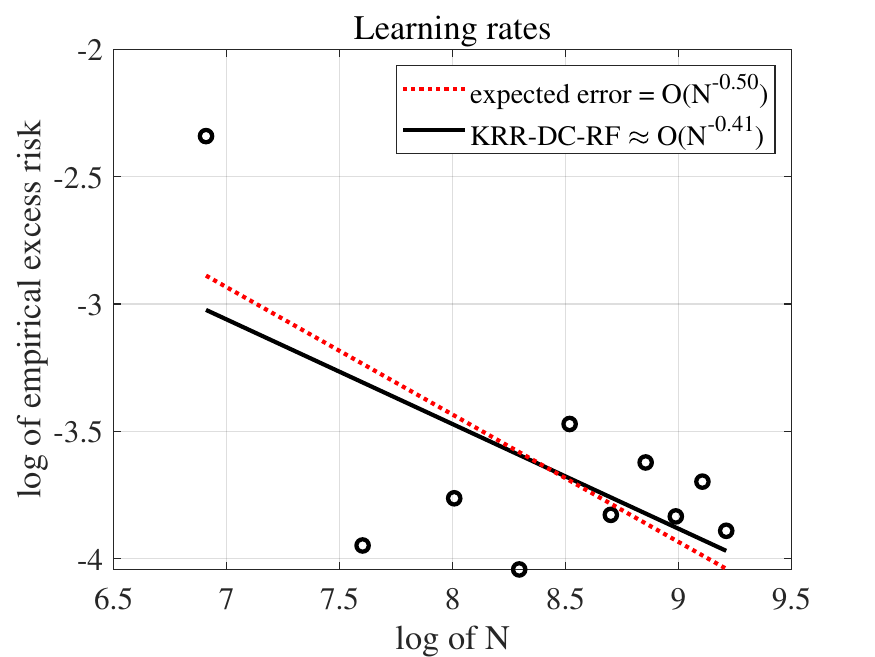}
    }
    \subfigure{
        \centering
        \includegraphics[width=0.3\columnwidth]{./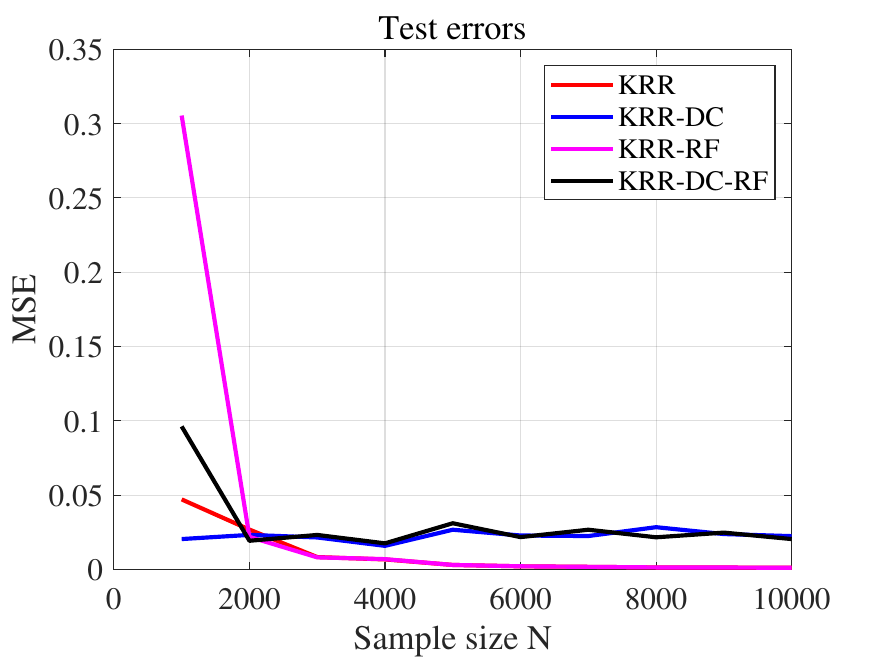}
    }
    \subfigure{
        \centering
        \includegraphics[width=0.3\columnwidth]{./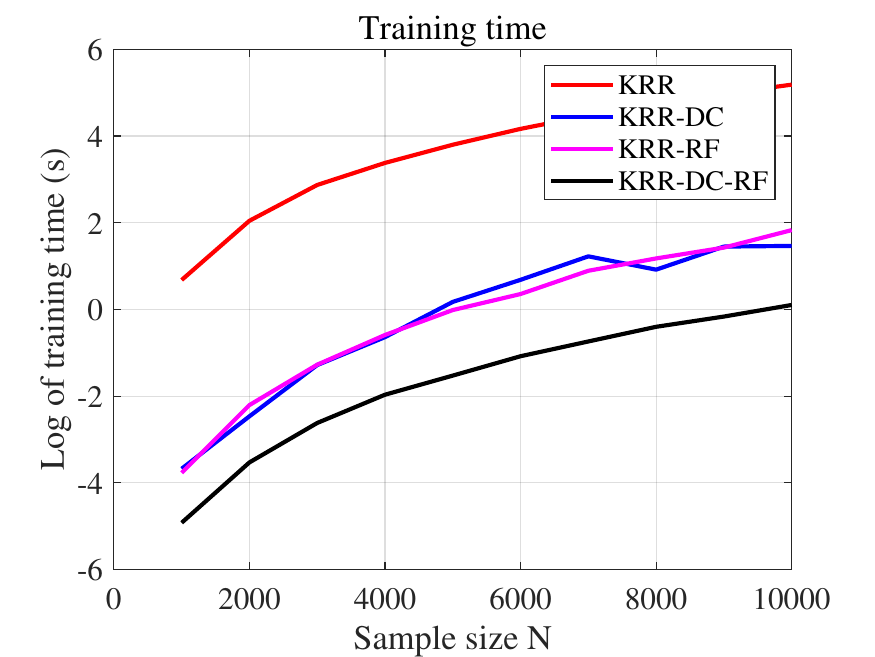}
    }
    \caption{The learning rates of expected error and \texttt{DKRR-RF} (left), the MSE (middle) and training time (right) of KRR, KRR-DC, KRR-RF, \texttt{DKRR-RF} on the general problem with $r = 0.5$ and $\gamma = 1$.}
    \label{fig.general_problem}
\end{figure*}

\begin{figure*}[t]
    \centering
    \subfigure{
        \centering
        \includegraphics[width=0.3\columnwidth]{./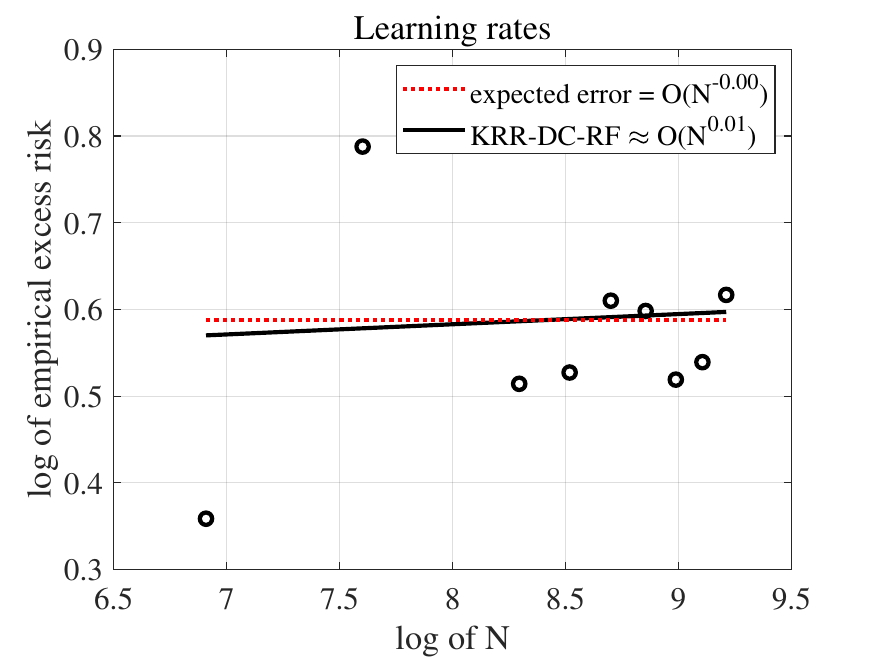}
    }
    \subfigure{
        \centering
        \includegraphics[width=0.3\columnwidth]{./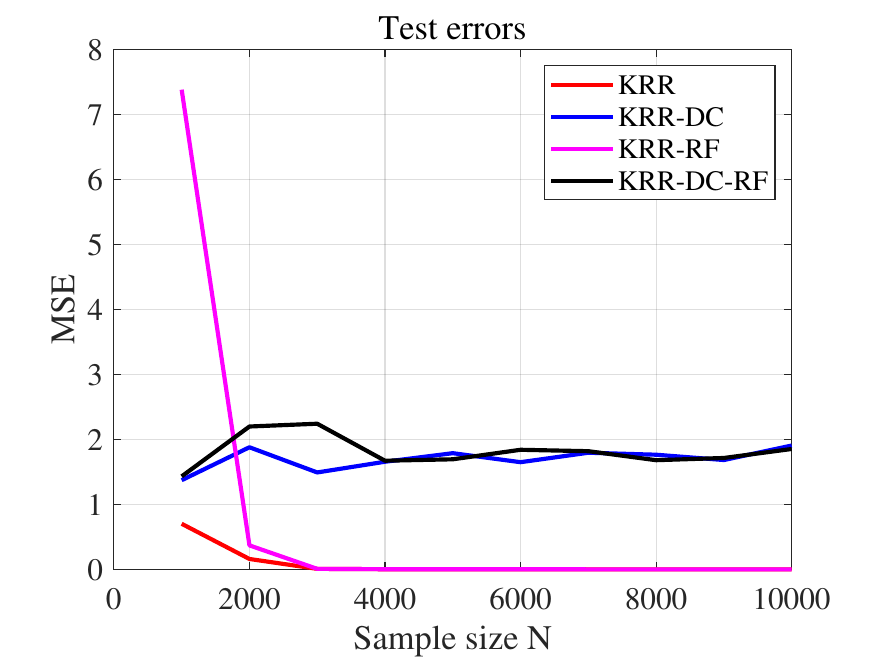}
    }
    \subfigure{
        \centering
        \includegraphics[width=0.3\columnwidth]{./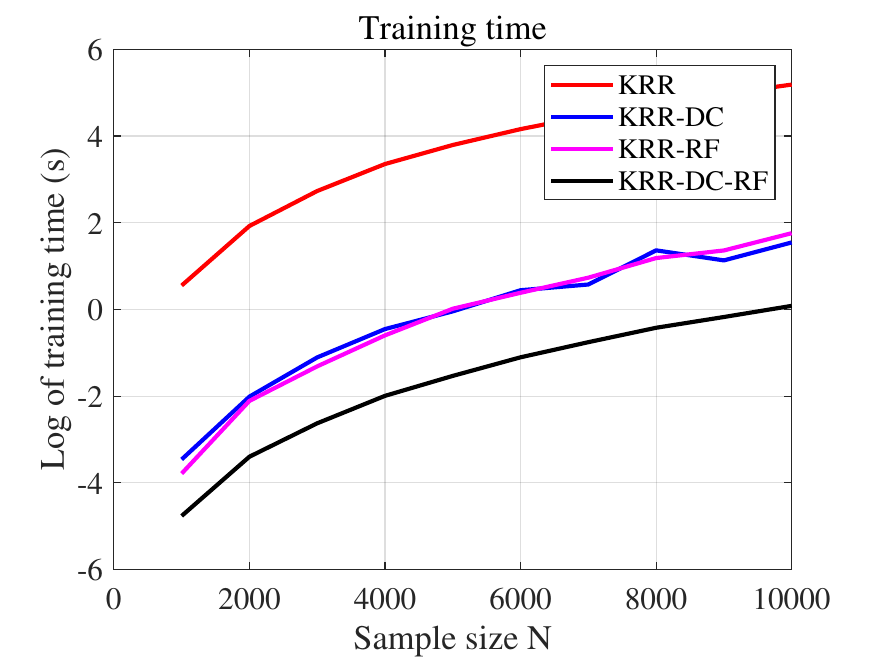}
    }
    \caption{The learning rates of expected error and \texttt{DKRR-RF} (left), the MSE (middle) and training time (right) of KRR, KRR-DC, KRR-RF, \texttt{DKRR-RF} on the general problem with $r = 0$ and $\gamma = 1$.}
    \label{fig.difficult_problem}
\end{figure*}

\subsubsection{Easy Problem}
Easy problem with the learning rate $\mathcal{O}(N^{-1})$ is given by setting $(r=1, \gamma=0)$, where the target function is $f_*(\xx) = \Lambda_{Inf}(\xx, 0) = 1 + 2\cos(2\pi\xx)$ and leads to a smooth curve in Figure \ref{fig.different_difficulties} (b).
Figure \ref{fig.different_difficulties} (b) illustrates that the problem is easy, and a smaller number of training samples ($N=100$) is enough to fit the target curve perfectly.

The left of Figure \ref{fig.easy_problem} shows the empirical learning rate of \texttt{DKRR-RF} $\mathcal{O}(N^{-0.99})$ is very close to the theoretical rate $\mathcal{O}(N^{-1})$ for the benign case $(r=1,\gamma=0)$. 
From the middle of Figure \ref{fig.easy_problem}, we find that the empirical MSE of KRR, KRR-DC, KRR-RF, and \texttt{DKRR-RF} are the same and extremely small, where the target problem is easy and even the approximate methods achieve the same optimal learning performance as the exact KRR.

\subsubsection{General Problem}
General problem with the learning rate $\mathcal{O}(N^{-1/2})$ is given by setting $(r=1/2, \gamma=1)$, which is seemed as the worst one for the attainable case $r \in [1/2, 1]$ and well-studied in \cite{rudi2017generalization,liu2021effective}.
The target function is $f_*(\xx) = \Lambda_{1}(\xx, 0).$
Figure \ref{fig.different_difficulties} (c) shows that the curve becomes sharp and thus the problem is of medium difficulty. 
A few samples $(N=100)$ bring noises, and it needs more samples $(N=1000)$ to achieve perfect fitting.

The left of Figure \ref{fig.general_problem} demonstrates the empirical error of \texttt{DKRR-RF} converges at $\mathcal{O}(N^{-0.41})$ near the expected rate $\mathcal{O}(N^{-0.5})$.
The comparison of MSE in the middle of Figure \ref{fig.general_problem} shows the errors of \texttt{DKRR-RF} are mainly due to more partitions rather than random features.
The empirical performance for the general problem in Figure \ref{fig.general_problem} (b) is much worse than the easy problem in Figure \ref{fig.easy_problem}.
The gap of test errors between distributed methods and centralized methods is negligible.
The right of Figure \ref{fig.general_problem} shows the training time of \texttt{DKRR-RF} is higher than KRR-DC when more random features are used.

\subsubsection{Diffiult Problem}
The difficult problem with the learning rate $\mathcal{O}(1)$ is given by setting $(r=0, \gamma=1)$, which is almost unable to be learned.
According to Figure \ref{fig.different_difficulties} (d), we find that $(r=0, \gamma=1)$ provides the difficult problem where the curve steepens rapidly near $0$ or $1$.
A large number $(N=1000)$ of training samples are still unable to fit the curve perfectly.

From the left of Figure \ref{fig.difficult_problem}, we can see that the learning rate is near $\mathcal{O}(1)$, such that the target function is hard to learn.
The middle and right of Figure \ref{fig.difficult_problem} illustrates that errors and training time are similar for KRR, KRR-RF, KRR-DC, and DKRR-RF.
Compared with test errors of the easy problem in Figure \ref{fig.easy_problem} (b) and the general problem in Figure \ref{fig.general_problem} (b), MSE of the difficult problem in Figure \ref{fig.difficult_problem} (b) is much higher and the performance gab between distributed learning and centralized learning is significant.
Therefore, distributed learning approaches are not suitable for difficult problems when $r$ approaches zero, which coincides with the theoretical findings in Theorem \ref{thm.refined-results}.

\begin{figure*}[t]
    \centering
    \subfigure[Test accuracy]{
        \centering
        \includegraphics[width=0.3\columnwidth]{./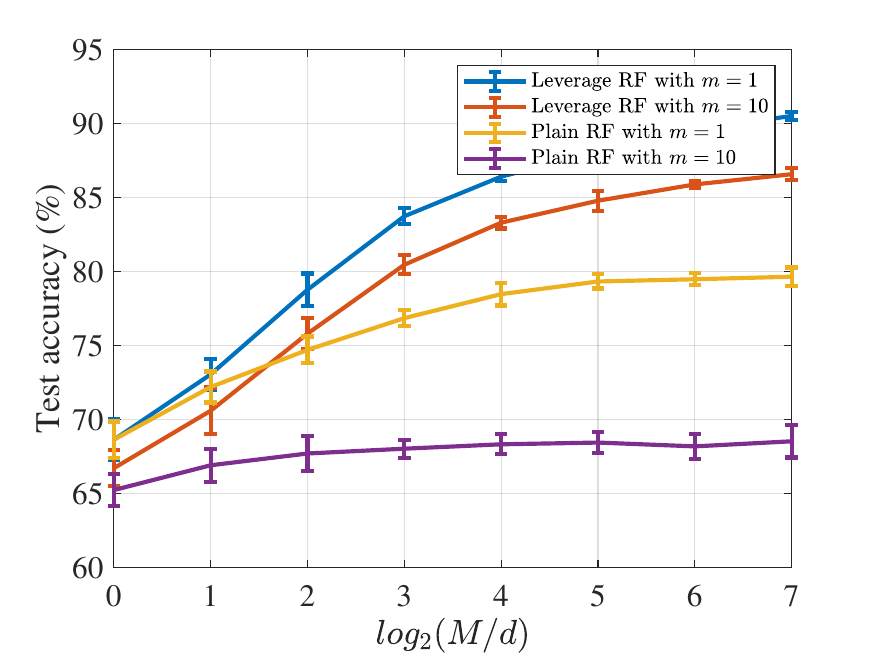}
    }
    \subfigure[Generating time]{
        \centering
        \includegraphics[width=0.3\columnwidth]{./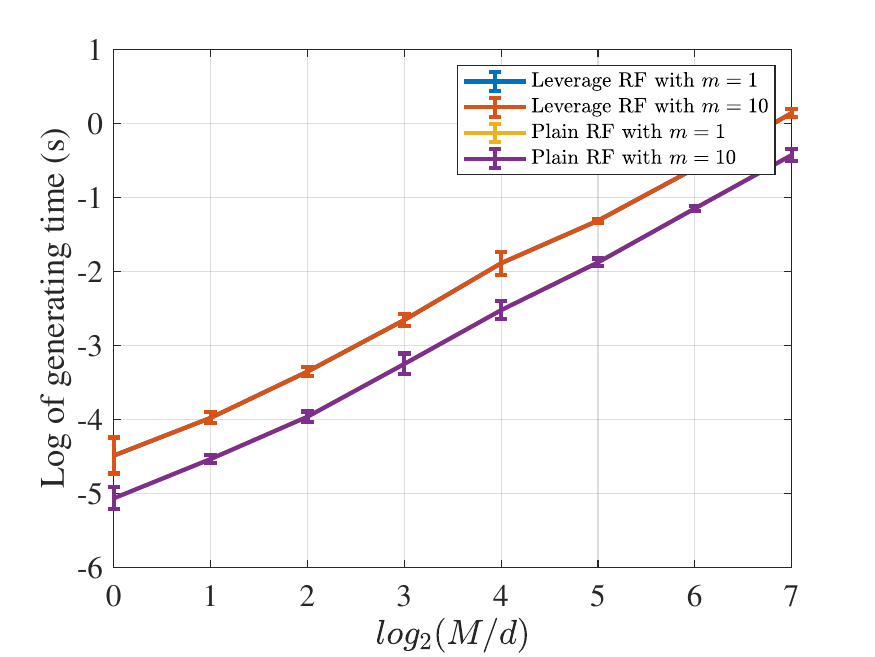}
    }
    \subfigure[Training time]{
        \centering
        \includegraphics[width=0.3\columnwidth]{./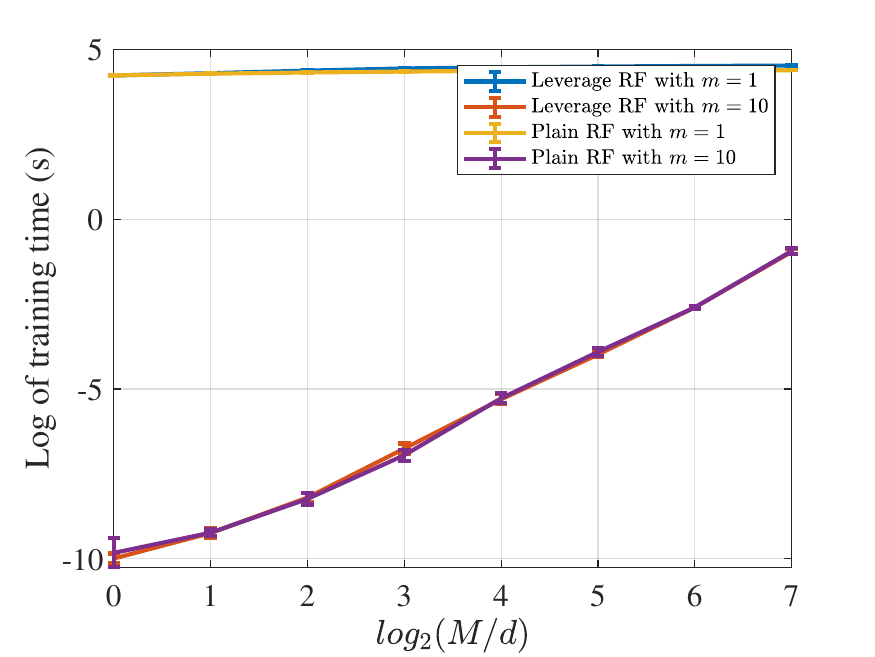}
    }
    \caption{Comparison of random feature generating algorithms on test accuracy (a), time cost for generating random features (b), and time cost for training \texttt{DKRR-RF} (c) versus the number of random features $M$ on the \textit{EGG} dataset.}
    \label{fig.leverage_rf}
\end{figure*}

\begin{figure*}[t]
    \centering
    \subfigure[Test accuracy]{
        \centering
        \includegraphics[width=0.3\columnwidth]{./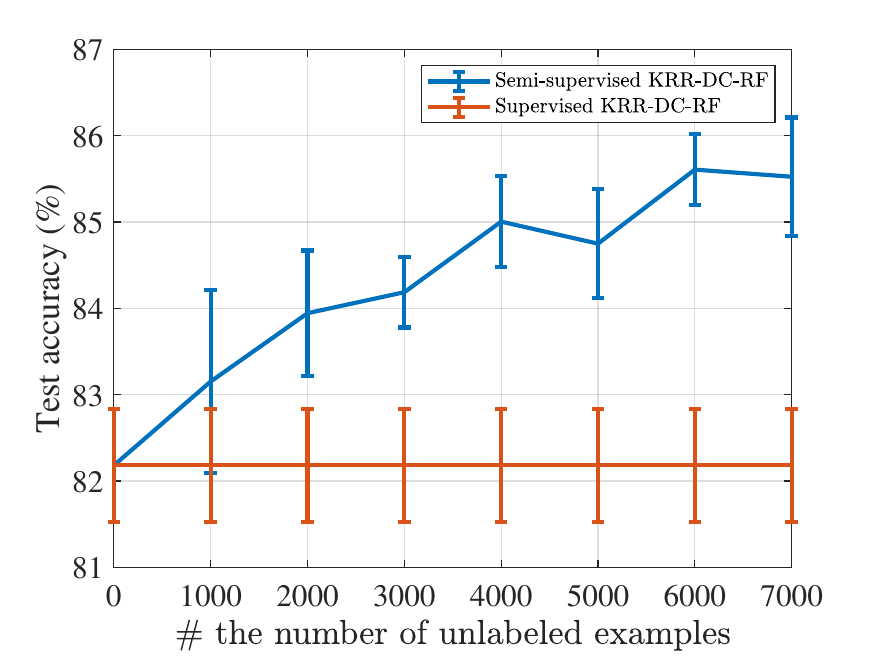}
    }
    \subfigure[Generating time]{
        \centering
        \includegraphics[width=0.3\columnwidth]{./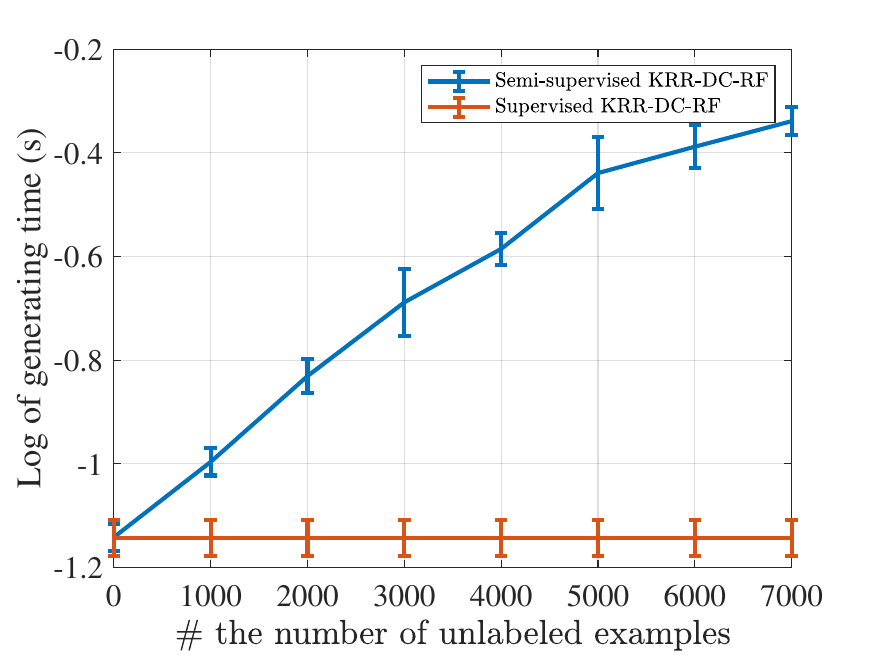}
    }
    \subfigure[Training time]{
        \centering
        \includegraphics[width=0.3\columnwidth]{./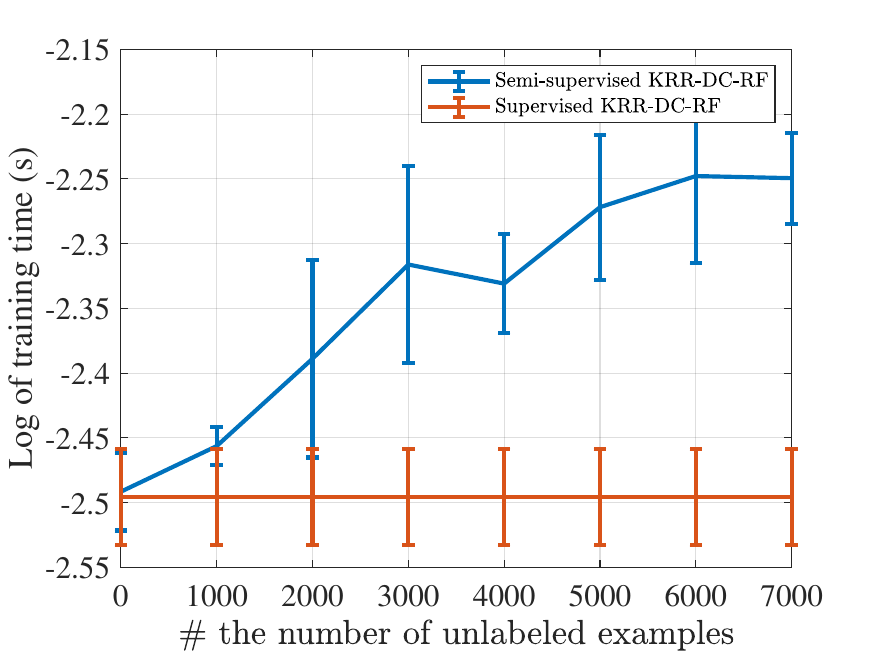}
    }
    \caption{Comparison of semi-supervised/supervised algorithms on test accuracy (a), time cost for generating random features (b), and time cost for training \texttt{DKRR-RF} (c) versus the number of unlabeled examples on the \textit{EGG}$^*$ dataset.}
    \label{fig.ssl_unlabeled}
\end{figure*}

\begin{figure*}[t]
    \centering
    \subfigure[Test accuracy]{
        \centering
        \includegraphics[width=0.3\columnwidth]{./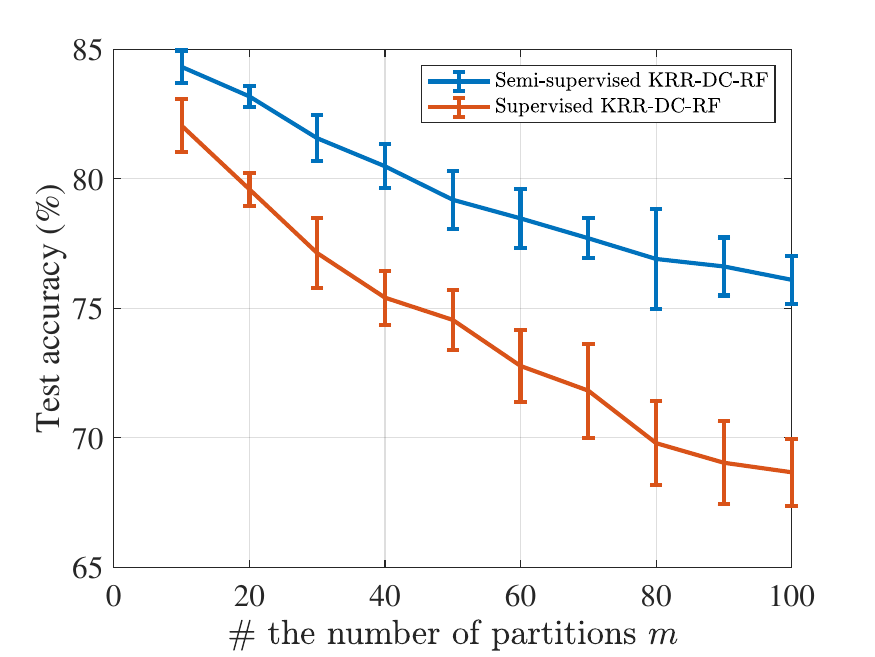}
    }
    \subfigure[Generating time]{
        \centering
        \includegraphics[width=0.3\columnwidth]{./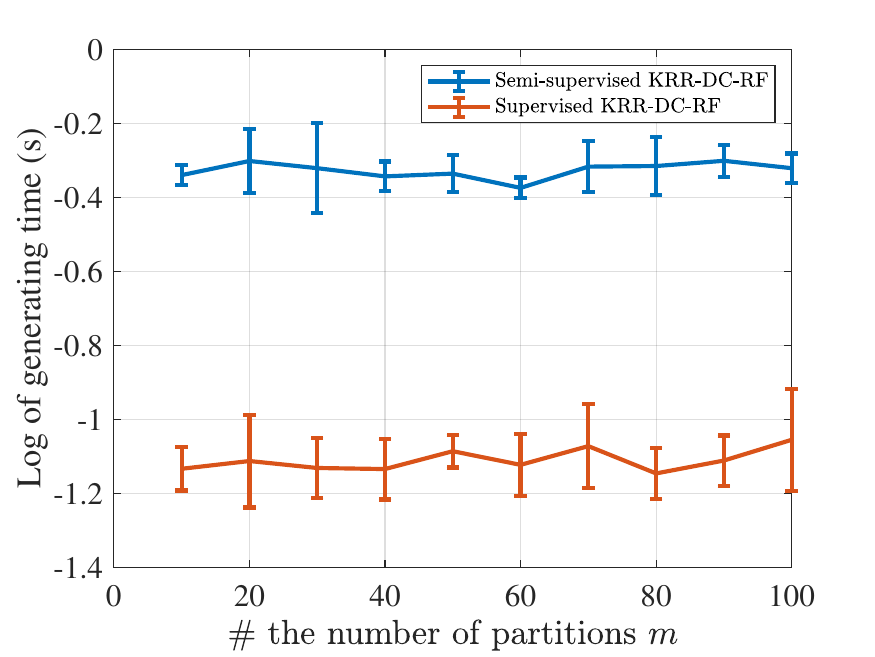}
    }
    \subfigure[Training time]{
        \centering
        \includegraphics[width=0.3\columnwidth]{./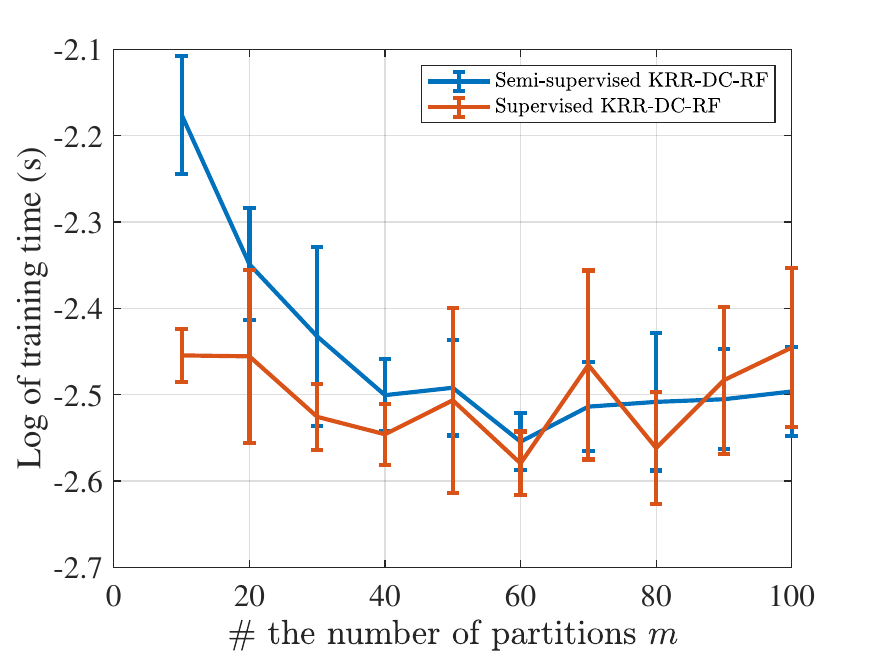}
    }
    \caption{Comparison of semi-supervised/supervised algorithms on test accuracy (a), time cost for generating random features (b), and time cost for training \texttt{DKRR-RF} (c) versus the number of unlabeled examples on the \textit{EGG}$^*$ dataset.}
    \label{fig.ssl_m}
\end{figure*}

\subsection{Influence of data-dependent random features (for Theorem \ref{thm.data-dependent})}

Inspired by the leverage weighted random Fourier features \cite{avron2017faster,li2019towards,liu2020random}, we proposed the leverage weighted random features (not just for shift-invariant kernels).
Based on \eqref{eq.random-features}, the data-dependent random features are defined as
\begin{align}
    \label{eq.leverage_rf}
    \varphi(\xx) = \frac{1}{\sqrt{M}} \left[\frac{p(\omega_1)}{q(\omega_1)}\psi(\xx, \omega_1), \cdots, \frac{p(\omega_M)}{q(\omega_M)}\psi(\xx, \omega_M)\right]^\top,
\end{align}
where $q(\ww_i) = \tilde{l}_{\lambda}(\ww_i) / D_{\mathbf{K}}^\lambda$.
Using the ideal matrix $\yy\yy^\top$ to replace the kernel matrix, we obtain the following leverage score function
\begin{align*}
    l_{\lambda}(\ww_i) = p(\ww_i) \mathbf{z}_{\ww_i}({\boldsymbol X})^\top \left(\frac{1}{n^2\lambda} \Big(\yy\yy^\top + n \mathbf{I}\Big)\right) \mathbf{z}_{\ww_i}({\boldsymbol X}),
\end{align*}
where $\mathbf{z}_{\ww_i}({\boldsymbol X}) = 1/\sqrt{L}\big[\psi(\xx_1, \ww_i), \cdots, \psi(\xx_n, \ww_i)\big]^\top$ and $L \geq M$.
Removing the data-independent terms, there holds
\begin{align*}
    \tilde{l}_{\lambda}(\ww_i) = p(\ww_i) \mathbf{z}_{\ww_i}({\boldsymbol X})^\top \yy\yy^\top \mathbf{z}_{\ww_i}({\boldsymbol X}) 
    = p(\ww_i) [\yy^\top \mathbf{z}_{\ww_i}({\boldsymbol X})]^2 
\end{align*}
and 
\begin{align*}
    D_{\mathbf{K}}^\lambda = \int_{\mathbb{R}^d} \tilde{l}_{\lambda}(\ww) d\ww = \text{Tr} \left[\yy\yy^\top\mathbf{K}\right] \approx \sum_{i=1}^L [\yy^\top \mathbf{z}_{\ww_i}({\boldsymbol X})]^2.
\end{align*}
The time complexity of generating data-dependent random features is $\mathcal{O}(NM^2)$ on a global machine, while it can be further reduced by computing in local machines and as a part of data preprocessing.
Then, we re-sample $M$ features from $\{\ww\}_{i=1}^L$ using the multinomial distribution given by $p(\ww_i) / q(\ww_i) = [\yy^\top \mathbf{z}_{\ww_i}({\boldsymbol X})]^2 /\sum_{i=1}^M [\yy^\top \mathbf{z}_{\ww_i}({\boldsymbol X})]^2$.
Then, using \eqref{eq.leverage_rf}, we compute the data-dependent random features.

\begin{table}[t]
    \centering
    \renewcommand\arraystretch{1.5}
    \begin{tabular}{llllll}
        \toprule
        datasets & $d$ & $\#$ training & $\#$ testing & $\sigma$ & $\lambda$\\ \hline
        EEG & $14$ & $7,490$ & $7,490$ & $1$ & $2^{-4}$\\ \hline
        EEG$^*$ & $14$ & $3,000(7,000^*)$ & $4,980$ & $1$ & $2^{-4}$\\ \hline
        covtype & $54$ & $250,000$ & $50,000$ & $2$  & $2^{-6}$\\ \hline
        SUSY & $18$ & $250,000$ & $50,000$ & $2^{2.5}$ &$2$\\ \hline
        HIGGS & $28$ & $250,000$ & $50,000$ & $2^3$ &$2^{-6}$\\ 
        \bottomrule
    \end{tabular}
    \caption{Datasets statistics. EEG$^*$ indicates $3000$ examples are used as labeled examples, $7000$ examples are used as unlabeled ones and the other $4980$ examples as the test data.}
    \label{tab.dataset}
\end{table}

To validate Theorem \ref{thm.data-dependent}, we compare the empirical performance of the following methods:
\begin{itemize}[leftmargin=*]
    \item Leverage RF with $m=1$: the proposed approximate leverage weighted random features \eqref{eq.leverage_rf} without distributed learning, similar to \cite{liu2020random}.
    \item Leverage RF with $m=10$: the proposed approximate leverage weighted random features \eqref{eq.leverage_rf} with $10$ partitions, a.k.a. \texttt{DKRR-RF} in Theorem \ref{thm.data-dependent}.
    \item Plain RF with $m=1$: the exact random features with Monte Carlo sampling \eqref{eq.random-features}, which is KRR-RF given in \cite{rudi2017generalization}.
    \item Plain RF with $m=10$: the exact random features with Monte Carlo sampling \eqref{eq.random-features} with $10$ partitions, namely \texttt{DKRR-RF} defined in Theorem \ref{thm.refined-results}.
\end{itemize}
In terms of different values of $M$, we perform different random features generating algorithms on the \textit{EGG} dataset to evaluate the test accuracies, time costs for generating random features, and time costs for $10$ trials.
Figure \ref{fig.leverage_rf} reports the mean and one standard deviation of test accuracies, time costs for generating random features, and time costs for training versus different settings of $M=d \times 2^{\{0, \cdots, 7\}}$.
We use Gaussian kernel $K(\xx, \xx') = \exp(-{\|\xx - \xx'\|^2}/{2\sigma^2})$ in experiments and the corresponding probability density function $p(\ww) = \mathcal{N}(0, 1/\sigma^2)$.
The kernel parameter $\sigma$ and the regularity parameter $\lambda$ are tuned via $5$-folds cross-validation over grids of $2^{\{-5, -4.5, \cdots, 5\}}$ and $2^{\{-7, -6, \cdots, 3\}}$.
The statistical information of the dataset and hyperparameter settings are reported in Table \ref{tab.dataset}.
From Figure \ref{fig.leverage_rf}, we find that: 
\begin{itemize}
    \item [1)] Centralized learning with data-dependent random features (blue) achieves the best accuracy but also leads to the highest computational costs for training the model on a centralized machine, while the generating times for centralized learning and distributed learning are the same.
    \item [2)] With a slight increase in random features generating time in Figure \ref{fig.leverage_rf} (b), two kinds of data-dependent approaches (blue and red ones) brings significant improvements on the classification accuracy as the number of random features increases in Figure \ref{fig.leverage_rf} (a), which validates the effectiveness of data-dependent random features.
    \item [3)] The use of data-dependent features generating approaches does not sacrifice too much computational efficiency as shown in Figure \ref{fig.leverage_rf} (b).
    Meanwhile, in Figure \ref{fig.leverage_rf} (c), distributed learning dramatically improves training efficiency.
    \item [4)]
    Consuming a little bit more generating time and similar training time, data-dependent \texttt{DKRR-RF} (red) markedly surpasses data-independent \texttt{DKRR-RF} (purple), which reveals the superiority of Theorem \ref{thm.data-dependent} than Theorem \ref{thm.refined-results}.
\end{itemize}

Overall, data-dependent \texttt{DKRR-RF} achieves a good tradeoff on accuracy and efficiency, which coincides with the theoretical findings in Theorem \ref{thm.data-dependent}.

\subsection{Influence of Unlabeled Data (For Theorem \ref{thm.unlabeled-explicit})}
To validate Theorem \ref{thm.unlabeled-explicit}, we split the EGG dataset into three parts: $3000$ examples as the labeled training data, $7000$ ones as the unlabeled training data, and $4980$ examples as the test data, which is illustrated in Table \ref{tab.dataset}.
There are two compared methods:
\begin{itemize}[leftmargin=*]
    \item Semi-supervised \texttt{DKRR-RF} (defined in Theorem \ref{thm.unlabeled-explicit}) is constructed as \eqref{f.global-sDKRR-RF}, where the unlabeled examples are marked as zeros.
    \item Supervised \texttt{DKRR-RF} (defined in Theorem \ref{thm.data-dependent}) only uses the labeled samples.
\end{itemize}

In the following experiments, we fix the labeled sample size $N=3000$, the number of random features $M = 1,000$ and the number of partitions $m = 10$.
In Figure \ref{fig.ssl_unlabeled}, we perform the compared methods across $10$ trials to plot the mean and one standard deviation of test accuracies, time costs for random feature generating, and times costs for training under varying unlabeled samples size $N^* - N \in \{0, 1000, \cdots, 7000\}$.
Figure \ref{fig.ssl_unlabeled} illustrates that:
\begin{itemize}
    \item[1)] The use of additional unlabeled samples improves the empirical performance of DKRR-RF. In other words, we can increase the number of partitions without losing accuracy by using additional unlabeled samples.
    It is consistent with the theoretical findings in Theorem \ref{thm.unlabeled-explicit} that additional unlabeled examples can relax the restriction on $m$.
    \item[2)] The increase of the number of unlabeled examples aggravates the computational burden. 
    It is worthy of balancing the accuracy gains and the computational costs in terms of different sizes of unlabeled data. 
    Generating time is larger than training time, thus generating time dominates in the time-consuming.
\end{itemize}

In Figure \ref{fig.ssl_m}, we fixed unlabeled sample size as $N^*-N = 7000$ and perform semi-supervised/supervised methods on  the different number of partitions $m \in \{10, 20, \cdots, 100\}$.
Figure \ref{fig.ssl_m} reports the mean and one standard deviation of test accuracies and training times under different partitions, which shows:
\begin{itemize}
    \item[1)] Both two test accuracies decrease as the number of partitions $m$ increases, and the accuracy gap between semi-supervised \texttt{DKRR-RF} and supervised \texttt{DKRR-RF} becomes larger and larger.
    \item[2)] Both the training time drops as the number of partitions increases, but there are no more significant computational gains when the number of partitions is greater than $50$.
    Both the generating times of semi-supervised \texttt{DKRR-RF} and supervised \texttt{DKRR-RF} is almost stationary and plays a dominant role.
    Semi-supervised \texttt{DKRR-RF} costs much more time for generating data-dependent random features than supervised  \texttt{DKRR-RF}.
    \item[3)] Semi-supervised \texttt{DKRR-RF} (Theorem \ref{thm.unlabeled-explicit}) always provides better empirical performance than supervised \texttt{DKRR-RF} (Theorem \ref{thm.data-dependent}), but also the training times of them are similar when $m \geq 50$.
\end{itemize}
Therefore, semi-supervised \texttt{DKRR-RF} achieves a good balance between the test accuracy and the training time at $m=50$.

\subsection{Large-scale Real Data}
\begin{figure}[t]
    \centering
    \subfigure{
        \centering
        \includegraphics[width=0.48\columnwidth]{./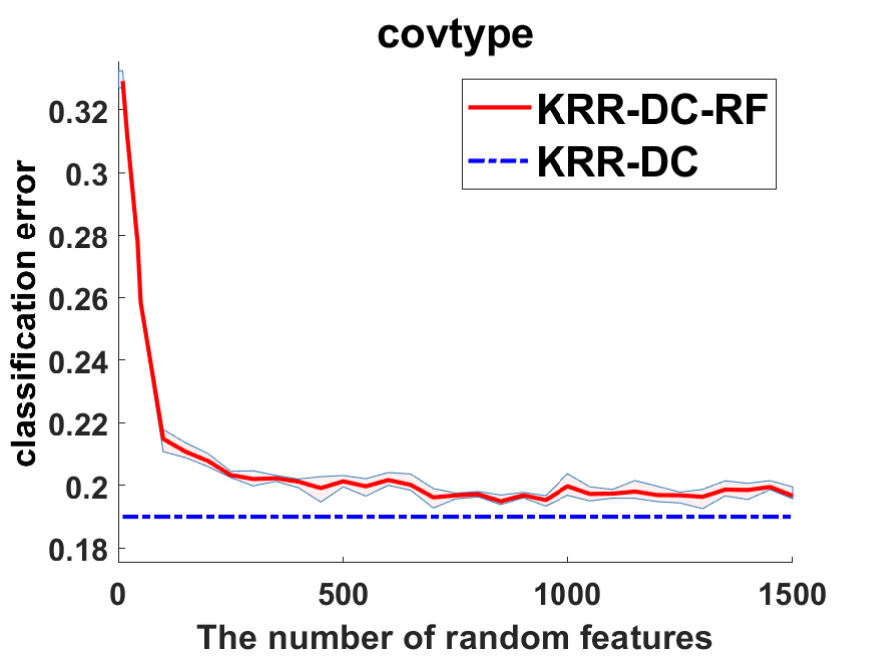}}
    \subfigure{
        \centering
        \includegraphics[width=0.48\columnwidth]{./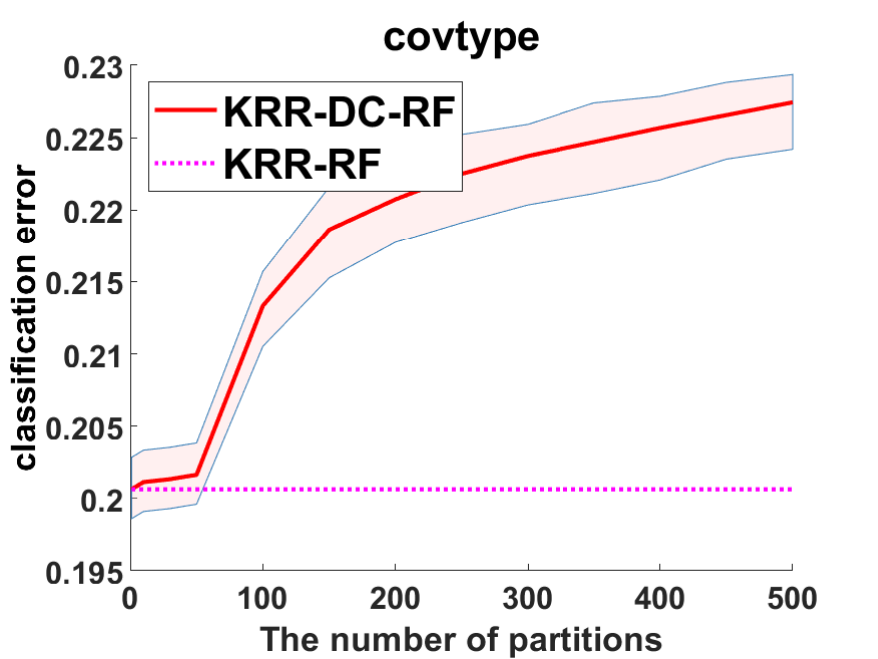}}
    \caption{Classification error of covtype dataset in terms of different number of random features (left) and different number of partitions (right).}
    \label{figure.exp1.covtype} 
\end{figure}

As listed in Table \ref{tab.dataset}, we study the empirical performance of \texttt{DKRR-RF} algorithm on three large-scale binary classification datasets, including covtype \footnote{https://archive.ics.uci.edu/ml/datasets/covertype} and SUSY \footnote{https://archive.ics.uci.edu/ml/datasets/susy} and HIGGS \footnote{https://archive.ics.uci.edu/ml/datasets/higgs}.
For the sake of comparison, we random sampled $N = 2.5\times10^5$ data points as the training data and $5 \times 10^4$ data points as the test data.
We use random Fourier features \cite{rahimi2007random} to approximate Gaussian kernel $K(\xx, \xx') = \exp(-{\|\xx - \xx'\|^2}/{2\sigma^2})$.
Random Fourier features are in the form $\psi(\xx, \omega) = \cos(\omega^T\xx + b)$, where $\omega$ is drawn from the corresponding Gaussian distribution and $b$ is drawn from uniform distribution $[0, 2\pi]$.
In the following experiments, we tune parameters $\sigma$ and $\lambda$ via $5$-folds cross-validation over grids of $2^{\{-5, -4.5, \cdots, 5\}}$ and $2^{\{-7, -6, \cdots, 3\}}$ respectively for each dataset, and report average errors over 10 repetitions.

The difficulties of those tasks are unknown ($r$ and $\gamma$ are unknown), such that for each dataset, we evaluate the classification errors in two cases:
\begin{itemize}
    \item[1)] For fixed $m=20$ and different $M \in [10, 1500]$, we compare the empirical performance of \texttt{DKRR-RF} and KRR-DC \cite{guo2017learning}.
    \item[2)] For fixed $M=500$ and different $m \in [1, 500]$, we compare the empirical performance of \texttt{DKRR-RF} and KRR-RF \cite{rudi2017generalization}.
\end{itemize}

To explore how the number of random features affects the classification accuracy, we fix the number of partitions as $20$ and vary the number of RFs.
As shown in left plots of Figures \ref{figure.exp1.covtype}, \ref{figure.exp1.SUSY}, \ref{figure.exp1.HIGGS}, when the number of RFs is small, classification errors of \texttt{DKRR-RF} decrease dramatically as the number of RFs increases.
However, when the number of features is more than certain thresholds, classification errors of \texttt{DKRR-RF} converge at some rate near the classification error of KRR-DC.
The certain threshold is near $300$ for covtype, $100$ for SUSY, and $600$ for HIGGS.
According to Theorem \ref{thm.unlabeled-explicit}, the number of random features is around $M \gtrsim N^\frac{1}{2r+\gamma}$, thus smaller thresholds represent smaller $1/ (2r+\gamma)$ and lead to higher computational efficiency.

\begin{figure}[t]
    \centering
    \subfigure{
        \centering
        \includegraphics[width=0.48\columnwidth]{./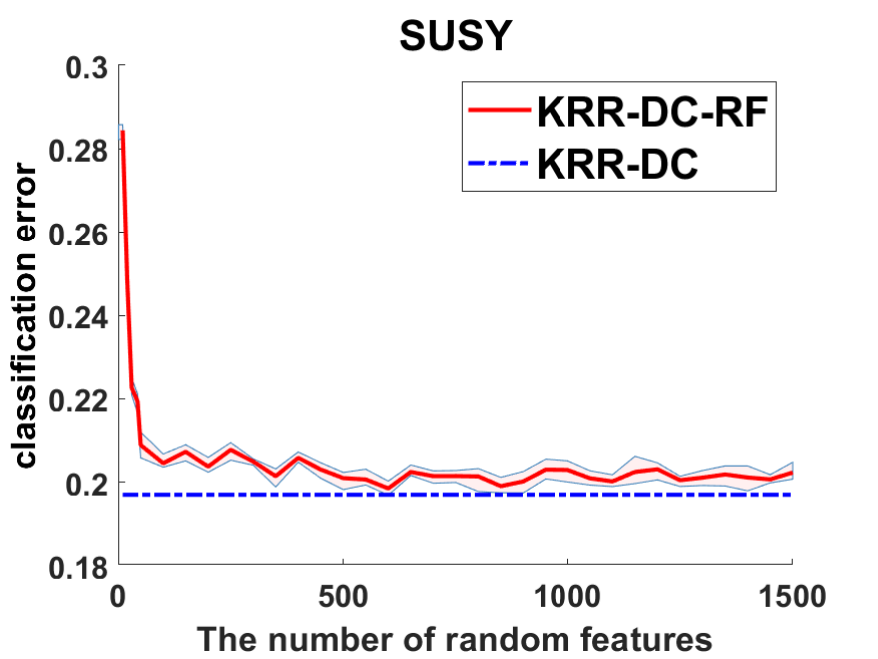}}
    \subfigure{
        \centering
        \includegraphics[width=0.48\columnwidth]{./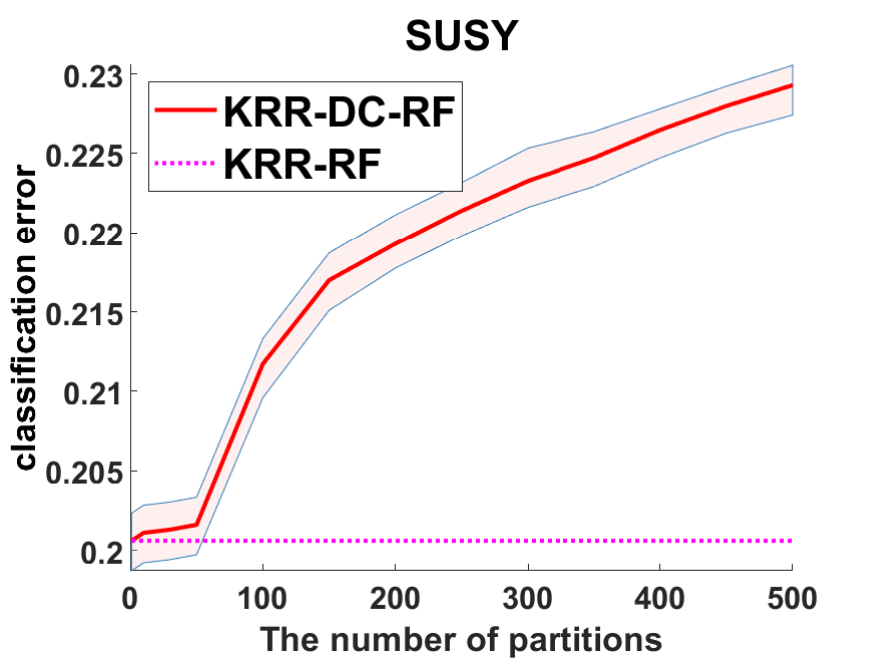}}
    \caption{Classification error of SUSY dataset in terms of different number of random features (left) and different number of partitions (right).}
    \label{figure.exp1.SUSY} 
\end{figure}

\begin{figure}[t]
    \centering
    \subfigure{
        \centering
        \includegraphics[width=0.48\columnwidth]{./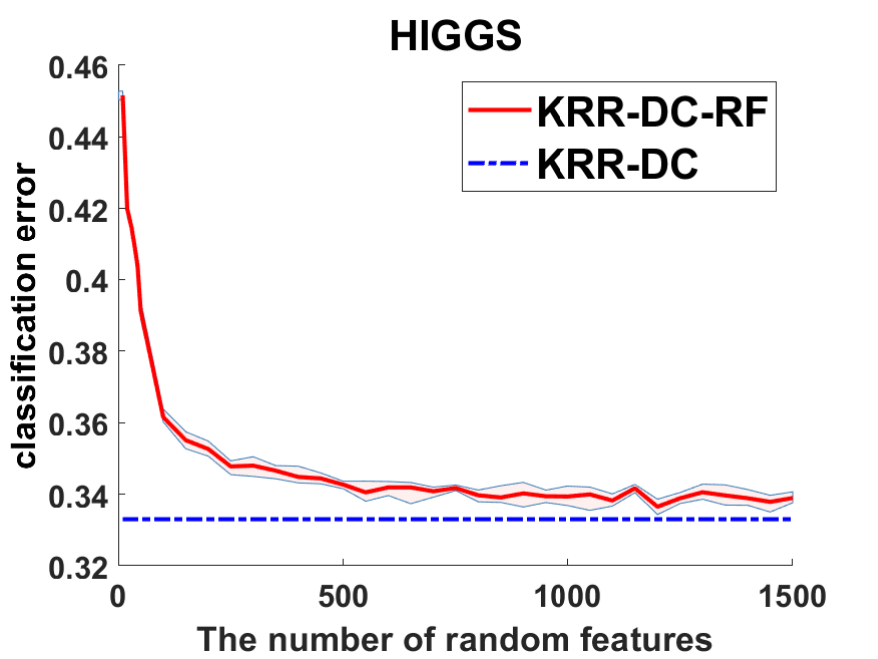}}
    \subfigure{
        \centering
        \includegraphics[width=0.48\columnwidth]{./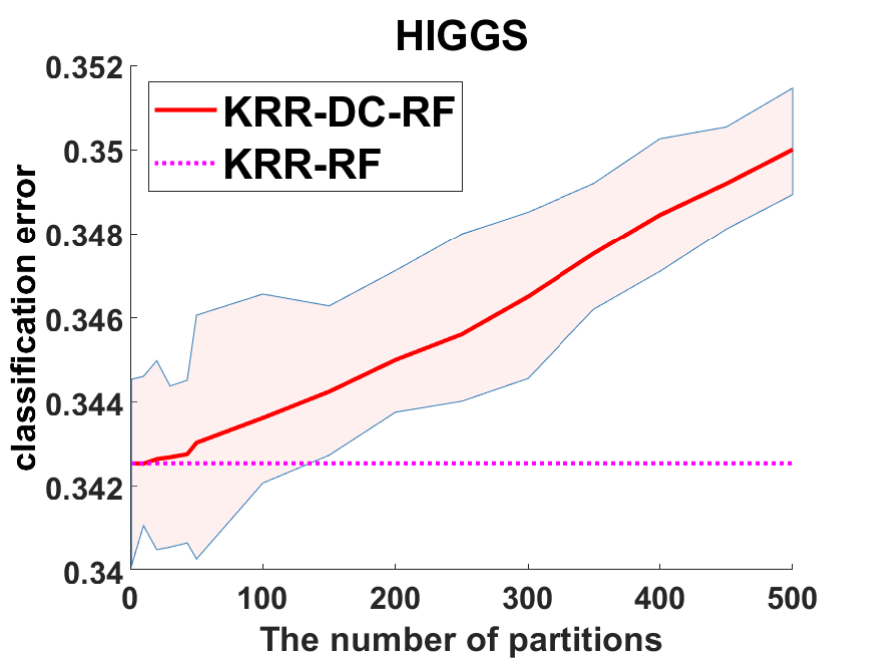}}
    \caption{Classification error of HIGGS dataset in terms of different number of random features (left) and different number of partitions (right).}
    \label{figure.exp1.HIGGS} 
\end{figure}

To study the influence of partitions on the accuracy, we fix the number of RFs as $M=500$ and increase the number of partitions $m$.
As demonstrated in the right plots of Figures \ref{figure.exp1.covtype}, \ref{figure.exp1.SUSY}, \ref{figure.exp1.HIGGS}, when the numbers of partitions are less than certain thresholds, \texttt{DKRR-RF} provides preferable classification accuracy that is close to the accuracy of KRR-RF. After that, errors increase quickly when the number of partitions increases.
The threshold is near $m=50$ for covtype, $m=60$ for SUSY and  $m=40$ for HIGGS.
According to Theorem \ref{thm.unlabeled-explicit}, the number of partitions is near $\mathcal{O}(NN^\frac{\gamma-1}{2r+\gamma})$, thus larger the number of partitions $m$ leads to larger $\frac{\gamma-1}{2r+\gamma}$ and less computational costs.

Indeed, when $M$ and $m$ are settled as the corresponding thresholds for each data, \texttt{DKRR-RF} achieves the optimal tradeoff of empirical performance and computational efficiency, for example $(M=300, m=50)$ for covtype, $(M=100, m=60)$ for SUSY, and $(M=600, m=40)$ for HIGGS.
We think that the optimal learning rates can still be achieved when the number of partitions is smaller than corresponding thresholds and the number of random features is larger than corresponding thresholds.
Nevertheless, more partitions or fewer random features break the optimal statistical properties of \texttt{DKRR-RF}, and thus the performance drops very fast.
Comparing the thresholds for $M$ and $m$ on those tasks, we find the following relationships on trainability: SUSY $>$ covtype $>$ HIGGS, which means the dataset SUSY is easier to obtain a good accuracy-efficiency tradeoff for DKRR-RF.

\section{Conclusion}
This paper explores the generalization performance of kernel ridge regression with two commonly used efficient large-scale techniques: divide-and-conquer and random features.
We first present a general result with the optimal learning rates under standard assumptions. We then refine the theoretical results with more partitions and applicability in the non-attainable case.
Further, we reduce the number of random features by generating features in a data-dependent manner.
Finally, we present the theoretical results that substantially relax the constraint on the number of partitions with extra unlabeled data, which apply to both the attainable case and non-attainable case.
The proposed optimal theoretical guarantees are state-of-the-art in the theoretical analysis for KRR approaches.
With extensive experiments on both simulated and real-world data, we validate our theoretical findings with experimental results.

This paper can be extended in several ways:
(a) the combination with gradient algorithms such as multi-pass SGD \cite{lin2018optimaldcsgd,lin2020optimal} and preconditioned conjugate gradient \cite{avron2017faster} to further reduce the time complexity.
(b) using asynchronous distributed methods or a few of communications \cite{lin2020distributed,liu2021effective} instead of one-shot approach to alleviate the saturation phenomenon when $r \geq 1$.
\section*{acknowledgment} 
This work was supported in part by the Excellent Talents Program of Institute of Information Engineering, CAS, the Special Research Assistant Project of CAS, the Beijing Outstanding Young Scientist Program (No. BJJWZYJH012019100020098), and National Natural Science Foundation of China (No. 62076234, No. 62106257).

\bibliographystyle{unsrt}
\bibliography{all}
\newpage
\onecolumn
\appendix
\section{Proofs}

We denote $\nor{\cdot}$ the operatorial norm, specifically the norm $\|\cdot\|$ to represent the $\Ltwo$ norm $\|\cdot\|_\rho$ in the estimate of error terms.  
Moreover, we denote with $Q_\la$ the operator $Q + \la I$, where $Q$ is a bounded self-adjoint linear operator, $\la \in \R$ and $I$ the identity operator, so for example $\tCnl := (\tCn + \la I)$, $C_{M\la} := (C_M + \la I)$, $L_{M, \lambda} := (L_M + \lambda I)$ and $L_\lambda := (L + \lambda I)$, where operators $\tCn, C_M, L_M, L$ are defined in Definition \ref{def.ops-kernel} and Definition \ref{def.ops-rf}.
The estimates of error bounds are based on local estimators \eqref{f.local-y} and \eqref{f.local-frho} rather than global ones, such that they are associated with the number of local samples $n$, where $n = N/m$.

\subsection{Definitions of Linear Operators}Since KRR has closed-form solutions, we represent the intermediate estimators $\widehat{f}_{D,\lambda}^M, \widetilde{f}_{D,\lambda}^M, {f}_{\lambda}^M, {f}_{\lambda}$ in error decomposition by the redirection operators and their adjoint operators.
	In this part, we first provide useful linear operators associated with kernel $K$ (Definition \ref{def.ops-kernel}) and with random features $\phi_M$ (Definition \ref{def.ops-rf}), respectively.
	To bound the \textit{excess risk} $\mathcal{E}(\widehat{f}_{D,\lambda}^M) - \mathcal{E}(f_\rho)$, we present the closed-form solutions of the estimators used in Lemma \ref{lem.full-decomposition} based on those operators, which can be estimated by the difference between integral operator $L$ and random features based covariance operator $\widehat{C}_M$.
	
	To clearly state the relationships among estimators in Lemma \ref{lem.full-decomposition}, we introduce linear operators (both expected and empirical) associated with the RKHS $\mathcal{H}$ induced by the kernel $K$ and the feature space $\mathbb{R}^M$ induced by the random features $\phi_M$.
	\begin{definition}[Operators with kernel $K$]\label{def.ops-kernel}
		For any $g \in \Ltwo$, $\beta \in \mathcal{H}$ and $\alpha \in \CC^n$, we have
		\begin{itemize}
			\item $\SH: \HH \to \Ltwo, \quad (\SH \beta)(\cdot) = \langle\beta,\phi(\cdot)\rangle$.
			\item $\HSH: \HH \to \CC^n, \quad \HSH \beta = \frac{1}{\sqrt{n}} \big(\langle\beta,\phi(\xx_i)\rangle\big)_{i=1}^n \in \CC^n$.
			\item $\SH^*: \Ltwo \to \HH,  \quad \SH^*g = \int_X \phi(\xx) g(\xx)d\rhox(\xx)$.
			\item $\HSH^*: \CC^n \to \HH, \quad \HSH^* \alpha = \frac{1}{\sqrt{n}}\sum_{i=1}^n \phi(\xx_i) \alpha_i \in \HH$.
			\item $L: \Ltwo \to \Ltwo, \, L=\SH\SH^*, \, \text{such that} ~~ (L g)(\cdot) = \int_\X K(\cdot,\xx)g(\xx)d\rhox(\xx)$.
			\item $\mathbf{K}: \CC^n \to \CC^n, \quad \mathbf{K}=\HSH\HSH^*, \quad \text{such that} ~~ \mathbf{K}=\frac{1}{n}\big(K(\xx_i, \xx_j)\big)_{i, j=1}^n$.
			\item $C: \HH \to \HH, \quad C=\SH^*\SH, \quad \text{such that} ~~ C \beta = \int_X \langle \beta, \phi(\xx) \rangle \phi(\xx) d\rhox(\xx)$.
			\item $\widehat{C}: \HH \to \HH, \quad \widehat{C}=\HSH^*\HSH, \quad \text{such that} ~~ \widehat{C} \beta = \frac{1}{n}\sum_{i=1}^n \langle \beta, \phi(\xx_i) \rangle \phi(\xx_i)$.
		\end{itemize}
	\end{definition}
	Here, we denote $\SH$ the inclusion operator and $\HSH$ the sampling operator, while $\SH^*, \HSH^*$ are their adjoint operators.
	Note that $C:\mathcal{H} \to \mathcal{H}$ is the covariance operator given by $\SH^*\SH$, and the integral operator $L:\Ltwo \to \Ltwo$ given by $\SH\SH^*$.
	The kernel matrix $\mathbf{K}$ and the covariance matrix $\widehat{C}$ are the empirical counterparts of the integral operator $L$ and the covariance operator $C$, respectively.
	Using Singular Value Decomposition shows that $L$ and $C$ have the same eigenvalues, and the corresponding eigenvectors are closely related \cite{rosasco2010learning}.
	A similar relationship holds for the kernel matrix $\mathbf{K}$ and the covariance matrix $\widehat{C}$.
	Those kernels-related operators are widely used in the proof of optimal learning theory for standard KRR.
	Using Assumption \ref{asm.rf}, the integral operator $L$ and the covariance operator $C$ are positive trace class operators (and hence compact) and bounded by $\|L\|=\|C\|\leq \kappa^2.$
	For any function $f \in \mathcal{H}$, the estimator $f \in \Ltwo$ is obtained by $f(\cdot)=\langle f, \phi(\cdot)\rangle_\mathcal{H}=\SH f$.
	Thus, the RKHS norm can be related to the $\Ltwo$-norm by $C^{1/2}$ \cite{bauer2007regularization}:
	\begin{align*}
		\|f\|_\rho=\|\SH f\|_\rho= \|C^{1/2}f\|_\mathcal{H}, \quad \forall f \in \mathcal{H}.
	\end{align*}

	\begin{definition}[Operators with random features]\label{def.ops-rf}
		For any $g \in \Ltwo$, $\beta \in \CC^M$, $\alpha \in \CC^n$ and $K_M(\xx, \xx')=\langle\phi_M(\xx), \phi_M(\xx')\rangle$, we have
		\begin{itemize}[leftmargin=*]
			\item $S_M: \CC^M \to \Ltwo, \quad (\SM \beta)(\cdot) = \langle\phi_M(\cdot), \beta\rangle$.
			\item $\tSn: \CC^M \to \CC^n, \quad \tSn \beta = \frac{1}{\sqrt{n}} \big(\langle \beta, \phi_M(\xx_i) \rangle\big)_{i=1}^n \in \CC^n$.
			\item $\tS^*: \Ltwo \to \CC^M,  \quad \tS^*g = \int_X \phi_M(\xx) g(\xx)d\rhox(\xx)$.
			\item $\tSn^*: \CC^n \to \CC^M, \quad \tSn^* \alpha = \frac{1}{\sqrt{n}}\sum_{i=1}^n \phi_M(\xx_i) \alpha_i \in \CC^M$.
			\item $L_M: \Ltwo \to \Ltwo, \, L_M=\SM\SM^*, \, \text{such that} ~ (L_M g)(\cdot) = \int_\X K_M(\cdot, \xx) g(\xx)d\rhox(\xx)$.
			\item $\mathbf{K}_M: \CC^n \to \CC^n, \, \mathbf{K}_M=\tSn\tSn^*, \, \text{such that} ~ \mathbf{K}_M=\frac{1}{n}\big( K_M(\xx_i, \xx_j)\big)_{i, j=1}^n$.
			\item $C_M: \CC^M \to \CC^M, \, C_M=\SM^*\SM, \, \text{such that} ~ C_M \beta = \int_X \langle \beta, \phi_M(\xx) \rangle \phi_M(\xx) d\rhox(\xx)$.
			\item $\widehat{C}_M: \CC^M \to \CC^M, \, \widehat{C}_M=\tSn^*\tSn, \, \text{such that} ~ \widehat{C}_M\beta = \frac{1}{n}\sum_{i=1}^n \langle \beta, \phi_M(\xx_i) \rangle \phi_M(\xx_i)$.
		\end{itemize}
	\end{definition}
	
	Similarly, we also define the redirection operators $\SM,\tSn$ and their adjoint operators $\SM^*,\tSn^*$.
	Using the random features $\phi_M$, random features based operators $\SM,\tSn,\SM^*$ and $\tSn^*$ are close to the kernel based operators $\SH,\HSH,\SH^*,\HSH^*$.
	$L_M$ and $C_M$ are the integral operator and the covariance operator defined by random features on $\Ltwo$ and $\mathbb{R}^M$, respectively.
	The kernel matrix $\mathbf{K}_M$ is give by the kernel $K_M(\xx, \xx')=\langle\phi_M(\xx), \phi_M(\xx')\rangle$ associated with random features $\phi_M: \mathcal{X} \to \mathbb{R}^M$.
	Random features use Monte Carlo sampling approximates the kernel with $M$ features $K(\xx, \xx') \approx \langle \phi_M(\xx), \phi_M(\xx')\rangle$, thus the operators $L, \mathbf{K}, C, \widehat{C}$ are the expectation counterparts of $L_M, \mathbf{K}_M, C_M, \widehat{C}_M$ in terms of kernel probability density $\pi$.
	
	In Figure \ref{fig.relationships_operators}, we discuss the relationships among operators given in Definition \ref{def.ops-kernel} and Definition \ref{def.ops-rf}, where $L, L_M$ are self-adjoint integral operators on $\Ltwo$ and $C, C_M$ are self-adjoint integral operators on RKHS $\HH$ and $\mathbb{R}^M$, respectively. 
	Operators $\widehat{C}, \widehat{C}_M, \mathbf{K}, \mathbf{K}_M$ are the corresponding empirical counterparts of $C, C_M, L, L_M$.
	To estimate the error terms in Lemma \ref{lem.full-decomposition}, we utilize those operators to represent the estimators $\widehat{f}_{D,\lambda}^M,\widetilde{f}_{D,\lambda}^M, f_{\lambda}^M, f_\lambda, f_\rho$ with closed-form solutions.
	As shown in Figure \ref{fig.relationships_operators}, we measure the \textit{excess risk} of \texttt{DKRR-RF} by estimating the difference between the covariance matrix $\widehat{C}_M$ and the integral operator $L$ following the approximation chain $\widehat{C}_M \to C_M \to L_M \to L$ .
	\begin{figure}
		\centering
		\includegraphics[width=0.8\linewidth]{./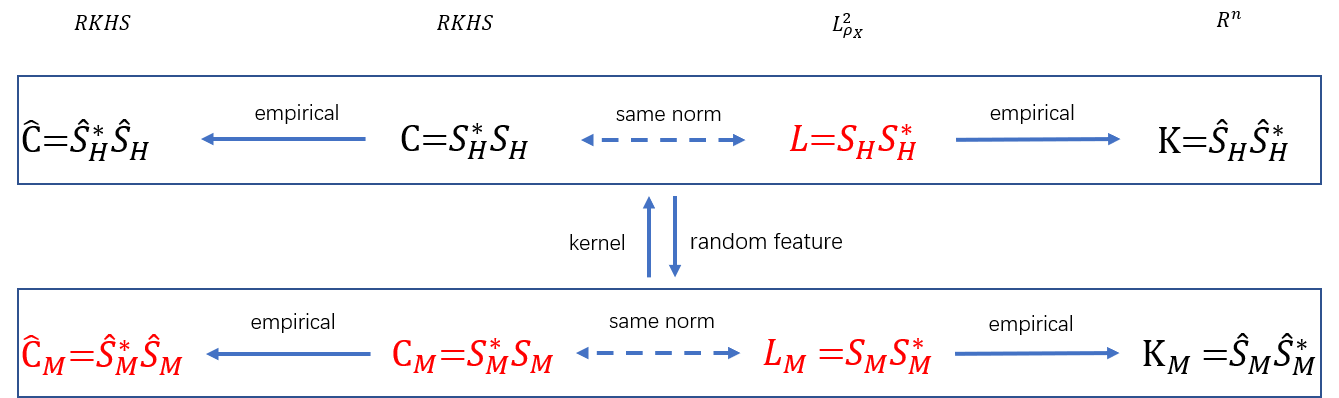}
		\caption{Relationships between linear operators defined by the kernel $K$ and random features $\phi_M$. Here, red marked operators indicate key approximation steps in the proof of the \textit{excess risk}: $\widehat{C}_M \to C_M \to L_M \to L$.}
		\label{fig.relationships_operators}
	\end{figure}
	\begin{remark}
		\label{rem.app1}
		Under Assumption \ref{asm.rf}, the integral operator $L$ is trace class \cite{caponnetto2007optimal} and $\tL, \tC, \tS, \tCn, \tSn$ are finite dimensional. Moreover we have that $\tL = \tS \tS^*$, $\tC = \tS^*\tS$ and $\tCn = \widehat{S}_M^*\tSn$. Finally $L, \tL, \tC, \tCn$ are self-adjoint and positive operators, with spectrum is $[0, \kappa^2]$.
	\end{remark}
	
	To represent the noise-free estimator \eqref{estimator.emp-noise-free}, we define a sampling operator $\bar{S}_M^*: \Ltwo \to \mathbb{R}^M$:
	$$\bar{S}_M^*g = \frac{1}{n}\sum_{i=1}^n \phi_M(\xx_i)g(\xx_i).$$

\subsection{Error decomposition}	
	To decompose the \textit{excess risk} of \texttt{DKRR-RF} $\mathcal{E}(\widehat{f}_{D,\lambda}^M) - \mathcal{E}(f_\rho)$ clearly, we provide some intermediate estimators $\widetilde{f}_{D_j,\lambda}^M(\xx) = \langle\widetilde{w},\phi_M(\xx)\rangle$, ${f}_{\lambda}^M(\xx)=\langle u,\phi_M(\xx)\rangle$, and ${f}_{\lambda}(\xx)=\langle v,\phi(\xx) \rangle$, where
	\begin{align}
		 &\widetilde{w} = \argmin_{w \in \CC^M} \left\{\frac{1}{n}\sum_{i=1}^{n} \big(\langle w, \phi_M(\xx_i)\rangle- f_\rho(\xx_i)\big)^2 + \lambda \|w\|^2\right\}, \label{estimator.emp-noise-free}   \\
		 & u = \argmin_{u \in \CC^M} \left\{\int_\mathcal{X} \big(\langle u, \phi_M(\xx)\rangle- f_\rho(\xx)\big)^2 d \rho_X(\xx)+ \lambda \|u\|^2\right\}, \label{estimator.expected-rf}                  \\
		 & v  = \argmin_{v \in \mathcal{H}} \left\{\int_\mathcal{X} \big(\langle v, \phi(\xx)\rangle_\mathcal{H} - f_\rho(\xx)\big)^2 d \rho_X(\xx)+ \lambda \|v\|_{\mathcal{H}}^2\right\}. \label{estimator.expected}
	\end{align}
	The local estimator $\widetilde{f}_{D_j,\lambda}^M$ in \eqref{estimator.emp-noise-free} is the noise-free version of $\widehat{f}_{D,\lambda}^M$, making up a global noise-free estimator $\widetilde{f}_{D,\lambda}^M = \frac{1}{m}\sum_{j=1}^m\widetilde{f}_{D_j,\lambda}^M$.
	The estimator ${f}_{\lambda}^M$ in \eqref{estimator.expected-rf} is a data-free version (expected version on $\rho$) of $\widehat{f}_{D,\lambda}^M$, which is still an approximation estimator by random features mapping $\phi_M:\mathcal{X} \to \CC^M$.
	The last one ${f}_{\lambda}$ in \eqref{estimator.expected} is the expected version of primal KRR with implicit feature mappings $\phi:\mathcal{X} \to \mathcal{H}$ associated with the kernel $K$ by $K(\xx, \xx') = \langle\phi(\xx), \phi(\xx')\rangle$.
	Using these estimators, we provide the following decomposition in the equality form of the \textit{excess risk} to further analyses the components of errors.

	\begin{lemma}
		\label{lem.estimator-definitions}
		Using operators defined in Definition \ref{def.ops-kernel} and Definition \ref{def.ops-rf}, the intermediate estimators $\widehat{f}_{D_j,\lambda}^M$ \eqref{estimator.emp-dc-rf}, $\widetilde{f}_{D_j,\lambda}^M$ \eqref{estimator.emp-noise-free}, $f_\lambda^M$ \eqref{estimator.expected-rf} and $f_\lambda$ \eqref{estimator.expected} admit the following closed-form solutions:
		\begin{align}
			\widehat{f}_{D_j,\lambda}^M   & = S_M (\widehat{C}_M + \lambda I)^{-1} \widehat{S}_M^* \yn,       \label{f.local-y} \\
			\widetilde{f}_{D_j,\lambda}^M & = S_M (\widehat{C}_M + \lambda I)^{-1} \bar{S}_M^* f_\rho. \label{f.local-frho}     \\
			f_{\lambda}^M                 & = L_M (L_M + \lambda I)^{-1} f_\rho, \label{f.expected-rf}                          \\
			f_{\lambda}                   & = L (L + \lambda I)^{-1} f_\rho.          \label{f.expected-erm}
		\end{align}
		Here, $\widehat{y}_n = \frac{1}{\sqrt{n}}[y_1, \cdots, y_n]^T$ represents the normalized labels of empirical samples.
	\end{lemma}

	\begin{proof}
		The objective of the local estimator is given in \eqref{eq.local-rf}.
		Using the representation theorem $\widehat{f}_{D_j,\lambda}^M(\xx) = \langle \widehat{\ww}_j, \phi_M(\xx) \rangle$, we let the derivative of the objective be zero and get the solution \eqref{estimator.emp-dc-rf} for the local estimator, which is
		\begin{align*}
			\widehat{f}_{D_j,\lambda}^M(\xx) = \langle \widehat{\ww}_j, \phi_M(\xx) \rangle, \qquad \widehat{\ww}_j=\big[\Phi_{M}^\top \Phi_{M} + \lambda I\big]^{-1}\Phi_{M}^\top\widehat{y},
		\end{align*}
	
		According to the definitions of operators in Definition \ref{def.ops-rf}, $\widehat{\ww}_j = (\tSn^*\tSn + \lambda I)^{-1} \tSn^* \widehat{y}_n$.
		Using the definitions of $S_M$ and $\widehat{C}_M$, we obtain 
		\begin{align*}
			\widehat{f}_{D_j,\lambda}^M   = S_M (\widehat{C}_M + \lambda I)^{-1} \widehat{S}_M^* \yn.
		\end{align*}
	
		According to the objective of $\widetilde{f}_{D_j,\lambda}^M$ \eqref{estimator.emp-noise-free}, we replace the noisy labels $\{y_1, y_2, \cdots, y_n\}$ with noise-free ones $\{\frho(\xx_1), \frho(\xx_2), \cdots, \frho(\xx_n)\}$.
		Meanwhile, using $\bar{S}_M^* f_\rho = \frac{1}{n} \sum_{i=1}^n \phi_M(\xx_i) \frho(\xx_i)$ instead of $\widehat{S}_M^* \yn = \frac{1}{n} \sum_{i=1}^n \phi_M(\xx_i) y_i$, it holds 
		\begin{align*}
			\widetilde{f}_{D_j,\lambda}^M = S_M (\widehat{C}_M + \lambda I)^{-1} \bar{S}_M^* f_\rho.
		\end{align*}
	
		Let the derivative of \eqref{estimator.expected-rf} be zero, and we can get $f^M_\lambda (\xx) = \langle u, \phi_M(\xx)\rangle$ with
		\begin{align*}
			u &= \left(\int_X \phi_M(\xx) \phi^\top_M(\xx) d \rho_X(\xx) + \lambda I \right)^{-1} \int_X \frho(\xx) \phi_M(\xx) d \rho_X(\xx) \\
			&= \left(C_M + \lambda I \right)^{-1} \tS^* \frho.
		\end{align*}
		Thus, it holds
		\begin{align*}
			f_{\lambda}^M &= S_M\left(C_M + \lambda I \right)^{-1} \tS^* \frho = S_M (S_M^* S_M + \lambda I)^{-1} S_M^* f_\rho \\
			&= S_M (S_M^* S_M + \lambda I)^{-1} S_M^* ( S_M S_M^* + \lambda I) (S_M S_M^* + \lambda I)^{-1} f_\rho \\
			&= S_M (S_M^* S_M + \lambda I)^{-1} (S_M^* S_M + \lambda I) S_M^* (S_M S_M^* + \lambda I)^{-1} f_\rho \\
			&= S_M S_M^* (S_M S_M^* + \lambda I)^{-1} f_\rho = L_M (L_M + \lambda I)^{-1} f_\rho.
		\end{align*}
	
		Similarly, using operators related the kernel $K$ in Definition \ref{def.ops-kernel}, we let the derivative of \eqref{estimator.expected} be zero, and obtain $f_\lambda (\xx) = \langle v, \phi_M(\xx)\rangle$ with
		\begin{align*}
			v &= \left(\int_X \phi(\xx) \phi^\top(\xx) d \rho_X(\xx) + \lambda I \right)^{-1} \int_X \frho(\xx) \phi(\xx) d \rho_X(\xx) \\
			&= \left(C + \lambda I \right)^{-1} \SH^* \frho.
		\end{align*}
		Thus, it holds
		\begin{align*}
			f_{\lambda} &= \SH\left(C + \lambda I \right)^{-1} \SH^* \frho = \SH (\SH^* \SH + \lambda I)^{-1} \SH^* f_\rho \\
			&= \SH (\SH^* \SH + \lambda I)^{-1} \SH^* (\SH \SH^* + \lambda I) (\SH \SH^* + \lambda I)^{-1} f_\rho \\
			&= \SH (\SH^* \SH + \lambda I)^{-1} (\SH^* \SH + \lambda I) \SH^* (\SH \SH^* + \lambda I)^{-1} f_\rho \\
			&= \SH \SH^* (\SH \SH^* + \lambda I)^{-1} f_\rho 
			= L (L + \lambda I)^{-1} f_\rho.
		\end{align*}
	\end{proof}
	
	We denote $\psi_\omega := \psi(\cdot, \omega)$ for any $\omega \in \Omega$. According to Assumption \ref{asm.rf} and the fact that $\rho$ is a finite measure, we have that $\psi_{\omega} \in \Ltwo$ almost surely.
	$\widehat{f}_{D_j,\lambda}^M$ is the linear combination of $\psi_{\omega_1}, \cdots, \psi_{\omega_M}$, such that $\widehat{f}_{D_j,\lambda}^M \in \Ltwo$ almost surely.
	Since $\widehat{f}_{D_j,\lambda}^M, f_\rho \in \Ltwo, \widehat{y}_n \in \mathbb{R}^n$ and using definitions of the operators, it holds that $\widetilde{f}_{D_j,\lambda}^M, f_{\lambda}^M, f_{\lambda}  \in \Ltwo$ and it is natural to estimate the differences of them in the $\Ltwo$-norm.
	
	\begin{remark}
		In some KRR theory literature, an estimator and its weight are defined as the same symbol. The excess risk measures the difference of the RKHS weight $f \in \mathcal{H}$ and the estimator $f_\rho \in \Ltwo$ (may not belong to the hypothesis space induced by the kernel $K$), which confuses the error decomposition of the \textit{excess risk}.
		In this paper, to cover the situation that $f_\rho$ is out of the hypothesis space, we measure the difference between $\|f - f_\rho\|^2_\rho, ~ \forall f \in \Ltwo$ rather than $\|f - \argmin_{f \in \mathcal{H}}\mathcal{E}(f)\|^2_\mathcal{H}, ~ \forall f \in \mathcal{H}$.
		Meanwhile, for sake of clarity, we define the estimators $\{\widehat{f}_{D,\lambda}^M, \widetilde{f}_{D,\lambda}^M, {f}_{\lambda}^M, f_\lambda \}\in \Ltwo$ and estimate the \textit{excess risk} in the $\Ltwo$-norm.
	\end{remark}

\begin{proposition}
	\label{pro.estimator_equivalence}
	For any $j \in [m]$, we use $(\mathbf{X}_j, \mathbf{y}_j)$ to denote the local samples and their labels on $D_j$, where $\overline{\mathbf{X}} =\{\mathbf{X}_1, \cdots, \mathbf{X}_m\}$ and $\bar{\mathbf{y}} = \{\mathbf{y}_1, \cdots, \mathbf{y}_m\}$ represent the samples and labels on all partitions. 
	For the estimators $\widehat{f}_{D_j,\lambda}^M$, and $\widetilde{f}_{D_j,\lambda}^M$, there holds 
	\begin{align}
		\mathbb{E}_{\mathbf{y}_j} [\widehat{f}_{D_j,\lambda}^M]   & = \widetilde{f}_{D_j,\lambda}^M, \label{eq.proof.expectation1}
	\end{align}
	Here, $\mathbb{E}_{\mathbf{y}_j}$ denotes the conditional expectation with respect to $\mathbf{y}_j$ given $\mathbf{X}_j$ on the $j$-th partition $D_j$.
\end{proposition}
\begin{proof}
	Using the operator based solutions of the local estimator of \texttt{DKRR-RF} $\widehat{f}_{D_j,\lambda}^M$ \eqref{f.local-y}, and the local noise-free estimator $\widetilde{f}_{D_j,\lambda}^M$ \eqref{f.local-frho}, we have 
	\begin{align*}
		\widehat{f}_{D_j,\lambda}^M &= S_M (\widehat{C}_M + \lambda I)^{-1} \widehat{S}_M^* \yn = S_M (\widehat{C}_M + \lambda I)^{-1} \frac{1}{n} \sum_{i=1}^n \phi_M(\xx_i) y_i,\\
		\widetilde{f}_{D_j,\lambda}^M &=S_M (\widehat{C}_M + \lambda I)^{-1} \bar{S}_M^* f_\rho = S_M (\widehat{C}_M + \lambda I)^{-1} \frac{1}{n} \sum_{i=1}^n \phi_M(\xx_i) f_\rho(\xx_i).
	\end{align*}
	Taking the expectation over $\widehat{f}_{D_j,\lambda}^M$ in terms of $\mathbf{y}_j$, it holds
	\begin{align*}
		\mathbb{E}_{\mathbf{y}_j} \widehat{f}_{D_j,\lambda}^M
		&= S_M (\widehat{C}_M + \lambda I)^{-1} \frac{1}{n} \sum_{i=1}^n \phi_M(\xx_i) \mathbb{E}_{\mathbf{y}_j} y_i\\
		&= S_M (\widehat{C}_M + \lambda I)^{-1} \frac{1}{n} \sum_{i=1}^n \phi_M(\xx_i) \int_\mathcal{Y} y d\rho(y|\xx_i) \\
		&= S_M (\widehat{C}_M + \lambda I)^{-1} \frac{1}{n} \sum_{i=1}^n \phi_M(\xx_i) f_\rho(\xx_i) 
		= \widetilde{f}_{D_j,\lambda}^M,
	\end{align*}
	that proves the identity \eqref{eq.proof.expectation1}.
\end{proof}

Taking the expectation over the conditional distribution $\rho(y|\xx)$, we first prove the equivalence between the local estimators.
We then establish the equivalence relationship between $\widehat{f}_{D_j,\lambda}^M$ and $\widetilde{f}_{D_j,\lambda}^M$.
Next, we derive relationships between global estimators and local ones to prove the error decomposition in Lemma \ref{lem.full-decomposition}. 
It is easy to bridge connection between the \textit{excess risk} and the discrepancy of two estimators: $f \in \Ltwo$ and the target regression $f_\rho$ \cite{smale2007learning} that
\begin{align}
	\label{eq.excess-risk}
	\mathcal{E}(f) - \mathcal{E}(f_\rho) = \|f - f_\rho\|_\rho^2.
\end{align}
Here, $\|f\|_\rho = \|f\|_{L^2_{\rho_X}}=\big(\int_\mathcal{X}|f(\xx)|^2d\rho_X\big)^{1/2}$, $\rho_X(\cdot)$ is the induced marginal measure on the input space $\mathcal{X}$.

\begin{lemma}
	\label{lem.full-decomposition}
	For any $j \in [m]$, let $\widehat{f}_{D,\lambda}^M, \widetilde{f}_{D,\lambda}^M, {f}_{\lambda}^M$ and ${f}_{\lambda}$ be defined as the above, we have
	\begin{align}
		& \mathbb{E}~\mathcal{E}(\widehat{f}_{D,\lambda}^M) - \mathcal{E}(f_\rho) \\
		\leq ~ & \frac{1}{m}\mathbb{E} \|\widehat{f}_{D_j,\lambda}^M - \widetilde{f}_{D_j,\lambda}^M\|_\rho^2 \quad  (\text{Sample Variance}) \label{error.variance} \\
		+~     & \frac{3}{m}\mathbb{E} \|\widetilde{f}_{D_j,\lambda}^M - {f}_{\lambda}^M\|_\rho^2 \quad (\text{Distributed Error}) \label{error.distributed} \\
		+~     & 3 \, \mathbb{E} \|\widetilde{f}_{D_j,\lambda}^M - {f}_{\lambda}^M\|_\rho^2 \quad (\text{Empirical Error}) \label{error.empirical} \\
		+ ~    & 3 \, \mathbb{E}\| {f}_{\lambda}^M
		- {f}_{\lambda}\|_\rho^2 \quad (\text{Random Features Error}) \label{error.rf} \\
		+ ~    & 3 \, \| {f}_{\lambda}
		- ~f_\rho\|_\rho^2 \quad (\text{Approximation Error}) \label{error.approximation}.
	\end{align}
\end{lemma}

\begin{proof}
	Using the noise-free estimator $\widetilde{f}_{D,\lambda}^M$ as the intermedium, we have
	\begin{align*}
		\|\widehat{f}_{D,\lambda}^M - f_\rho\|_\rho^2 = \|\widehat{f}_{D,\lambda}^M - \widetilde{f}_{D,\lambda}^M\|_\rho^2 + \|\widetilde{f}_{D,\lambda}^M - f_\rho\|_\rho^2 + 2 \langle \widehat{f}_{D,\lambda}^M - \widetilde{f}_{D,\lambda}^M,  \widetilde{f}_{D,\lambda}^M - f_\rho\rangle.
	\end{align*}
	Taking the conditional expectation with respect to $\bar{\mathbf{y}}$ given $\overline{\mathbf{X}}$ on both sides, using \eqref{eq.proof.expectation1} in Proposition \ref{pro.estimator_equivalence} which indicates
	\begin{align*}
		\mathbb{E}_{\bar{\mathbf{y}}} ~ [\widehat{f}_{D,\lambda}^M - \widetilde{f}_{D,\lambda}^M] = \frac{1}{m} \sum_{j=1}^m \mathbb{E}_{\mathbf{y}_j} [\widehat{f}_{D_j,\lambda}^M - \widetilde{f}_{D_j,\lambda}^M] = 0,
	\end{align*}
	we thus have
	\begin{align}
		\label{eq.proof.decomposition1}
		\mathbb{E}_{\bar{\mathbf{y}}} \|\widehat{f}_{D,\lambda}^M - f_\rho\|_\rho^2 = \mathbb{E}_{\bar{\mathbf{y}}} \|\widehat{f}_{D,\lambda}^M - \widetilde{f}_{D,\lambda}^M\|_\rho^2
		+ \|\widetilde{f}_{D,\lambda}^M - f_\rho\|_\rho^2.
	\end{align}

	Using the fact $(a+b+c)^2 \leq 3 a^2 + 3 b^2 + 3 c^2, ~\forall a, b, c >0$, we have
	\begin{align}
		\label{eq.proof.decomposition2}
		\|\widetilde{f}_{D,\lambda}^M - f_\rho\|_\rho^2 
		\leq 3\, \|\widetilde{f}_{D,\lambda}^M - f_\lambda^M\|_\rho^2 + 3\, \|f_\lambda^M - f_\lambda\|_\rho^2 + 3\, \|f_\lambda - f_\rho\|_\rho^2.
	\end{align}

	Following the proof of Proposition 5 in \cite{chang2017distributed}, we establish the relationship between global and local empirical error
	\begin{equation}
		\begin{aligned}
			\label{eq.proof.decomposition.empirical_error}
			\mathbb{E}\|\widetilde{f}_{D,\lambda}^M - f_\lambda^M\|_\rho^2 
			&\leq \sum_{j=1}^m \frac{|D_j|^2}{|D|^2} \mathbb{E} \left\| \widetilde{f}_{D_j,\lambda}^M - f_\lambda^M\right\|_\rho^2 + \sum_{j=1}^m \frac{|D_j|}{|D|} \left\|\mathbb{E} [\widetilde{f}_{D_j,\lambda}^M] - f_\lambda^M\right\|_\rho^2  \\
			&\leq \sum_{j=1}^m \left(\frac{|D_j|^2}{|D|^2} + \frac{|D_j|}{|D|}\right)\mathbb{E} \left\| \widetilde{f}_{D_j,\lambda}^M - f_\lambda^M\right\|_\rho^2.
		\end{aligned}
	\end{equation}

	Substituting \eqref{eq.proof.decomposition1}, \eqref{eq.proof.decomposition2} and \eqref{eq.proof.decomposition.empirical_error} to \eqref{eq.excess-risk}, we get the desired result
	\begin{align*}
		\mathbb{E} \mathcal{E}(\widehat{f}_{D,\lambda}^M) - \mathcal{E}(f_\rho) 
		\leq \frac{1}{m}\mathbb{E} \|\widehat{f}_{D_j,\lambda}^M - \widetilde{f}_{D_j,\lambda}^M\|_\rho^2
		+ 3\left(\frac{1}{m} + 1\right) \mathbb{E} \|\widetilde{f}_{D_j,\lambda}^M - f_\lambda^M\|_\rho^2 \\
		+ 3\,\mathbb{E} \|f_\lambda^M - f_\lambda\|_\rho^2 
		+ 3\,\|f_\lambda - f_\rho\|_\rho^2.
	\end{align*}
\end{proof}

Sample variance \eqref{error.variance} is brought by noise on labels $y$, which is output-dependent.
Distributed error \eqref{error.distributed} reflects errors from distributed learning.
Empirical error \eqref{error.empirical} represents the gap between expected learning and empirical learning.
Note that empirical error focuses on noise-free data, and thus it can be reduced by additional unlabeled data, resulting in Theorem \ref{thm.unlabeled-explicit}.
Independent on the sample, random features error \eqref{error.rf} is caused by the discrepancy between the kernel approximated by random features and the kernel, while approximation error \eqref{error.approximation} reflects the bias of the algorithm. Data-dependent features can reduce random features error \eqref{error.rf} that motivates Theorem \ref{thm.data-dependent}.

The global sample variance is reduced to $1/m$ of the local one, illustrating that distributed learning can reduce the sample error than any local estimator.
But also, the empirical error is output independent and can be reduced by using unlabeled data.
Thus, with the same optimal error convergence rate, we improve the number of partitions $m$ by introducing more unlabeled examples in Theorem \ref{thm.unlabeled-explicit}.
Sample variance relies on both samples and labels, while random features error and approximation error are independent of the data,
so additional unlabeled data do not influence other errors.

\subsection{Estimate Error Terms}
To analysis the excess risk, in this part, we estimate four error terms $\|\widehat{f}_{D_j,\lambda}^M - \widetilde{f}_{D_j,\lambda}^M\|_\rho^2, \|\widetilde{f}_{D_j,\lambda}^M - {f}_{\lambda}^M\|_\rho^2, \| {f}_{\lambda}^M - {f}_{\lambda}\|_\rho^2$
and $\| {f}_{\lambda} - \frho\|_\rho^2$.
The estimate of sample variance $\|\widehat{f}_{D_j,\lambda}^M - \widetilde{f}_{D_j,\lambda}^M\|_\rho^2$ and empirical error $\|\widetilde{f}_{D_j,\lambda}^M - {f}_{\lambda}^M\|_\rho^2$ are related to the key quantity $\|C_{M, \la}^{1/2}\tCnl^{-1/2}\|$.
To relax the restriction on the number of partitions, we provide a sharper upper bound for the critical quantity as a constant based on Bernstein's inequality.
The estimate of random features error is also associated with a critical quantity $\|L_{\lambda}^{-1/2}(L-L_M)L_{\lambda}^{-(1-r)}\|$, where we estimate this term separately.
The sample variance is related to the number local labeled sample size $n$, while the key quantity $\|C_{M, \la}^{1/2}\tCnl^{-1/2}\|$ is related to the local total sample size $n^*$.
Those two parts lead to two constraints on the number of partitions.
Random features error is related to the dimension of random features and independent of sample size.

\subsubsection{Estimates for Sample Variance $\|\widehat{f}_{D_j,\lambda}^M - \widetilde{f}_{D_j,\lambda}^M\|$}
\begin{lemma}
	\label{lem.variance-error}
	For the sample variance \eqref{error.variance} in the error decomposition, the following holds
	\begin{align*}
		&\|\widehat{f}_{D_j,\lambda}^M - \widetilde{f}_{D_j,\lambda}^M\| \\
		\leq &\|C_{M, \la}^{1/2}\tCnl^{-1/2}\|^2 \left[\|C_{M, \la}^{-1/2}(\widehat{S}_M^* \yn - S_M^* f_\rho)\|
		+ \|C_{M, \la}^{-1/2}(S_M^*f_\rho - \bar{S}_M^* f_\rho)\|\right].
	\end{align*}
	For any $\delta \in (0, 1/3]$, when $n \geq 16 (\mathcal{N}_\infty(\lambda) + 1) \log(2/\delta) $, there exists with the probability at least $1 - 3\,\delta$
	\begin{align*}
		\|\widehat{f}_{D_j,\lambda}^M - \widetilde{f}_{D_j,\lambda}^M\| \leq 8 \left(\frac{\kappa(B+\sigma)}{n\sqrt{\lambda}} + \sqrt{\frac{\sigma^2\mathcal{N}_M(\lambda)}{n}}\right)\log\frac{2}{\delta}.
	\end{align*}
\end{lemma}
\begin{proof}
	Let $\widehat{f}_{D_j,\lambda}^M$ and $\widetilde{f}_{D_j,\lambda}^M$ be defined as (\ref{f.local-y}) and (\ref{f.local-frho}), we have
	\begin{equation}
		\begin{aligned}
			\label{eq.variance.eq1}
			\|\widehat{f}_{D_j,\lambda}^M - \widetilde{f}_{D_j,\lambda}^M \|
			=    & \|S_M\tCnl^{-1}(\widehat{S}_M^* \yn - \bar{S}_M^* f_\rho)\|                                                       \\
			=    & \|\big(S_M\tCnl^{-1} C_{M, \la}^{1/2}\big)\big(C_{M, \la}^{-1/2}(\widehat{S}_M^* \yn - \bar{S}_M^* f_\rho)\big)\| \\
			\leq & \|S_M\tCnl^{-1} C_{M, \la}^{1/2}\|\|C_{M, \la}^{-1/2}(\widehat{S}_M^* \yn - \bar{S}_M^* f_\rho)\|.
		\end{aligned}
	\end{equation}
	The last step is due to Cauchy--Schwarz inequality.
	Note that
	\begin{align*}
		\|S_M \tCnl^{-1/2}\| = \|S_M C_{M, \la}^{-1/2} C_{M, \la}^{1/2} \tCnl^{-1/2}\| \leq
		\|S_M C_{M, \la}^{-1/2}\|\|C_{M, \la}^{1/2} \tCnl^{-1/2}\|,
	\end{align*}
	where $\|S_M C_{M, \la}^{-1/2}\| \leq \|C_{M, \la}^{-1/2} S_M^* S_M C_{M, \la}^{-1/2}\|^{1/2} \leq 1$.
	Thus, we have $\|S_M\tCnl^{-1/2}\| \leq \|C_{M, \la}^{1/2}\tCnl^{-1/2}\|$, and it holds that
	\begin{align}
		\label{eq.variance.eq2}
		\|S_M\tCnl^{-1} C_{M, \la}^{1/2}\| \leq \|C_{M, \la}^{1/2}\tCnl^{-1/2}\|^2.
	\end{align}

	The term $\|C_{M, \la}^{-1/2}(\widehat{S}_M^* \yn - \bar{S}_M^* f_\rho)\|$ can be rewritten as
	\begin{equation}
		\begin{aligned}
			\label{eq.variance.eq3}
			\|C_{M, \la}^{-1/2}(\widehat{S}_M^* \yn - \bar{S}_M^* f_\rho)\|
			=    & \|C_{M, \la}^{-1/2}(\widehat{S}_M^* \yn - S_M^*f_\rho + S_M^*f_\rho - \bar{S}_M^* f_\rho)\|                        \\
			\leq & \|C_{M, \la}^{-1/2}(\widehat{S}_M^* \yn - S_M^*f_\rho)\| + \|C_{M, \la}^{-1/2}(S_M^*f_\rho - \bar{S}_M^* f_\rho)\|.
		\end{aligned}
	\end{equation}

	Combining \eqref{eq.variance.eq1}, \eqref{eq.variance.eq2} and \eqref{eq.variance.eq3}, one can prove
	\begin{equation}
		\begin{aligned}
			\label{eq.sample_variance.proof.eq3}
			&\|\widehat{f}_{D_j,\lambda}^M - \widetilde{f}_{D_j,\lambda}^M\| \leq \\
			&\|C_{M, \la}^{1/2}\tCnl^{-1/2}\|^2 \left[\|C_{M, \la}^{-1/2}(\widehat{S}_M^* \yn - S_M^* f_\rho)\|
			+ \|C_{M, \la}^{-1/2}(S_M^*f_\rho - \bar{S}_M^* f_\rho)\|\right].
		\end{aligned}
	\end{equation}

	From Lemma \ref{lem.difference_between_C_C_M}, we know that with high probability $\|C_{M, \la}^{1/2}\tCnl^{-1/2}\|^2 \leq 2$ if $n \geq 16 (\mathcal{N}_\infty(\lambda) + 1) \log(2/\delta)$.
	Substituting Lemma \ref{lem.sample_variance.error2}, Lemma \ref{lem.sample_variance.error3} and Lemma \ref{lem.difference_between_C_C_M} to \eqref{eq.sample_variance.proof.eq3}, if $n \geq 16 (\mathcal{N}_\infty(\lambda) + 1) \log(2/\delta)$, it holds with the probability at least $1 - 3 \delta$
	\begin{align*}
		&\|\widehat{f}_{D_j,\lambda}^M - \widetilde{f}_{D_j,\lambda}^M\| \\
		\leq &2\left[2\left(\frac{\kappa B}{n\sqrt{\lambda}} + \sqrt{\frac{\sigma^2\mathcal{N}_M(\lambda)}{n}}\right) \log \frac{2}{\delta} + 2\left(\frac{\kappa\sigma}{n\sqrt{\lambda}} + \sqrt{\frac{\sigma^2\mathcal{N}_M(\lambda)}{n}} \right)\log\frac{2}{\delta}\right]\\
		\leq &8 \left(\frac{\kappa(B+\sigma)}{n\sqrt{\lambda}} + \sqrt{\frac{\sigma^2\mathcal{N}_M(\lambda)}{n}}\right)\log\frac{2}{\delta}.
	\end{align*}
\end{proof}

\begin{lemma}[Lemma 6 in \cite{rudi2017generalization}]
	\label{lem.sample_variance.error2}
	For $\delta \in (0, 1]$, under assumptions \ref{asm.rf}, \ref{asm.moment}, the following holds with the probability at least $1 - \delta$
	\begin{align*}
		\|C_{M, \la}^{-1/2}(\widehat{S}_M^* \yn - S_M^* f_\rho)\| \leq 2\left(\frac{\kappa B}{n\sqrt{\lambda}} + \sqrt{\frac{\sigma^2\mathcal{N}_M(\lambda)}{n}}\right) \log \frac{2}{\delta}.
	\end{align*}
\end{lemma}

Using Bernstein's inequality (Proposition \ref{pro.bernstein_inequality}), we prove the following lemma.
\begin{lemma}
	\label{lem.sample_variance.error3}
	For $\delta \in (0, 1],$ under Assumptions \ref{asm.rf}, \ref{asm.moment} with the probability at least $1 - \delta$, we have
	\begin{align*}
		\|C_{M, \la}^{-1/2}(S_M^*f_\rho - \bar{S}_M^* f_\rho)\| \leq 2\left(\frac{\kappa\sigma}{n\sqrt{\lambda}} + \sqrt{\frac{\sigma^2\mathcal{N}_M(\lambda)}{n}} \right)\log\frac{2}{\delta}.
	\end{align*}
\end{lemma}
\begin{proof}
	Let $\xi_i = C_{M, \la}^{-1/2} \phi_M(\xx_i) f_\rho(\xx_i)$ on $\mathcal{X}$ in the Hilbert space $\mathcal{H}_M$. We see that
	\begin{align*}
		\frac{1}{n}\sum_{i=1}^n \xi_i & =\frac{1}{n}\sum_{i=1}^n C_{M, \la}^{-1/2} \phi_M(\xx_i) f_\rho(\xx_i) = C_{M, \la}^{-1/2} \bar{S}_M^* f_\rho, \\
		\mathbb{E} \xi                & = \int_X C_{M, \la}^{-1/2} \phi_M(\xx) f_\rho(\xx) d\rho_X(\xx) = C_{M, \la}^{-1/2}S_M^*f_\rho
	\end{align*}
	Thus, the error term to bound can be stated as
	\begin{align}
		\label{eq.sample_variance.error3.bernstein}
		\|C_{M, \la}^{-1/2}(S_M^*f_\rho - \bar{S}_M^* f_\rho)\| = \left\|\frac{1}{n}\sum_{i=1}^n \xi_i - \mathbb{E} \xi\right\|.
	\end{align}
	The rhs of the above identity can be bounded by Bernstein's inequality (Proposition \ref{pro.bernstein_inequality}), thus we need to estimate $\|\xi\|$ and $\mathbb{E}(\|\xi_i - \mathbb{E}(\xi_i) \|^2)$ first.

	Note that $\int_\mathcal{Y} y^2 d\rho(y|\xx) \leq \sigma^2$ under Assumption \ref{asm.moment} when setting $p=2$ implies that the regression function is bounded almost surely \cite{lin2020optimal}
	\begin{align*}
		|f_\rho(\xx)| \leq \sigma.
	\end{align*}
	With the inequality
	\begin{align*}
		\|C_{M, \la}^{-1/2} \phi_M(\xx_i)\|^2 \leq \frac{1}{\lambda} \sup_{\xx \in \mathcal{X}} \|\phi_M(\xx)\|^2 \leq \frac{\kappa^2}{\lambda},
	\end{align*}
	we thus have
	\begin{align}
		\label{eq.sample_variance.error3.xi}
		\|\xi_i\| \leq \|C_{M, \la}^{-1/2} \phi_M(\xx_i)\| \|f_\rho\| \leq \frac{\kappa \sigma}{\sqrt{\lambda}}.
	\end{align}

	Note that
	\begin{equation}		
		\begin{aligned}
			\label{eq.sample_variance.error3.xi_square}
			\mathbb{E} (\|\xi_i - \mathbb{E}(\xi_i) \|^2)
			& \leq 2\int_\mathcal{X} \|C_{M, \la}^{-1/2} \phi_M(\xx)\|^2 |f_\rho(\xx)|^2 d\rho_X(\xx)\\
			& \leq 2\sigma^2 \int_\mathcal{X} \|C_{M, \la}^{-1/2} \phi_M(\xx)\|^2 d\rho_X(\xx)
			\leq 2\sigma^2 \mathcal{N}_M(\lambda).
		\end{aligned}
	\end{equation}

	Substituting \eqref{eq.sample_variance.error3.xi}  and \eqref{eq.sample_variance.error3.xi_square} to \eqref{eq.sample_variance.error3.bernstein}, by Bernstein's inequality (Proposition \ref{pro.bernstein_inequality}), one can prove that with the probability at least $1 - \delta$ 
	\begin{align*}
		\|C_{M, \la}^{-1/2}(S_M^*f_\rho - \bar{S}_M^* f_\rho)\| 
		&\leq \frac{2\kappa\sigma\log(2/\delta)}{n\sqrt{\lambda}} + 2\sqrt{\frac{\sigma^2\mathcal{N}_M(\lambda)\log(2/\delta)}{n}}\\
		&\leq 2\left(\frac{\kappa\sigma}{n\sqrt{\lambda}} + \sqrt{\frac{\sigma^2\mathcal{N}_M(\lambda)}{n}} \right)\log\frac{2}{\delta}.
	\end{align*}
\end{proof}

\subsubsection{Estimates for Empirical Error $\|\widetilde{f}_{D_j,\lambda}^M - {f}_{\lambda}^M\|$}
\begin{lemma}
	\label{lem.empirical-error}
	For the empirical error \eqref{error.empirical} in error decomposition,
	the following holds
	\begin{align*}
		\|\widetilde{f}_{D_j,\lambda}^M - {f}_{\lambda}^M\|
		\leq \left[\|C_{M,\lambda}^{1/2} \widehat{C}_{M,\lambda}^{-1/2}\| + \|C_{M,\lambda}^{1/2} \widehat{C}_{M,\lambda}^{-1/2}\|^2 \right]\|{f}_{\lambda}^M - f_\rho\|.
	\end{align*}
	Under Assumptions \ref{asm.rf} and \ref{asm.compatibility}, for $\delta \in (0, 1/3]$ and $\la > 0$, when the number of local examples satisfies $n \geq 16 (\mathcal{N}_\infty(\lambda) + 1) \log(2/\delta)$
	and the dimension of random features satisfies
	\begin{align*}
		M &\geq 16~ \mathcal{N}_\infty (\lambda)  \log(2/\delta) \quad \forall r \in (0, 1/2),\\
		M &\geq 16\kappa^2(\mathcal{N}_\infty(\lambda) + 1)\log(2/\delta) \left[1 \vee \frac{\mathcal{N}(\la)}{(\mathcal{N}_\infty(\la) + 1)\la}\right]^{2r-1} \quad \forall r \in [1/2, 1],
	\end{align*}
	there exists with the probability at least $1 - 3\delta$
	\begin{align*}
		\|\widetilde{f}_{D_j,\lambda}^M - {f}_{\lambda}^M\| \leq 9 R \lambda^{r}.
	\end{align*}
\end{lemma}
\begin{proof}
	Using the definition of $f^M_\lambda$, we have
	\begin{align*}
		f^M_\lambda &= L_M L_{M, \lambda}^{-1} f_\rho = S_M S_M^* (S_M S_M^* + \lambda I)^{-1} f_\rho \\
		&= S_M (S_M^*S_M  + \lambda I)^{-1} S_M^* f_\rho
		= S_M C_{M, \lambda}^{-1} S_M^* f_\rho.
	\end{align*}

	Under definitions in (\ref{f.local-frho}) and (\ref{f.expected-rf}),
	using the above identity and $A^{-1} - B^{-1} = A^{-1}(B-A)B^{-1}$ for positive operators $A, B$,
	we have
	\begin{align*}
		\|\widetilde{f}_{D_j,\lambda}^M - {f}_{\lambda}^M\|
		 & =\|S_M\widehat{C}_{M,\lambda}^{-1} \bar{S}_M^* f_\rho - S_M C_{M, \lambda}^{-1} S_M^* f_\rho\|                                                                                      \\
		 & =\|S_M\widehat{C}_{M,\lambda}^{-1} (\bar{S}_M^* - S_M^*)f_\rho + S_M (\widehat{C}_{M,\lambda}^{-1} - C_{M, \lambda}^{-1} ) S_M^* f_\rho\|                                           \\
		 & =\|S_M\widehat{C}_{M,\lambda}^{-1} (\bar{S}_M^* - S_M^*)f_\rho + S_M \widehat{C}_{M,\lambda}^{-1}(C_M - \widehat{C}_{M})C_{M, \lambda}^{-1} S_M^* f_\rho\|                          \\
		 & =\|S_M\widehat{C}_{M,\lambda}^{-1} (\bar{S}_M^* - S_M^*)f_\rho + S_M \widehat{C}_{M,\lambda}^{-1}(S_M^* S_M - \bar{S}_M^* S_M)C_{M, \lambda}^{-1} S_M^* f_\rho\|                    \\
		 & =\|S_M\widehat{C}_{M,\lambda}^{-1} (\bar{S}_M^* - S_M^*)f_\rho + S_M \widehat{C}_{M,\lambda}^{-1}(S_M^* - \bar{S}_M^*) f^M_\lambda\|                                                \\
		 & =\|S_M\widehat{C}_{M,\lambda}^{-1} \bar{S}_M^* (f_\rho - f^M_\lambda) + S_M \widehat{C}_{M,\lambda}^{-1}S_M^*(f^M_\lambda - f_\rho)\|                                               \\
		 & =\|S_M C_{M,\lambda}^{-1/2} C_{M,\lambda}^{1/2} \widehat{C}_{M,\lambda}^{-1/2} \widehat{C}_{M,\lambda}^{-1/2} \bar{S}_M^* (f_\rho - f^M_\lambda)                                    \\
		 & + S_M  C_{M,\lambda}^{-1/2} C_{M,\lambda}^{1/2} \widehat{C}_{M,\lambda}^{-1/2} \widehat{C}_{M,\lambda}^{-1/2}C_{M,\lambda}^{1/2} C_{M,\lambda}^{-1/2}S_M^*(f^M_\lambda - f_\rho)\|.
	\end{align*}
	To obtain the key term $\|\widehat{C}_{M,\lambda}^{-1/2} \widehat{C}_{M,\lambda}^{-1/2}\|$, we introduce additional terms in the last step of the above identity.
	Note that, the following inequalities hold $\|S_M C_{M,\lambda}^{-1/2} \|= \|C_{M,\lambda}^{-1/2} C_M C_{M,\lambda}^{-1/2}\|^{1/2} \leq 1$, $\|\widehat{C}_{M,\lambda}^{-1/2} \bar{S}_M^*\| = \|\widehat{C}_{M,\lambda}^{-1/2} \widehat{C}_M \widehat{C}_{M,\lambda}^{-1/2}\|^{1/2} \leq 1$, and $\|C_{M,\lambda}^{-1/2}S_M^*\| = \|C_{M,\lambda}^{-1/2}C_MC_{M,\lambda}^{-1/2}\|^{1/2} \leq 1$.
	Thus, one can obtain that
	\begin{align*}
		\|\widetilde{f}_{D_j,\lambda}^M - {f}_{\lambda}^M\| \leq \left[\|C_{M,\lambda}^{1/2} \widehat{C}_{M,\lambda}^{-1/2}\| + \|C_{M,\lambda}^{1/2} \widehat{C}_{M,\lambda}^{-1/2}\|^2 \right] \|f_\lambda^M - f_\rho\|.
	\end{align*}

	When $n \geq 16 (\mathcal{N}_\infty(\lambda) + 1) \log(2/\delta) $, by Lemma \ref{lem.difference_between_C_C_M}, it holds with the probability $1 - \delta$
	\begin{align*}
		\|\widetilde{f}_{D_j,\lambda}^M - {f}_{\lambda}^M\| \leq (\sqrt{2} + 2)\|f_\lambda^M - f_\rho\|.
	\end{align*}

	Using Lemma \ref{lem.rf-error} and Lemma \ref{lem.approximation-error}, we have  with probability at least $1 - 3 \delta$
	\begin{align*}
		\|\widetilde{f}_{D_j,\lambda}^M - {f}_{\lambda}^M\|
		&\leq (\sqrt{2} + 2)\left(\|f_\lambda^M - f_\lambda\| + \|f_\lambda - f_\rho\|\right) \\
		&\leq (\sqrt{2} + 2) (\sqrt{2} + 1) R \lambda^r
		\leq 9 \lambda^r.
	\end{align*}
\end{proof}

\subsubsection{Estimates for Random Features Error $\|f_\lambda^M - f_\lambda\|$}
The next lemma bounds the distance between the Tikhonov solution with RF and the Tikhonov solution without RF,
reflecting the approximation ability of random features.
\begin{lemma}
	\label{lem.rf-error}
	Under Assumptions \ref{asm.rf} and \ref{asm.compatibility}, for $\delta \in (0, 1/2], \la > 0$, 
	when 
	\begin{align*}
		M &\geq 16 (\mathcal{N}_\infty(\lambda) + 1) \log(2/\delta) \quad \forall r \in (0, 1/2), \quad \text{and}\\
		M &\geq 16\kappa^2(\mathcal{N}_\infty(\lambda) + 1)\log(2/\delta) \left[1 \vee \frac{\mathcal{N}(\la)}{(\mathcal{N}_\infty(\la) + 1)\la}\right]^{2r-1}
		\quad \forall r \in [1/2, 1],
	\end{align*}
	the following holds with a probability at least $1 - 2\delta$
	\begin{align*}
		\|f_\lambda^M - f_\lambda\| \leq \sqrt{2} R \lambda^r.
	\end{align*}
	
\end{lemma}
\begin{proof}
	According to the operator representations of $f_\lambda^M$ \eqref{f.expected-rf} and $f_\lambda$ \eqref{f.expected-erm}
	\begin{align*}
		f_\lambda - f_\lambda^M
		= (LL_\lambda^{-1} - L_M L_{M,\lambda}^{-1})f_\rho.
	\end{align*}
	Using the identity $A(A + \lambda I)^{-1} = I - \lambda(A + \lambda I)^{-1}$ and $A^{-1} - B^{-1} = A^{-1}(B - A)B^{-1}$, we have
	\begin{align*}
		f_\lambda - f_\lambda^M = \lambda(L_{M,\lambda}^{-1} - L_{\lambda}^{-1})f_\rho = \lambda L_{M,\lambda}^{-1} (L - L_M) L_{\lambda}^{-1} f_\rho.
	\end{align*}

	Applying Assumption \ref{asm.capacity}, there exists $g \in \Ltwo$ and $f_\rho = L^r g$, so we have
	\begin{align*}
		f_\lambda - f_\lambda^M = \sqrt{\lambda} (\sqrt{\lambda} L_{M, \lambda}^{-1/2}) (L_{M, \lambda}^{-1/2} L_{\lambda}^{1/2})[L_{\lambda}^{-1/2}(L-L_M)L_{\lambda}^{-(1-r)}] (L_\lambda^{-r} L^r) g.
	\end{align*}

	Note that $\|\sqrt{\lambda} L_{M, \lambda}^{-1/2}\| \leq 1$, $\|L_\lambda^{-r} L^r\| \leq 1$, $\|g\| \leq R$ and $\|L_{M, \lambda}^{-1/2}L_{\lambda}^{1/2}\| \leq \sqrt{2}$ if $M \geq 16 (\mathcal{N}_\infty(\lambda) + 1)  \log(2/\delta)$ due to Lemma \ref{lem.difference_between_L_L_M}, we thus have with probability at least $1 - \delta$
	\begin{align}		
		\label{eq.rf-errors.bound1}
		\|f_\lambda^M - f_\lambda\| \leq R \sqrt{2\lambda} \|L_{\lambda}^{-1/2}(L-L_M)L_{\lambda}^{-(1-r)}\|.
	\end{align}

	Then, we estimate the bound in two cases $r \in (0, 1/2)$ and $r \in [1/2, 1]$.
	\begin{itemize}
		\item When $r \in (0, 1/2)$, there exists
		      \begin{align*}
			      \|f_\lambda^M - f_\lambda\|
			       & \leq R\sqrt{2\lambda} \|L_{\lambda}^{-1/2}(L-L_M)L_{\lambda}^{-1/2}\|\|L_{\lambda}^{-(1/2-r)}\|                     \\
			       & \leq \sqrt{2} R \lambda^r \|L_{\lambda}^{-1/2}(L-L_M)L_{\lambda}^{-1/2}\|\|\lambda^{1/2-r} L_{\lambda}^{-(1/2-r)}\| \\
			       & \leq \sqrt{2} R \lambda^r \|L_{\lambda}^{-1/2}(L-L_M)L_{\lambda}^{-1/2}\|.
		      \end{align*}
			  The last step is due to $\|\lambda^{1/2-r} L_{\lambda}^{-(1/2-r)}\| \leq 1$ for any $ 0 < r < 1/2$.

			  Note that $\|L_{\lambda}^{-1/2}(L-L_M)L_{\lambda}^{-1/2}\| \leq 1/2$ using Lemma \ref{lem.difference_between_L_L_M}, thus for $r \in (0, 1/2)$, it holds with probability at least $1 - \delta$
		      \begin{align}
				\label{eq.rf-errors.difficult_problems}
			      \|f_\lambda^M - f_\lambda\| \leq R \lambda^r.
		      \end{align}
		\item When $r \in [1/2, 1]$, there exists
		      \begin{align*}
			      \|L_{\lambda}^{-1/2}(L-L_M)L_{\lambda}^{-(1-r)}\|
			      = \|L_{\lambda}^{-1/2}(L-L_M)L_{\lambda}^{-\varsigma/2}\|,
		      \end{align*}
		      with $\varsigma = 2 - 2r$ and $0 \leq \varsigma \leq 1$.

		      Using Proposition \ref{prop.varsigma} with $X=L_{\lambda}^{-1/2}(L-L_M)$ and $A=L_{\lambda}^{-1/2}$, one can obtain that
		      \begin{align*}
			      \|L_{\lambda}^{-1/2}(L-L_M)L_{\lambda}^{-(1-r)}\| \leq \|L_{\lambda}^{-1/2}(L-L_M)\|^{1-\varsigma}\|L_{\lambda}^{-1/2}(L-L_M)L_{\lambda}^{-1/2}\|^\varsigma.
		      \end{align*}
		      Thus, applying the above inequality to \eqref{eq.rf-errors.bound1}, we have
		      \begin{align}
			      \label{eq.rf-errors.bound2}
			      \|f_\lambda^M - f_\lambda\| \leq R \sqrt{2\lambda} \underbrace{ \|L_{\lambda}^{-1/2}(L-L_M)\|^{2r-1}\|L_{\lambda}^{-1/2}(L-L_M)L_{\lambda}^{-1/2}\|^{2-2r}}_{\text{mixed term}}
			  \end{align}
			  
			To obtain $
				\|f_\lambda^M - f_\lambda\| \leq R \lambda^r
			$
			we need the mixed term be bounded by
			\begin{align}
				\label{eq.rf-errors.mixed_term}
				 \|L_{\lambda}^{-1/2}(L-L_M)\|^{2r-1}\|L_{\lambda}^{-1/2}(L-L_M)L_{\lambda}^{-1/2}\|^{2-2r} \lesssim \lambda^{r - 1/2}.
			\end{align}

			From Lemma \ref{lem.L_difference}, with the condition $M \geq 16 \kappa^2(\mathcal{N}_\infty(\lambda) + 1)  \log(2/\delta)$, it holds
			\begin{align}
				\label{eq.rf-errors.proof.eq2}
				\|L_{\lambda}^{-1/2}(L-L_M)L_{\lambda}^{-1/2}\| 
				&\leq \frac{a(b-1)}{M} + \sqrt{\frac{ab}{M}} \leq \sqrt{\frac{2ab}{M}},
			\end{align}
			where $a=2\kappa^2\log(2/\delta)$ and $b=\mathcal{N}_\infty(\lambda) + 1$.
			Similarly, Lemma \ref{lem.L_difference_half} can be stated as
			\begin{align}
				\label{eq.rf-errors.proof.eq3}
				\|L_{\lambda}^{-1/2}(L-L_M)\| \leq \frac{a\sqrt{b - 1}}{\kappa M} + \sqrt{\frac{ac}{M}},
			\end{align}
			where $a=2\kappa^2\log(2/\delta)$, $b=\mathcal{N}_\infty(\lambda) + 1$ and $c = \mathcal{N}(\lambda)$.

			Note that, according to Minkowski's inequality, we have
			\begin{align}
				\label{eq.rf-errors.proof.jensen}
				\left(\frac{a\sqrt{b - 1}}{\kappa M} + \sqrt{\frac{ac}{M}}\right)^{2r-1}
				\leq \left(\frac{a\sqrt{b - 1}}{\kappa M}\right)^{2r-1} + \left(\sqrt{\frac{ac}{M}}\right)^{2r-1}.
			\end{align}

			Therefore, substituting \eqref{eq.rf-errors.proof.eq2} \eqref{eq.rf-errors.proof.eq3}, \eqref{eq.rf-errors.proof.jensen} to \eqref{eq.rf-errors.mixed_term}, there holds 
			\begin{align*}
				& \|L_{\lambda}^{-1/2}(L-L_M)\|^{2r-1}\|L_{\lambda}^{-1/2}(L-L_M)L_{\lambda}^{-1/2}\|^{2-2r}\\
				\leq & \left(\frac{a\sqrt{b - 1}}{\kappa M} + \sqrt{\frac{ac}{M}}\right)^{2r-1} \left(\sqrt{\frac{2ab}{M}}\right)^{2-2r},\\
				\leq & \left[\left(\frac{a\sqrt{b - 1}}{\kappa M}\right)^{2r-1} + \left(\sqrt{\frac{ac}{M}}\right)^{2r-1}\right] \left(\frac{2ab}{M}\right)^{1-r},\\
				= & 2^{1-r}\left[\left(\frac{a\sqrt{b - 1}}{\kappa M}\right)^{2r-1} + \left(\sqrt{\frac{ac}{M}}\right)^{2r-1}\right] \left(\frac{ab}{M}\right)^{1-r},\\
				\leq & \sqrt{2}\left[\left(\frac{a\sqrt{b - 1}}{\kappa M}\right)^{2r-1} + \left(\sqrt{\frac{ac}{M}}\right)^{2r-1}\right] \left(\frac{ab}{M}\right)^{1-r},\\
				\leq & \sqrt{2}\left[\frac{a^r\sqrt{b}}{\kappa^{2r-1}M^r} + \frac{\sqrt{a}b^{1-r}c^{r-1/2}}{\sqrt{M}}\right].
			\end{align*}
			To make the mixed term bounded by $\lambda^{r-1/2}$, we consider the following condition
			\begin{align*}
				M \geq 8 a b^{2-2r} \left(\frac{c}{\la}\right)^{2r-1}
			\end{align*}
			and obtain the bound of mixed term
			\begin{align*}
				& \|L_{\lambda}^{-1/2}(L-L_M)\|^{2r-1}\|L_{\lambda}^{-1/2}(L-L_M)L_{\lambda}^{-1/2}\|^{2-2r}\\
				\leq & ~ \sqrt{2}\left[\frac{a^r\sqrt{b}}{\kappa^{2r-1}M^r} + \frac{\sqrt{a} b^{1-r}c^{r-\frac{1}{2}}}{\sqrt{M}}\right] \\
				\leq & ~  \sqrt{2}\left[\frac{b^{\frac{1}{2}(1-2r)^2} c^{r-2r^2} \la^{2r^2-r}}{2^{3r} \kappa^{2r-1}}	 + \frac{\la^{r-1/2}}{2\sqrt{2}}\right] \\
				\leq & ~  \sqrt{2}\left[\frac{{(2\kappa^2\la^{-1})}^{\frac{1}{2}(1-2r)^2} 2^{2r^2-r} \la^{2r^2-r}}{2^{3r} \kappa^{2r-1}}	 + \frac{\la^{r-1/2}}{2\sqrt{2}}\right] \\
				\leq & ~  \sqrt{2}\left[\frac{\kappa^{4r^2-6r+2} \la^{r-1/2}}{2^{6r - 4r^2 - 1/2}}	 + \frac{\la^{r-1/2}}{2\sqrt{2}}\right] \\
				\leq & ~ \la^{r-1/2}.
			\end{align*}
			
	The third step is due to $b=\mathcal{N}_\infty(\lambda)  + 1 \leq 2 \kappa^2\lambda^{-1}$ and
	\begin{align*}
		c = \text{Tr}(L L_{\la}^{-1}) = \sum_{i \geq 1} \frac{\la_i(L)}{\la_i(L) + \la} \geq \frac{\la_1(L)}{\la_1(L) + \la} = \frac{\|L\|}{\|L\| + \la} \geq \frac{1}{2},
	\end{align*}
	due to $0 \leq \la \leq \|L\|$ to guarantee bounded effective dimension $\mathcal{N}_M(\la)$ in Proposition 10 \cite{rudi2017generalization}.
	The last step is due to $\kappa^{4r^2-6r+2} \leq 1$ since $4r^2-6r+2 \leq 0$ and $2^{6r - 4r^2 - 1/2} \geq 2\sqrt{2}$ since $6r - 4r^2 - 1/2 \geq 1.5$.

	Thus, with the condition $M \geq 16\kappa^2(\mathcal{N}_\infty(\lambda) + 1)\log(2/\delta)$ and $ M	\geq 16 \kappa^2 (\mathcal{N}_\infty(\lambda) + 1)^{2 - 2r} (\mathcal{N}(\lambda)/\la)^{2r-1}\ \log(2/\delta),$ we have with probability at least $1 - 2 \delta$  
	\begin{equation}
		\begin{aligned}
			\label{eq.rf-errors.easy_problems}
			&\|f_\lambda^M - f_\lambda\|\\
			\leq & R \sqrt{2\lambda} \|L_{\lambda}^{-1/2}(L-L_M)\|^{2r-1}\|L_{\lambda}^{-1/2}(L-L_M)L_{\lambda}^{-1/2}\|^{2-2r}\\
			\leq & \sqrt{2} R  \lambda^r.
		\end{aligned}
	\end{equation}
	\end{itemize}

	Combing the results in \eqref{eq.rf-errors.difficult_problems} and \eqref{eq.rf-errors.easy_problems}, we prove the lemma.
\end{proof}

\subsubsection{Estimates for Approximation Error $\| {f}_{\lambda} - f_\rho\|$}
The last term we need to estimate is approximation error $\| {f}_{\lambda} - f_\rho\|,$
whose proof is standard \cite{smale2007learning,caponnetto2007optimal,rudi2017generalization}.
\begin{lemma}
	\label{lem.approximation-error}
	Under Assumption \ref{asm.rf} and \ref{asm.regularity}, the following holds for any $\la > 0$ and $r > 0$,
	$$\| {f}_{\lambda} - f_\rho\| \leq R \la^{r}.$$
\end{lemma}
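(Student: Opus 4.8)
The plan is to reduce the statement to a single spectral estimate for the integral operator $L$, which is the standard bias bound in the integral-operator framework. First I would pass to the $L^2(X,\rho_X)$ representatives of the functions involved: by Remark~\ref{rem.regularity}, Assumption~\ref{asm.consistency} gives $Pf_\rho = Sf_\mathcal{H}$, so $\|{f}_{\lambda}-f_\mathcal{H}\|_\rho = \|{f}_{\lambda}-Pf_\rho\|_\rho$, where ${f}_{\lambda}=L_{\lambda}^{-1}L\,Pf_\rho$ by~(\ref{f.expected-erm}). The same remark records that Assumption~\ref{asm.regularity} is equivalent to $Pf_\rho = L^r g$ with $g\in L^2(X,\rho_X)$ and $\|g\|_{\rho_X}=R$.

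Next I would use the elementary operator identity $L_{\lambda}^{-1}L - I = -\lambda L_{\lambda}^{-1}$, which is legitimate since $L+\lambda I$ is invertible (Definition~\ref{def.integral-operator} and the remarks following it). Substituting $Pf_\rho=L^r g$ then yields
\begin{align*}
{f}_{\lambda}-f_\mathcal{H} = (L_{\lambda}^{-1}L - I)\,Pf_\rho = -\lambda\,L_{\lambda}^{-1}L^r g,
\end{align*}
so that $\|{f}_{\lambda}-f_\mathcal{H}\|_\rho \le \lambda\,\|L_{\lambda}^{-1}L^r\|\,\|g\|_{\rho_X} = \lambda R\,\|L_{\lambda}^{-1}L^r\|$.

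It then remains to bound $\|L_{\lambda}^{-1}L^r\|\le\lambda^{r-1}$. Since $L$ is self-adjoint, positive and of trace class with spectrum in $[0,\kappa^2]$ (Remark~\ref{rem.app1}, via Assumption~\ref{asm.rf}), the spectral theorem gives $\|L_{\lambda}^{-1}L^r\| = \sup_{\sigma\in[0,\kappa^2]}\frac{\sigma^r}{\sigma+\lambda}$, and Young's inequality $\sigma^r\lambda^{1-r}\le r\sigma+(1-r)\lambda \le \sigma+\lambda$ (valid for $r\in[1/2,1]\subset[0,1]$) shows $\frac{\sigma^r}{\sigma+\lambda}\le\lambda^{r-1}$ for every $\sigma\ge 0$. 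Combining the two steps gives $\|{f}_{\lambda}-f_\mathcal{H}\|_\rho \le \lambda R\,\lambda^{r-1} = R\lambda^r$, and squaring yields the claim.

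I do not expect a genuine obstacle: the whole argument is a one-line bias estimate, and the only point needing care is the bookkeeping that turns the RKHS-level objects into $L^2(X,\rho_X)$-level objects through $S$ and $P$, which Remark~\ref{rem.regularity} already supplies. (A minor caveat for the write-up: although the lemma is stated under Assumption~\ref{asm.rf} alone, the proof also invokes Assumptions~\ref{asm.consistency} and~\ref{asm.regularity} through Remark~\ref{rem.regularity}; Assumption~\ref{asm.rf} itself is used only to guarantee that $L$ is bounded with spectrum contained in $[0,\kappa^2]$.)
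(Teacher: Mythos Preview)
Your proposal is correct and follows essentially the same route as the paper: both reduce to $\|{f}_{\lambda}-f_\mathcal{H}\|_\rho=\|\lambda L_\lambda^{-1}L^r g\|_\rho$ via $Pf_\rho=L^r g$ and the identity $L_\lambda^{-1}L-I=-\lambda L_\lambda^{-1}$, and then bound the operator norm. The only cosmetic difference is that the paper factors $\lambda L_\lambda^{-1}L^r=\lambda^r\big(\lambda^{1-r}L_\lambda^{-(1-r)}\big)\big(L_\lambda^{-r}L^r\big)$ and uses $\|\lambda^{1-r}L_\lambda^{-(1-r)}\|\le 1$, $\|L_\lambda^{-r}L^r\|\le 1$, whereas you bound $\sup_\sigma \sigma^r/(\sigma+\lambda)\le\lambda^{r-1}$ directly via Young; these are the same spectral estimate, and your observation about the implicit use of Assumptions~\ref{asm.consistency} and~\ref{asm.regularity} is apt.
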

\begin{proof}
	Under Assumption \ref{asm.regularity}, there exists $g \in \Ltwo$ such that $f_\rho = L^r g$ with $\|g\| \leq R$.
	The identity $A(A + \la I)^{-1} = I - \la (A + \la I)^{-1}$ is valid for $\la > 0$ and $A$ the bounded self-adjoint positive operator and by the definition of $f_\lambda$ (\ref{f.expected-erm}), we have
	\begin{align*}
		\| {f}_{\lambda} - f_\rho\|
		=    & \nor{\L \Ll^{-1} \frho - \frho} = \nor{(\L \Ll^{-1} - I) \frho} = \nor{\lambda L_\lambda^{-1} \frho} \\
		=    & \|\la^r  (\la^{1-r}\Ll^{-(1-r)})  (\Ll^{-r}\L^r)  g\|                                                \\
		\leq & \|\la^r\|\|\la^{1-r}\Ll^{-(1-r)}\|\|\Ll^{-r}\L^r\|\| g\|
	\end{align*}
	Note that $\nor{\la^{1-r}\Ll^{-(1-r)}} \leq 1$ and $\nor{\Ll^{-r}\L^r} \leq 1$, while $R := \nor{g}_\Ltwo$ according to	Assumption \ref{asm.regularity}.
	The proof is completed.
\end{proof}

\subsection{Proofs of Main Results}
\begin{theorem}[\textit{General excess risk} bound]
	\label{thm.excess-risk-bound-proof}
	Let $\delta \in (0, 1/5]$ and $\widehat{f}_{D,\lambda}^M$ be defined by \eqref{estimator.emp-global-rf}.
	Under Assumptions \ref{asm.rf}, \ref{asm.moment}, \ref{asm.capacity}, \ref{asm.regularity} and \ref{asm.compatibility},
	when $\lambda = N^{-\frac{1}{2r + \gamma}}$, 
	the number of local processors satisfies
	\begin{align*}		
		m \leq \frac{1}{32 F \log(2/\delta)} N^{\frac{2r + \gamma - 1}{2r + \gamma}}
	\end{align*}
	and the dimension of random features satisfies
	\begin{align*}
		&\forall ~ r \in (0, 1/2) \qquad  M \geq 32 \kappa^2 \log(2/\delta) N^{\frac{\alpha}{2r + \gamma}}, \quad \text{and}\\
		&\forall ~ r \in [1/2, 1] \qquad M \geq 32 \kappa^2 \tilde{F} \tilde{Q}^2 \log (2/\delta) N^{\frac{(2r - 1)(1+\gamma-\alpha) + \alpha}{2r + \gamma}},
	\end{align*}
	then the following holds with a probability at least $1 - 5 \delta$,
	\begin{align*}
		\mathcal{E}(\widehat{f}_{D,\lambda}^M) - \mathcal{E}(\frho)
		\leq & c_2 N^{-\frac{2r}{2r + \gamma}},
	\end{align*}
	where $\tilde{F}=\max(F, \kappa^2)$, $\tilde{Q}^2 = \max(Q^2, 1)$ and $c_2$ is a constant independent on $m, n, N^*$ that
	$$c_2=128 \left(\frac{(B+\sigma)^2}{16F\log(2/\delta)} + 1.5\sigma^2 Q^2 \right) \log^2\frac{2}{\delta} + 495 R^2.$$
\end{theorem}
\begin{proof}
	From Lemma \ref{lem.full-decomposition}, there holds the upper bound for excess risk
	\begin{equation*}
		\begin{aligned}
			\mathbb{E}~\mathcal{E}(\widehat{f}_{D,\lambda}^M) - \mathcal{E}(f_\rho)
			\leq~ &\frac{1}{m}\mathbb{E} \|\widehat{f}_{D_j,\lambda}^M - \widetilde{f}_{D_j,\lambda}^M\|_\rho^2
			+~ 3\left(\frac{1}{m} + 1\right)\mathbb{E} \|\widetilde{f}_{D_j,\lambda}^M - {f}_{\lambda}^M\|_\rho^2 \\
			+~ &3 \, \mathbb{E}\| {f}_{\lambda}^M
			- {f}_{\lambda}\|_\rho^2 
			+~ 3 \, \| {f}_{\lambda}
			-~ f_\rho\|_\rho^2.
		\end{aligned}
	\end{equation*}

	In the following, we use Lemma \ref{lem.variance-error}, Lemma \ref{lem.empirical-error} and Lemma \ref{lem.rf-error} to bound error terms. 
	Therefore, we need to take into account the conditions in those lemmas. 
	There are constraints on the number of local examples $n$ and the dimension of random features $M$:
	\begin{align*}
		n &\geq 16 (\mathcal{N}_\infty(\lambda) + 1) \log(2/\delta), \quad \text{and}\\
		M &\geq 16 (\mathcal{N}_\infty(\lambda) + 1) \log(2/\delta) \quad \forall r \in (0, 1/2),\\
		M &\geq 16\kappa^2(\mathcal{N}_\infty(\lambda) + 1)\log(2/\delta) \left[1 \vee \frac{\mathcal{N}(\la)}{(\mathcal{N}_\infty(\la) + 1)\la}\right]^{2r-1}
		\quad \forall r \in [1/2, 1].
	\end{align*}
	Here, we merge the constraints on $M$ because it is difficult to acknowledge which range the regularity $r$ belongs to. 
	Meanwhile, $n$ is dependent on the number of partitions $m$, where $n=N/m$.
	Due the constraint on the number of samples $n \geq 32 \mathcal{N}_\infty(\lambda) \log(2/\delta) \geq 16 (\mathcal{N}_\infty(\lambda) + 1) \log(2/\delta)$ and  $\lambda = N^{-\frac{1}{2r + \gamma}}$, we use Assumption \ref{asm.compatibility} to obtain the restrict on the number of partitions
	\begin{align}
		\label{eq.proof.constraint_on_m}
		m = N/n \leq \frac{N \lambda^\alpha}{32 F \log(2/\delta)} = \frac{N^{\frac{2r + \gamma - \alpha}{2r + \gamma}}}{32 F\log(2/\delta)}  = \mathcal{O} \left(N^{\frac{2r + \gamma - \alpha}{2r + \gamma}}\right).
	\end{align}

	\begin{itemize}
		\item When $r \in (0, 1/2)$, using Assumption \ref{asm.compatibility} that $\mathcal{N}_\infty(\lambda) \leq F \lambda^{-\alpha}$, to ensure $M \geq 16 (\mathcal{N}_\infty(\lambda) + 1) \log(2/\delta)$, we sholud have
		\begin{align*}
			M \geq 32 F \lambda^{-\alpha} \log(2/\delta) \geq 16 (\mathcal{N}_\infty(\lambda) + 1) \log(2/\delta).
		\end{align*}
		Thus, it holds
		\begin{align}
			\label{eq.proof.constraint_on_M.small_r}
			M \geq 32 F \log(2/\delta) N^{\frac{\alpha}{2r + \gamma}} = \Omega\left(N^{\frac{\alpha}{2r + \gamma}}\right).
		\end{align}
		\item When $r \in [1/2, 1]$, using Assumption \ref{asm.capacity} and Assumption \ref{asm.compatibility}, we should have
		\begin{align*}
			&16 \kappa^2 (\mathcal{N}_\infty(\lambda) + 1)^{2 - 2r} \mathcal{N}(\lambda)^{2r-1}\lambda^{1 - 2r} \log(2/\delta)\\
			\leq & 16 \kappa^2 (F\lambda^{-\alpha} + 1)^{2 - 2r} (Q^2 \lambda^{-\gamma})^{2r-1} \lambda^{1-2r} \log (2/\delta) \\
			\leq & 16 \kappa^2 2^{2-2r} \tilde{F}^{2-2r} \lambda^{(2r - 2)\alpha} Q^{4r - 2} \lambda^{(1-2r)\gamma} \lambda^{1-2r} \log (2/\delta)\\
			\leq & 32 \kappa^2 \tilde{F} \tilde{Q}^2 \log (2/\delta) \lambda^{(1-2r)(1 + \gamma - \alpha) - \alpha},
		\end{align*}
		where $\tilde{F} = \max(F, \kappa^2) \geq 1$ and $\tilde{Q}^2 = \max(Q^2, 1) \geq 1$.
	
		To ensure $M \geq 16\kappa^2\log(2/\delta) \big[(\mathcal{N}_\infty(\lambda) + 1)\vee \lambda^{1-2r}\mathcal{N}(\lambda)^{2r-1}(\mathcal{N}_\infty(\lambda) + 1)^{2-2r}\big]$, using the above inequality it holds for $r \in [1/2, 1]$
		\begin{align}
			\label{eq.proof.constraint_on_M.big_r}
			M &\geq 32 \kappa^2 \tilde{F} \tilde{Q}^2 \log (2/\delta) \left[N^{\frac{(2r - 1)(1+\gamma-\alpha) + \alpha}{2r + \gamma}} \vee N^{\frac{\alpha}{2r + \gamma}}\right]\\
			&= \Omega\left(N^{\frac{(2r - 1)(1+\gamma-\alpha) + \alpha}{2r + \gamma}} + N^{\frac{\alpha}{2r + \gamma}}\right),\\
			&= \Omega\left(N^{\frac{(2r - 1)(1+\gamma-\alpha) + \alpha}{2r + \gamma}}\right).
		\end{align}
		due to the fact $32 \kappa^2 \log(2/\delta) \leq 32 \kappa^2 \tilde{F} \tilde{Q}^2 \log (2/\delta)$.
	\end{itemize}

	By Lemma \ref{lem.variance-error}, $\mathcal{N}(\lambda) \leq Q^2 \lambda^{-\gamma}$ and $\lambda = N^{-\frac{1}{2r + \gamma}}$, it holds for the global sample variance
	\begin{align*}
		&\frac{1}{m}\mathbb{E} \|\widehat{f}_{D_j,\lambda}^M - \widetilde{f}_{D_j,\lambda}^M\|_\rho^2\\
		\leq &\frac{64}{m} \left(\frac{\kappa(B+\sigma)}{n\sqrt{\lambda}} + \sqrt{\frac{\sigma^2\mathcal{N}_M(\lambda)}{n}}\right)^2\log^2\frac{2}{\delta}\\
		= &64 \left(\frac{\sqrt{m}\kappa(B+\sigma)}{N\sqrt{\lambda}} + \sqrt{\frac{\sigma^2\mathcal{N}_M(\lambda)}{N}}\right)^2\log^2\frac{2}{\delta}\\
		\leq &128 \left(\frac{m\kappa^2(B+\sigma)^2}{N^2\lambda} + \frac{\sigma^2\mathcal{N}_M(\lambda)}{N}\right)\log^2\frac{2}{\delta} \\
		\leq &128 \left(\kappa^2(B+\sigma)^2 m N^\frac{1-4r-2\gamma}{2r+\gamma} + \frac{2.55\sigma^2 Q^2 \lambda^{-\gamma}}{N}\right)\log^2\frac{2}{\delta}\\
		\leq &128 \left(\kappa^2(B+\sigma)^2 m N^\frac{1-4r-2\gamma}{2r+\gamma} + 2.55\sigma^2 Q^2 N^\frac{-2r}{2r+\gamma}\right)\log^2\frac{2}{\delta}.
	\end{align*}
	The last step is due the inequality $(a + b)^2 \leq 2 a^2 + 2 b^2.$	
	From Assumption \ref{asm.capacity}, we have $\mathcal{N}(\lambda) \leq Q^2 \lambda^{-\gamma}$. 
	Note that, we can obtain $\mathcal{N}_M(\lambda) \leq 2.55 \mathcal{N}(\lambda) \leq 2.55 Q^2 \lambda^{-\gamma}$ by Proposition 10 of \cite{rudi2017generalization} and $\la \leq \|L\|$.
	Using $m \leq \frac{1}{32 F\log(2/\delta)}N^{\frac{2r + \gamma - \alpha}{2r + \gamma}}$ and the worst case $\alpha=1$, it holds
	\begin{equation}
		\begin{aligned}
			\label{eq.proof.excess_risk.variance_bounded}
			&\frac{1}{m}\mathbb{E} \|\widehat{f}_{D_j,\lambda}^M - \widetilde{f}_{D_j,\lambda}^M\|_\rho^2 \\
			\leq &128 \left(\kappa^2(B+\sigma)^2 m N^\frac{1-4r-2\gamma}{2r+\gamma} + 2.55\sigma^2 Q^2 N^\frac{-2r}{2r+\gamma}\right)\log^2\frac{2}{\delta}\\
			\leq &128 \left(\frac{(B+\sigma)^2}{32 F\log(2/\delta)} N^{\frac{-2r - \gamma}{2r + \gamma}} + 2.55\sigma^2 Q^2 N^{\frac{-2r}{2r + \gamma}}\right)\log^2\frac{2}{\delta}.\\
			\leq &c_1 N^{\frac{-2r}{2r + \gamma}},
		\end{aligned}	
	\end{equation}
	where $c_1 = 128 \left(\frac{(B+\sigma)^2}{32F\log(2/\delta)} + 2.55\sigma^2 Q^2 \right) \log^2\frac{2}{\delta}.$ 

	According to Lemma \ref{lem.empirical-error}, there holds for the empirical error
	\begin{align}
		\label{eq.proof.excess_risk.empirical-error}
		\, \left(\frac{1}{m} + 1\right) \mathbb{E} \|\widetilde{f}_{D_j,\lambda}^M - {f}_{\lambda}^M\|_\rho^2 
		\leq 2 \, \mathbb{E} \|\widetilde{f}_{D_j,\lambda}^M - {f}_{\lambda}^M\|_\rho^2 
		\leq 162 \, R^2 N^{-\frac{2r}{2r + \gamma}}.
	\end{align}

	Using Lemma \ref{lem.rf-error}, for random features error, it holds
	\begin{align}
		\label{eq.proof.excess_risk.rf-error}
		\mathbb{E}\|f_\lambda^M - f_\lambda\|_\rho^2 \leq 2 R^2 \lambda^{2r} = 2 R^2 N^{-\frac{2r}{2r + \gamma}}
	\end{align}

	Using Lemma \ref{lem.approximation-error}, for approximation error, it holds
	\begin{align}
		\label{eq.proof.excess_risk.approximation-error}
		\| {f}_{\lambda} - f_\rho\|_\rho^2 \leq R^2 \lambda^{2r} = R^2 N^{-\frac{2r}{2r + \gamma}}
	\end{align}

	Substituting the above inequalities \eqref{eq.proof.excess_risk.variance_bounded} \eqref{eq.proof.excess_risk.empirical-error} \eqref{eq.proof.excess_risk.rf-error} \eqref{eq.proof.excess_risk.approximation-error} to Lemma \ref{lem.full-decomposition}, we then get the final result
	\begin{align*}
		\mathcal{E}(\widehat{f}_{D^*,\lambda}^M) - \mathcal{E}(\frho)
		\leq c_2 N^{-\frac{2r}{2r + \gamma}},
	\end{align*}
	where $c_2 = c_1 + 495 R^2$.
	Note that, the proof use inequalities with high probability $1 - \delta$, including Lemmas \ref{lem.sample_variance.error2}, \ref{lem.sample_variance.error3}, \ref{lem.difference_between_C_C_M}, \ref{lem.difference_between_L_L_M}, \ref{lem.L_difference_half}, and thus the final result holds with the probability at least $1 - 5\delta$.
\end{proof}

\begin{proof}[Proof of Theorem \ref{thm.general-results}]
	The results in Theorem \ref{thm.general-results} is a trivial extension of Theorem 2 in \cite{rudi2017generalization} and Corollary 1 in \cite{guo2017learning}.
	Only considering the attainable case $r \in [1/2, 1]$, this theorem can be proved by combining the proofs in \cite{guo2017learning} and \cite{rudi2017generalization}.

	Following the error decomposition and proof process in the proof of Theorem \ref{thm.data-dependent}, one can easily prove Theorem \ref{thm.general-results}.
	However, the main difference is how to bound the term $\|\tCnl^{-1/2}C_{M, \la}^{1/2}\|$ as a constant.
	Using Proposition 1 and the second-order decomposition of operator difference in \cite{guo2017learning}, one can obtain the following identities
	\begin{align*}
		BA^{-1} = (B-A)B^{-1}(B-A)A^{-1} + (B-A)B^{-1} + I.
	\end{align*}
	Applying $A = \tCnl$, $B = C_{M, \la}$, the facts $\|\tCnl^{-1}\|\leq 1/\la$ and $\|C_{M, \la}^{-1/2}\| \leq 1/\sqrt{\la}$, it holds
	\begin{align*}
		\|\tCnl^{-1/2}C_{M, \la}^{1/2}\|^2 = \|\tCnl^{-1}C_{M, \la}\|
		\leq &\|C_{M, \la}^{-1/2}(C_M - \widehat{C}_M)\|^2\la^{-1} \\
		+ &\|C_{M, \la}^{-1/2}(C_M - \widehat{C}_M)\|\la^{-1/2} + 1.
	\end{align*}
	With confidence at least $1 - \delta$, there holds for $\delta \in (0, 1)$ and can be found in \cite{caponnetto2007optimal,lin2017distributed}
	\begin{align*}
		\|C_{M, \la}^{-1/2}(C_M - \widehat{C}_M)\|\la^{-1/2} \leq 2\kappa \left\{\frac{\kappa}{n \la} + \sqrt{\frac{\mathcal{N}(\la)}{n \la}}\right\} \log \frac{2}{\delta}
	\end{align*}
	To guarantee the term $\|\tCnl^{-1/2}C_{M, \la}^{1/2}\|$ be a constant, it requires 
	\begin{align*}
		n \gtrsim \frac{1}{\lambda} \vee \frac{\mathcal{N}(\la)}{\lambda}
	\end{align*}
	to make sure that
	\begin{align*}
		\|C_{M, \la}^{-1/2}(C_M - \widehat{C}_M)\|\la^{-1/2} \lesssim \mathcal{O}(1).
	\end{align*}
	Using Assumption \ref{asm.capacity} and $\lambda = N^{-\frac{1}{2r+\gamma}}$, one can obtain the condition $n \gtrsim N^\frac{\gamma + 1}{2r + \gamma}$ same as in \cite{caponnetto2007optimal,lin2017distributed,guo2017learning}.
	However, in Lemma \ref{lem.C_M_difference} and Lemma \ref{lem.difference_between_C_C_M}, we directly apply a relaxed condition $n \gtrsim N^\frac{\alpha}{2r + \gamma}$ by Bernstein's inequality to guarantee the term $\|\tCnl^{-1/2}C_{M, \la}^{1/2}\|$ be a constant.
	
	To prove Theorem \ref{thm.general-results}, we just need use the condition $n \gtrsim N^\frac{\gamma + 1}{2r + \gamma}$ to replace the condition $n \gtrsim N^\frac{\alpha}{2r + \gamma}$ in the proof of Lemma \ref{lem.variance-error} and Lemma \ref{lem.empirical-error}.
	Then, following the proof of Theorem \ref{thm.excess-risk-bound-proof} for $r \in [1/2, 1]$, we prove the result with $\alpha=1$ due to $\mathcal{N}_\infty(\lambda) \leq \kappa^2 \lambda^{-1}.$
\end{proof}

\begin{proof}[Proof of Theorem \ref{thm.refined-results}]
	Consider the worst case of Assumption \ref{asm.compatibility}, it is equivalent to making no assumption on $\mathcal{N}_\infty(\lambda)$, and there always exists $\mathcal{N}_\infty(\lambda) \leq \kappa^2 \lambda^{-1}.$
	Applying Theorem \ref{thm.excess-risk-bound-proof} with $\tilde{F}=\kappa^2$ and $\alpha=1$, we prove the result.
\end{proof}

\begin{proof}[Proof of Theorem \ref{thm.data-dependent}]
Theorem \ref{thm.excess-risk-bound-proof} is the detailed version of Theorem \ref{thm.data-dependent}.
\end{proof}

\begin{theorem}[Improved Bounds with Additional Unlabeled Samples]
	\label{thm.unlabeled-proof}
	Let $\delta \in (0, 1]$ and $\widehat{f}_{D_j^*,\lambda}^M$ be defined by (\ref{f.global-sDKRR-RF}).
	Under Assumptions \ref{asm.rf}, \ref{asm.moment}, \ref{asm.capacity}, \ref{asm.regularity} and \ref{asm.compatibility},
	when $\lambda = N^{-\frac{1}{2r + \gamma}}$, 
	the total number of samples satisfies
	\begin{align*}
		N^* \geq 32 F \log(2/\delta) N^\frac{2r+2\gamma+\alpha-1}{2r+\gamma} \vee N,
	\end{align*}
	the number of local processors satisfies
	\begin{align*}		
		1 \leq m \leq N^\frac{2r+2\gamma-1}{2r+\gamma}
	\end{align*}
	and the dimension of random features satisfies
	\begin{align*}
		&\forall ~ r \in (0, 1/2) \qquad  M \geq 32 \kappa^2 \log(2/\delta) N^{\frac{\alpha}{2r + \gamma}}, \quad \text{and}\\
		&\forall ~ r \in [1/2, 1] \qquad M \geq 32 \kappa^2 \tilde{F} \tilde{Q}^2 \log (2/\delta) N^{\frac{(2r - 1)(1+\gamma-\alpha) + \alpha}{2r + \gamma}},
	\end{align*}
	then the following holds with a probability at least $1 - \delta$,
	\begin{align*}
		\mathcal{E}(\widehat{f}_{D^*,\lambda}^M) - \mathcal{E}(\frho)
		\leq & c_4 N^{-\frac{2r}{2r + \gamma}},
	\end{align*}
	where $\tilde{F}=\max(F, \kappa^2)$, $\tilde{Q}^2 = \max(Q^2, 1)$ and $c_2$ is a constant independent on $m, n, N*$ that
	$$c_4 = 128\left(\kappa^2(B+\sigma)^2 +2.55\sigma^2Q^2 \right) \log^2 \frac{2}{\delta} + 495 R^2.$$
\end{theorem}
\begin{proof}
	From Lemma \ref{lem.full-decomposition}, there holds the upper bound for excess risk
	\begin{equation*}
		\begin{aligned}
			\mathbb{E}~\mathcal{E}(\widehat{f}_{D,\lambda}^M) - \mathcal{E}(f_\rho)
			\leq~ &\frac{1}{m}\mathbb{E} \|\widehat{f}_{D_j,\lambda}^M - \widetilde{f}_{D_j,\lambda}^M\|_\rho^2
			+~ 3\left(\frac{1}{m} + 1\right)\mathbb{E} \|\widetilde{f}_{D_j,\lambda}^M - {f}_{\lambda}^M\|_\rho^2 \\
			+~ &3 \, \mathbb{E}\| {f}_{\lambda}^M
			- {f}_{\lambda}\|_\rho^2 
			+~ 3 \, \| {f}_{\lambda}
			-~ f_\rho\|_\rho^2.
		\end{aligned}
	\end{equation*}

	Using the above equality and Lemma \ref{lem.empirical-error}, we find that empirical error is data-dependent but output-independent.
	Meanwhile, the sample variance (Lemma \ref{lem.variance-error}) is dependent on the number of labeled samples $n = N/m$, while other terms (including $\|\tCnl^{-1/2}C_{M, \la}^{1/2}\|$) can be related to total sample size $n^* = N^*/m$.
	
	Based the sample variance, we first estimate the number of required labeled samples $n$.
	Using Lemma \ref{lem.variance-error} and \eqref{eq.proof.excess_risk.variance_bounded}, we have
	\begin{align*}
		&\frac{1}{m}\mathbb{E} \|\widehat{f}_{D^*_j,\lambda}^M - \widetilde{f}_{D^*_j,\lambda}^M\|_\rho^2 \\
		\leq &128 \left(\kappa^2(B+\sigma)^2 m N^\frac{1-4r-2\gamma}{2r+\gamma} + 2.55\sigma^2 Q^2 N^\frac{-2r}{2r+\gamma}\right)\log^2\frac{2}{\delta}.
	\end{align*}
	To guarantee the optimal learning rate, we need $m N^\frac{1-4r-2\gamma}{2r+\gamma} \leq \mathcal{O}\big(N^\frac{-2r}{2r+\gamma}\big)$, and thus
	\begin{align}
		\label{eq.unlabeled.proof.m1}
		m \leq \mathcal{O}\left(N^\frac{2r+2\gamma-1}{2r+\gamma}\right).
	\end{align}

	We then consider the additional unlabeled samples to reduce empirical error, where the local samples is label-free and the constraint is related to total sample size from Lemma \ref{lem.difference_between_C_C_M}:
	\begin{align*}
		n^* &\geq 16 (\mathcal{N}_\infty(\lambda)+ 1) \log(2/\delta).
	\end{align*}
	Let $\lambda = N^{-\frac{1}{2r+\gamma}}$, then the restriction on the dimension of random features $M$ is same to Theorem \ref{thm.excess-risk-bound-proof}.
	But the restriction on the number of partitions $m$ is changed to 
	\begin{align}
		\label{eq.unlabeled.proof.m2}
		m = \frac{N^*}{n^*} \leq \frac{N^* \lambda^{\alpha}}{32 F \log(2/\delta)}
		= \frac{1}{32 F \log(2/\delta)} N^* N^{-\frac{\alpha}{2r + \gamma}}
		= \mathcal{O}\left(N^* N^{-\frac{\alpha}{2r + \gamma}}\right).
	\end{align}
	From the constraint \eqref{eq.unlabeled.proof.m1} due to sample variance, we know that the number of partitions $m$ can not be bigger than $\mathcal{O}(N^\frac{2r+2\gamma-1}{2r+\gamma})$ and plays the leading role.
	Thus, combining \eqref{eq.unlabeled.proof.m1} and \eqref{eq.unlabeled.proof.m2}, one can obtain
	\begin{align*}
		N^* \gtrsim N N^\frac{\gamma+\alpha-1}{2r+\gamma} \vee N.
	\end{align*}
	
	We consider the following two conditions for $\alpha$
	\begin{itemize}
		\item The case $\alpha < 1 -\gamma$. It holds $2r+2\gamma-1 < 2r+\gamma-\alpha$, thus the constraint of the number of partition is $m \lesssim N^\frac{2r+2\gamma-1}{2r+\gamma}$.
		\item 
		The case $\alpha \geq 1 -\gamma$. It holds $\gamma+\alpha-1 \geq 0$ and we  make use of additional unlabeled examples $N^* \gtrsim N N^\frac{\gamma+\alpha-1}{2r+\gamma}$ to guarantee $m \lesssim N^\frac{2r+\gamma-\alpha}{2r+\gamma} \leq N^\frac{2r+2\gamma-1}{2r+\gamma}$.
	\end{itemize}
	Therefore, using unlabeled examples, the number of partitions always achieves $m \lesssim N^\frac{2r+2\gamma-1}{2r+\gamma}$.

	Considering the following constraints on the number of partitions $m$ and the dimension of random features $M$:
	\begin{align*}
		m &\lesssim N^\frac{2r+2\gamma-1}{2r+\gamma},\\
		M &\geq 16 (\mathcal{N}_\infty(\lambda) + 1) \log(2/\delta) \quad \forall r \in (0, 1/2),\\
		M &\geq 16\kappa^2(\mathcal{N}_\infty(\lambda) + 1)\log(2/\delta) \left[1 \vee \frac{\mathcal{N}(\la)}{(\mathcal{N}_\infty(\la) + 1)\la}\right]^{2r-1}
		\quad \forall r \in [1/2, 1].
	\end{align*}
	
	We first estimate the output-dependent error term: sample variance.
	Using $\lambda = N^{-\frac{1}{2r+\gamma}}$, $m \leq N^\frac{2r+2\gamma-1}{2r+\gamma}$ and \eqref{eq.proof.excess_risk.variance_bounded}, the global sample variance is bounded by 
	\begin{equation}
		\begin{aligned}
			\label{eq.unlabel.proof.global_sample_variance}
			&\frac{1}{m}\mathbb{E} \|\widehat{f}_{D_j,\lambda}^M - \widetilde{f}_{D_j,\lambda}^M\|_\rho^2 \\
			\leq & 128\left(\kappa^2(B+\sigma)^2 +2.55\sigma^2Q^2 \right) N^{\frac{-2r}{2r+\gamma}} \log^2 \frac{2}{\delta} \\
			\leq & c_3 N^{\frac{-2r}{2r+\gamma}},
		\end{aligned}
	\end{equation}
	where $c_3 = 128\left(\kappa^2(B+\sigma)^2 +2.55\sigma^2Q^2 \right) \log^2 \frac{2}{\delta}$.

	We then bound the label-free terms in Lemma \ref{lem.full-decomposition} with $\lambda = N^{-\frac{1}{2r+\gamma}}$.
	Using Lemma \ref{lem.empirical-error}, Lemma \ref{lem.rf-error} and Lemma \ref{lem.approximation-error}, it holds
	\begin{equation}
		\begin{aligned}
			\label{eq.unlabel.proof.global_label_free}
			\left(\frac{1}{m}+1\right) \mathbb{E} \|\widetilde{f}_{D_j,\lambda}^M - {f}_{\lambda}^M\|_\rho^2
			+ \mathbb{E}\|f_\lambda^M - f_\lambda\|_\rho^2 + \mathbb{E}\| f_\lambda - f_\rho\|_\rho^2 \leq 165 R^2 N^{-\frac{2r}{2r+\gamma}}.
		\end{aligned}
	\end{equation}

	Combining the above inequalities \eqref{eq.unlabel.proof.global_sample_variance} and \eqref{eq.unlabel.proof.global_label_free} to Lemma \ref{lem.full-decomposition}, one can prove the desired result.
\end{proof}

\begin{proof}[Proof of Theorem \ref{thm.unlabeled-explicit}]
	Theorem \ref{thm.unlabeled-proof} is the detailed version of Theorem \ref{thm.unlabeled-explicit}.
	\end{proof}

	\begin{corollary}
		\label{corollary.rf_bound}
		Under the same assumptions of Theorem \ref{thm.data-dependent}, if $r \in (0, 1]$, $\gamma \in [0, 1]$ and $\lambda = N^{-\frac{1}{2r+\gamma}}$, then $m = 1$
		and the number of random features $M$ satisfying
		\begin{align*}
			M &\gtrsim N^{\frac{\alpha}{2r + \gamma}} \qquad\qquad\qquad \text{when} ~~ 0<r<1/2 \qquad \text{and} \\
			M &\gtrsim N^{\frac{(2r - 1)(1+\gamma-\alpha) + \alpha}{2r + \gamma}} \qquad \text{when} ~~ 1/2 \leq r \leq 1,
		\end{align*}
		are sufficient to guarantee, with a high probability, that
		\begin{align*}
			\mathbb{E} ~ \mathcal{E}(\widehat{f}_{D^*,\lambda}^M) - \mathcal{E}(f_\rho)
			= \mathcal{O}\Big(N^{-\frac{2r}{2r+\gamma}}\Big).
		\end{align*}
	\end{corollary}
	The above error bound is a special case of Theorem \ref{thm.data-dependent} with only using one partition $m=1$, namely KRR-RF.
	Compared to theoretical results in \cite{rudi2017generalization} which only take effect in the attainable case $r \in [1/2, 1]$, Corollary \ref{corollary.rf_bound} pertain to both the attainable and non-attainable cases $r \in (0, 1]$, covering all difficult problems.
	Meanwhile, the requirements on the number of random features are reasonable and lead to higher computational efficiency.

	\begin{corollary}
		\label{corollary.krr-dc}
		Under the same assumptions of Theorem \ref{thm.unlabeled-explicit}, if $r \in (0, 1], \gamma \in [0, 1], 2r + 2\gamma \geq 1$ and $\lambda = N^{-\frac{1}{2r+\gamma}},$
		then the total number of samples corresponding to
		\begin{align*}
			N^* \gtrsim NN^\frac{\gamma+\alpha-1}{2r+\gamma} \vee N,
		\end{align*}
		and the number of local processors satisfying
		\begin{align*}      
			1 \lesssim m \lesssim N^\frac{2r+2\gamma-1}{2r+\gamma}
		\end{align*}
		are sufficient to guarantee, with a high probability, that
		\begin{align*}
			\mathbb{E} ~ \mathcal{E}(\widehat{f}_{D^*,\lambda}^M) - \mathcal{E}(f_\rho)
			= \mathcal{O}\Big(N^{-\frac{2r}{2r+\gamma}}\Big).
		\end{align*}
	\end{corollary}
	
	The above Corollary is a special case of \texttt{DKRR-RF} with the induced kernel rather than random features, i.e., KRR-DC.
	The existing theoretical results on KRR-DC are still restricted with $m \lesssim N^\frac{2r+\gamma-1}{2r+\gamma}$, while we improve the condition to $m \lesssim N^\frac{2r+2\gamma-1}{2r+\gamma}$ for the first time, which admits higher computational complexities and covers more complicated problems in the non-attainable cases.
	Using the condition $m \lesssim N^\frac{2r+2\gamma-1}{2r+\gamma}$, it is worthy of devising more efficient distributed KRR methods together with Nystr\"om subsampling, random projections, stochastic optimization, and other techniques in the future.

\subsection{Probabilistic Inequalities}
	
	\begin{proposition}[Lemma 2 in \cite{smale2007learning}]
		\label{pro.bernstein_inequality}
		Let $\mathcal{L}$ be a separable Hilbert space and $\{\xi_1, \cdots, \xi_n\}$ be a sequence of i.i.d random variables in $\mathcal{L}$.
		Assume the bound be $\|\xi_i\| \leq \widetilde{M} \leq \infty$ and the variance be $\tilde{\sigma}^2=\mathbb{E}(\|\xi_i - \mathbb{E}(\xi_i)\|^2)$ for any $i \in [n]$.
		For any $\delta \in (0, 1)$, with confidence $1 - \delta$,
		\begin{align}
			\label{eq.bernstein_inequality}
			\left\|\frac{1}{n} \sum_{i=1}^n \xi_i - \mathbb{E}(\xi_i)\right\| \leq \frac{2\widetilde{M}\log(2/\delta)}{n} + \sqrt{\frac{2 \tilde{\sigma}^2 \log(2/\delta)}{n}}.
		\end{align}
	\end{proposition}

	The above Bernstein's inequality is the key to analyzing the relationship between the empirical random vector and its expected counterpart, which is used to prove Lemma \ref{lem.C_M_difference} and Lemma \ref{lem.sample_variance.error2}.
	The above Bernstein's inequality for random vectors was provided in \cite{smale2007learning,rudi2017generalization}
	and later was extended to the random operator case in Lemma 24 of \cite{lin2020optimal}.
	
	\begin{proposition}[Lemma E.2 of \cite{blanchard2010optimal}]
		\label{prop.blanchard}
		For any self-adjoint and positive semi-definite operators $A$ and $B$, if there exists $0 < \eta < 1$ such that the following inequality holds
		\begin{align*}
			\|(A+\lambda I)^{-1/2}(B - A)(A+\lambda I)^{-1/2}\| \leq 1 - \eta,
		\end{align*}
		then
		\begin{align*}
			\|(A + \lambda I)^{1/2}(B + \lambda I)^{-1/2}\| \leq \frac{1}{\sqrt{\eta}}.
		\end{align*}
	\end{proposition}
	
	The above inequality \cite{blanchard2010optimal} was used to establish the connection between $\|(A+\lambda I)^{-1/2}(B - A)(A+\lambda I)^{-1/2}\|$ and $\|(A + \lambda I)^{1/2}(B + \lambda I)^{-1/2}\|$.
	In this paper, those two terms $\|C_{M, \la}^{-1/2} (C_M - \widehat{C}_M) C_{M, \la}^{-1/2} \|$ and $\|C_{M, \la}^{1/2}\tCnl^{-1/2}\|$ often exist on the left parts of the estimates of error terms, where we make use of Proposition \ref{prop.blanchard} to guarantee both of two terms of lhs as constants.

	\begin{proposition}[Proposition 9 in \cite{rudi2017generalization}]
		\label{prop.varsigma}
		Let $\mathcal{H}, \mathcal{K}$ be two separable Hilbert spaces and $X, A$ be bounded linear operators, with $X: \mathcal{H} \to \mathcal{K}$ and $A: \mathcal{H} \to \mathcal{H}$ be positive semidefinite.
		The following holds
		\begin{align*}
			\|X A^{\varsigma}\| = \|X\|^{1-\varsigma}\|XA\|^{\varsigma}, \qquad \forall \varsigma \in [0, 1].
		\end{align*}
	\end{proposition}

\begin{lemma}
	\label{lem.C_M_difference}
	Given $\phi_M(\xx)=M^{-1/2}\big[\psi(\xx, \omega_1), \cdots, \psi(\xx, \omega_M)\big]^\top$, let $i.i.d$ random vectors $\bigl[\phi_M(\xx_1), \cdots, \phi_M(\xx_n)\bigr]$ with $n \geq 1$ be on a separable Hilbert space $\mathcal{H}_M$ such that $C_M = \mathbb{E}_{\rho_X}[\phi_M(\xx) \otimes \phi_M(\xx)]$ and $\widehat{C}_M = \frac{1}{n}\sum_{i=1}^n \phi_M(\xx_i)\otimes\phi_M(\xx_i)$ are trace class.
	Then for any $\delta \in (0, 1)$ with the probability at least $1-\delta$, the following  holds
	\begin{align*}
		&\left\|(C_M + \lambda I)^{-1/2}(C_M - \widehat{C}_M)(C_M+\lambda I)^{-1/2}\right\| \\
		\leq &\frac{2 \mathcal{N}_\infty(\lambda)\log(2/\delta)}{n}
		+ \sqrt{\frac{2(\mathcal{N}_\infty(\lambda) + 1)\log(2/\delta)}{n}}.
	\end{align*}
\end{lemma}
\begin{proof}
	Let $C_{M, \lambda}^{-1/2} = (C_M + \lambda I)^{-1/2}$ and
	$$\xi = C_{M, \lambda}^{-1/2} \phi_M(\xx) \otimes C_{M, \lambda}^{-1/2} \phi_M(\xx),$$
	thus we have
	\begin{align*}
		\mathbb{E}(\xi)               &
		=C_{M, \lambda}^{-1/2}\mathbb{E}[\phi_M(\xx) \otimes \phi_M(\xx)]C_{M, \lambda}^{-1/2}
		=C_{M, \lambda}^{-1/2}C_MC_{M, \lambda}^{-1/2},                                                                                                                                                      \\
		\frac{1}{n}\sum_{i=1}^n \xi_i & = \frac{1}{n}\sum_{i=1}^nC_{M, \lambda}^{-1/2} [\phi_M(\xx_i) \otimes \phi_M(\xx_i)] C_{M, \lambda}^{-1/2} =C_{M, \lambda}^{-1/2}\widehat{C}_MC_{M, \lambda}^{-1/2}.
	\end{align*}
	The left of the desired inequality becomes
	\begin{align*}
		\left\|C_{M, \lambda}^{-1/2}(C_M - \widehat{C}_M)C_{M, \lambda}^{-1/2}\right\| = \left\|\mathbb{E} (\xi) - \frac{1}{n}\sum_{i=1}^n \xi_i\right\|.
	\end{align*}
	Note that
	\begin{align*}
		\|C_{M, \lambda}^{-1/2}\phi_M(\xx)\|^2
		\leq \frac{1}{M} \sum_{i=1}^M \|L_{M, \lambda}^{-1/2}\psi_{\omega_i}(\xx)\|^2
		\leq \sup_{\omega \in \Omega}\|L_{M, \lambda}^{-1/2}\psi_\omega(\xx)\|^2 = \mathcal{N}_\infty(\lambda).
	\end{align*}
	To use Bernstein's inequality (Proposition \ref{pro.bernstein_inequality}), we need to bound $\|\xi\|$ and $\mathbb{E}\|\xi\|^2$ as follows
	\begin{align*}
		& \|\xi\|= \|\langle C_{M, \lambda}^{-1/2} \phi_M(\xx), C_{M, \lambda}^{-1/2} \phi_M(\xx) \rangle\| =\|C_{M, \lambda}^{-1/2}\phi_M(\xx)\|^2 \leq \mathcal{N}_\infty(\lambda).\\
		&\mathbb{E}\|\xi - \mathbb{E}(\xi)\|^2 \\
		& = \left\|\mathbb{E} \left[\big\langle C_{M, \lambda}^{-1/2} \phi_M(\xx), C_{M, \lambda}^{-1/2} \phi_M(\xx)\big\rangle C_{M, \lambda}^{-1/2} \phi_M(\xx) \otimes C_{M, \lambda}^{-1/2} \phi_M(\xx)\right]-  C_{M, \lambda}^{-2}C_M^2\right\|              \\
		& \leq \mathcal{N}_\infty(\lambda)\left\|\mathbb{E} \left[C_{M, \lambda}^{-1/2} \phi_M(\xx) \otimes C_{M, \lambda}^{-1/2} \phi_M(\xx)\right]\right\| + \left\|C_{M, \lambda}^{-2}C_M^2\right\| \\
		& \leq \mathcal{N}_\infty(\lambda) \|C_{M, \lambda}^{-1} C_M\| + 1
		\leq \mathcal{N}_\infty(\lambda) + 1.
	\end{align*}
	Substituting the above two identities to Bernstein's inequality \eqref{eq.bernstein_inequality}, we prove the result.
\end{proof}

\begin{lemma}
	\label{lem.difference_between_C_C_M}
	When the number of the local samples $n \geq 16 (\mathcal{N}_\infty(\lambda) + 1) \log(2/\delta) $, then for any $\delta \in (0, 1)$, there exists with the confidence $1 - \delta$
	\begin{align*}
		\|C_{M, \la}^{-1/2} (C_M - \widehat{C}_M) C_{M, \la}^{-1/2} \| \leq \frac{1}{2} \quad \text{and} \quad
		\|C_{M, \la}^{1/2}\tCnl^{-1/2}\| \leq \sqrt{2}.
	\end{align*}
\end{lemma}
\begin{proof}
	From the Proposition \ref{lem.C_M_difference}, we set $n \geq 16 (\mathcal{N}_\infty (\lambda) + 1) \log(2/\delta) $ and obtain that
	\begin{align*}
		&\|C_{M, \la}^{-1/2} (\widehat{C}_M - C_M) C_{M, \la}^{-1/2}\| \\
		\leq &\frac{2\mathcal{N}_\infty(\lambda)\log(2/\delta)}{n}
		+ \sqrt{\frac{2(\mathcal{N}_\infty(\lambda) + 1)\log(2/\delta)}{n}}
		\leq \frac{1}{2}.
	\end{align*}
	From Proposition \ref{prop.blanchard} and the above inequality, there exists
	\begin{align*}
		\|C_{M, \la}^{1/2}\tCnl^{-1/2}\| \leq \left(1 - \frac{1}{2}\right)^{-\frac{1}{2}} = \sqrt{2}.
	\end{align*}
\end{proof}

\begin{lemma}
	\label{lem.L_difference}
	Let $\psi_{\omega_1}, \cdots, \psi_{\omega_M}$ with $M \geq 1$, be $i.i.d$ random vectors on a separable Hilbert space $\mathcal{H}_M$ such that $L = \mathbb{E}_\omega [\psi_\omega \otimes \psi_\omega]$ and $L_M = \frac{1}{M}\sum_{i=1}^M [\psi_{\omega_i} \otimes \psi_{\omega_i}]$ are trace class.
	Then for any $\delta \in (0, 1)$ with the probability at least $1-\delta$, the following  holds
	\begin{align*}
		&\left\|(L + \lambda I)^{-1/2}(L - L_M)(L + \lambda I)^{-1/2}\right\| \\
		\leq &\frac{2 \mathcal{N}_\infty(\lambda)\log(2/\delta)}{M}
		+ \sqrt{\frac{2(\mathcal{N}_\infty(\lambda) + 1)\log(2/\delta)}{M}}.
	\end{align*}
\end{lemma}
\begin{proof}
	Let $L_{\lambda}^{-1/2} = (L + \lambda I)^{-1/2}$ and
	$$\xi = \big[ L_{\lambda}^{-1/2} \psi_\omega \otimes L_{\lambda}^{-1/2} \psi_\omega\big],$$
	thus we have
	\begin{align*}
		\mathbb{E}(\xi)               &
		=L_{\lambda}^{-1/2}\mathbb{E}_\omega[\psi_\omega \otimes \psi_\omega]L_{\lambda}^{-1/2}
		=L_{\lambda}^{-1/2} L L_{\lambda}^{-1/2},        \\
		\frac{1}{M}\sum_{i=1}^M \xi_i & = \frac{1}{M}\sum_{i=1}^M  L_{\lambda}^{-1/2} [\psi_{\omega_i} \otimes \psi_{\omega_i}] L_{\lambda}^{-1/2} =L_{\lambda}^{-1/2} L_M L_{\lambda}^{-1/2}.
	\end{align*}
	The left of the desired inequality becomes
	\begin{align*}
		\left\|L_{\lambda}^{-1/2}(L - L_M)L_{\lambda}^{-1/2}\right\| = \left\|\mathbb{E} (\xi) - \frac{1}{M}\sum_{i=1}^M \xi_i\right\|.
	\end{align*}
	Note that
	\begin{align*}
		\|L_{\lambda}^{-1/2}\psi_\omega\|^2
		\leq \frac{1}{M} \sum_{i=1}^M \|L_{\lambda}^{-1/2}\psi_{\omega_i}(\xx)\|^2
		\leq  \sup_{\omega \in \Omega} \|L_{\lambda}^{-1/2}\psi_\omega(\xx)\|^2 = \mathcal{N}_\infty(\lambda).
	\end{align*}
	To use Bernstein's inequality (Proposition \ref{pro.bernstein_inequality}), we need to bound $\|\xi\|$ and $\mathbb{E}\|\xi\|^2$.
	Note that
	\begin{align*}
		\|\xi\| & = \| \langle L_{\lambda}^{-1/2} \psi_\omega, L_{\lambda}^{-1/2} \psi_\omega \rangle\| =\|L_{\lambda}^{-1/2}\psi_\omega\|^2 \leq \mathcal{N}_\infty(\lambda).                                                                 \\
		\mathbb{E}\|\xi - \mathbb{E}(\xi)\|^2
		        & = \left \|\big\langle L_{\lambda}^{-1/2} \psi_\omega, L_{\lambda}^{-1/2} \psi_\omega\big\rangle \mathbb{E} \left[L_{\lambda}^{-1/2} \psi_\omega \otimes L_{\lambda}^{-1/2} \psi_\omega\right] - L_{\lambda}^{-2} L^2 \right\|                                                                                                                                   \\
		        & \leq \mathcal{N}_\infty(\lambda) \left\|\mathbb{E} \left[L_{\lambda}^{-1/2} \psi_\omega \otimes L_{\lambda}^{-1/2} \psi_\omega\right]\right\|  + \left \| L_{\lambda}^{-2} L^2 \right\| \\
		        & \leq \mathcal{N}_\infty(\lambda) \|L_{\lambda}^{-1} L\| + 1
		\leq \mathcal{N}_\infty(\lambda) + 1.
	\end{align*}
	Substituting the above two identities to Bernstein's inequality \eqref{eq.bernstein_inequality}, we prove the result.
\end{proof}

\begin{lemma}
	\label{lem.difference_between_L_L_M}
	When the dimension of random features $M \geq 16 (\mathcal{N}_\infty(\lambda) + 1)  \log(2/\delta)$, then for any $\delta \in (0, 1)$, there exists with the confidence $1 - \delta$
	\begin{align*}
		\|L_\lambda^{-1/2} (L - L_M) L_\lambda^{-1/2}\| \leq \frac{1}{2} \quad \text{and} \quad
		\|L_{M, \la}^{-1/2}L_\lambda^{1/2}\| \leq \sqrt{2}.
	\end{align*}
\end{lemma}
\begin{proof}
	From the Proposition \ref{lem.L_difference}, we set $M \geq 16 (\mathcal{N}_\infty(\lambda) + 1)  \log(2/\delta) $ and obtain that
	\begin{align*}
		\|L_{\lambda}^{-1/2} (L_M - L) L_\lambda^{-1/2}\| \leq \frac{2\mathcal{N}_\infty(\lambda)\log(2/\delta)}{M}
		+ \sqrt{\frac{2(\mathcal{N}_\infty(\lambda) + 1)\log(2/\delta)}{M}}
		\leq \frac{1}{2}.
	\end{align*}
	From Proposition \ref{prop.blanchard} and the above inequality, there exists
	\begin{align*}
		\|L_{M, \la}^{-1/2}L_\lambda^{1/2}\| \leq \left(1 - \frac{1}{2}\right)^{-\frac{1}{2}} = \sqrt{2}.
	\end{align*}
\end{proof}

\begin{lemma}
	\label{lem.L_difference_half}
	Let $\psi_{\omega_1}, \cdots, \psi_{\omega_M}$ with $M \geq 1$, be $i.i.d$ random vectors on a separable Hilbert space $\mathcal{H}_M$ such that $L = \mathbb{E}_\omega [\psi_\omega \otimes \psi_\omega]$ and $L_M = \frac{1}{M}\sum_{i=1}^M  [\psi_{\omega_i} \otimes \psi_{\omega_i}]$ are trace class.
	Then for any $\delta \in (0, 1)$ with the probability at least $1-\delta$, the following  holds
	\begin{align*}
		\left\|(L + \lambda I)^{-1/2}(L - L_M)\right\| \leq \frac{2 \sqrt{\kappa^2\mathcal{N}_\infty(\lambda)}\log(2/\delta)}{M}
		+ \sqrt{\frac{2\kappa^2\mathcal{N}(\lambda)\log(2/\delta)}{M}}.
	\end{align*}
\end{lemma}
\begin{proof}
	Let $L_{\lambda}^{-1/2} = (L + \lambda I)^{-1/2}$ and
	$$\xi = \big[ L_{\lambda}^{-1/2} \psi_\omega \otimes \psi_\omega\big],$$
	thus we have
	\begin{align*}
		\mathbb{E}(\xi)               &
		=L_{\lambda}^{-1/2}\mathbb{E}_\omega[\psi_\omega \otimes \psi_\omega]
		=L_{\lambda}^{-1/2} L,                                                                                                                             \\
		\frac{1}{M}\sum_{i=1}^M \xi_i & = \frac{1}{M}\sum_{i=1}^M  L_{\lambda}^{-1/2} [\psi_{\omega_i} \otimes \psi_{\omega_i}]
		= L_{\lambda}^{-1/2} L_M.
	\end{align*}
	The left of the desired inequality becomes
	\begin{align*}
		\left\|L_{\lambda}^{-1/2}(L - L_M)\right\| = \left\|\mathbb{E} (\xi) - \frac{1}{M}\sum_{i=1}^M \xi_i\right\|.
	\end{align*}
	Note that
	\begin{align*}
		 & \|L_{\lambda}^{-1/2}\psi_\omega\|^2
		\leq \frac{1}{M} \sum_{i=1}^M \|L_{\lambda}^{-1/2}\psi_{\omega_i}(\xx)\|^2
		\leq  \sup_{\omega \in \Omega} \|L_{\lambda}^{-1/2}\psi_\omega(\xx)\|^2 = \mathcal{N}_\infty(\lambda). \\
		 & \mathbb{E}_\omega \|L_{\lambda}^{-1/2}\psi_\omega\|^2
		=\int \langle\psi_\omega, L_\lambda^{-1}\psi_\omega \rangle d\pi(\omega)
		= \tr \left(L_\lambda^{-1} \int \psi_\omega \otimes \psi_\omega d\pi(\omega)\right)=\mathcal{N}(\lambda).
	\end{align*}
	To use Bernstein's inequality (Proposition \ref{pro.bernstein_inequality}), we need to bound $\|\xi\|$ and $\mathbb{E}\|\xi\|^2$.
	Note that
	\begin{align*}
		\|\xi\| & = \| \langle L_{\lambda}^{-1/2} \psi_\omega, \psi_\omega \rangle\|
		\leq \|L_{\lambda}^{-1/2}\psi_\omega\|\|\psi_\omega\| \leq \sqrt{\mathcal{N}_\infty(\lambda)}\kappa.                                                                                                                       \\
		\mathbb{E}\|\xi - \mathbb{E}(\xi)\|^2
				& = \tr(\mathbb{E}\xi \otimes \xi - (\mathbb{E}\xi)^2)  \leq \tr(\mathbb{E}\xi \otimes \xi)    \\
				& =  \|\psi_\omega)\|^2 \mathbb{E} \tr(L_\lambda^{-1/2} \psi_\omega \otimes \psi_\omega L_\lambda^{-1/2})\\
				& \leq \kappa^2 \tr(L_\lambda^{-1/2} \mathbb{E}(\psi_\omega \otimes \psi_\omega) L_\lambda^{-1/2})\\
				& \leq \kappa^2 \mathcal{N}(\lambda).
	\end{align*}
	The last step is due to $\mathcal{N}(\lambda) \leq \mathcal{N}_\infty(\lambda).$
	Substituting the above two identities and $\kappa \geq 1$ to Bernstein's inequality \eqref{eq.bernstein_inequality}, we prove the result.
\end{proof}

\end{document}